\documentclass[]{fairmeta}
\usepackage{amssymb}
\usepackage{amsmath}
\usepackage{amsthm}
\usepackage{caption}
\usepackage{thmtools,thm-restate}
\newtheorem{theorem}{Theorem}
\newtheorem{corollary}[theorem]{Corollary}
\newtheorem{proposition}[theorem]{Proposition}
\newcommand{\E}{\mathbb{E}}
\newcommand{\Prob}{\mathbb{P}}

\usepackage{xspace}
\usepackage{url}
\usepackage{xcolor}
\usepackage{colortbl}       
\usepackage{fontawesome}
\usepackage{algorithm}
\usepackage{algpseudocode}
\makeatletter
\newcommand{\algparbox}[1]{\parbox[t]{\dimexpr\linewidth-\ALG@thistlm-1pt\relax}{#1\strut}}
\makeatother
\usepackage{listings}
\lstdefinestyle{pseudopy}{
  language=Python, basicstyle=\ttfamily\footnotesize,
  commentstyle=\color{gray}\itshape, keywordstyle=\bfseries,
  columns=fullflexible, keepspaces=true, breaklines=true,
  frame=single, framerule=0.4pt, xleftmargin=6pt, xrightmargin=6pt,
  showstringspaces=false, escapechar=|}
\definecolor{lightestblue}{HTML}{F5FAFF}
\definecolor{titleblue}{HTML}{E4F1FF} 
\definecolor{frameblue}{HTML}{C0DFFF} 
\definecolor{textblack}{HTML}{000000} 
\tcbset{
  mybox/.style={
    colback=lightestblue,     
    colframe=frameblue,     
    coltitle=textblack,         
    fonttitle=\bfseries,      
    colbacktitle=titleblue,   
    enhanced,
    attach boxed title to top left={yshift=-1mm,xshift=2mm}, 
    boxed title style={
      colframe=frameblue,
      colback=titleblue,
      boxrule=1.2pt,
    },
    boxrule=1.2pt,            
    arc=2mm,                  
    top=4pt, bottom=4pt, left=6pt, right=6pt,
  }
}

\definecolor{harnessblue}{RGB}{235,243,252}
\definecolor{harnessframe}{RGB}{125,165,205}
\usepackage{tcolorbox}
\newtcolorbox{takeaways}{colback=harnessblue, colframe=harnessframe, boxrule=0.75pt, arc=1.5pt, left=7pt, right=7pt, top=5pt, bottom=5pt}

\title{Sharpening Tax in Post-Training}
\usepackage{enumitem}
\definecolor{basegreen}{RGB}{22,120,60}
\definecolor{basegreenbg}{RGB}{240,249,242}
\definecolor{rlred}{RGB}{178,34,34}
\definecolor{rlredbg}{RGB}{253,243,242}
\definecolor{goalgray}{RGB}{92,92,92}
\definecolor{loopamber}{RGB}{176,108,0}
\definecolor{loopamberbg}{RGB}{255,248,230}

\newlist{trajsteps}{enumerate}{1}
\setlist[trajsteps]{label=\scriptsize\arabic*.,leftmargin=1.4em,itemsep=1pt,topsep=2pt,parsep=0pt}

\newtcolorbox{trajpanel}[5][trajpanelgrp]{%
  enhanced, colback=#3, colframe=#2, boxrule=0.8pt, arc=1.5mm,
  left=2mm, right=2mm, top=1.5mm, bottom=1.5mm,
  title={\small\bfseries\strut #4},
  colbacktitle=#2, coltitle=white, toptitle=1mm, bottomtitle=1mm,
  equal height group=#1,
  before upper={\notblank{#5}{\noindent\hfill
    {\footnotesize\color{goalgray}#5}\par\vspace{1pt}}{}}}

\newtcolorbox{taskbanner}{enhanced, colback=gray!6, colframe=gray!60,
  boxrule=0.6pt, arc=1.5mm, left=2mm, right=2mm, top=1mm, bottom=1mm}

\author[1,2,*]{Changdae Oh}
\author[1]{Qi Zeng}
\author[1]{Qi Qi}
\author[]{Andrey Zhmoginov}
\author[1]{Deren Lei}
\author[1]{Yun He}
\author[1,3,*]{Hoang Phan}
\author[4]{\mbox{Hangoo Kang}}
\author[4]{\mbox{Azalia Mirhoseini}}
\author[2]{Sharon Li}

\affiliation[1]{Meta Superintelligence Labs}
\affiliation[2]{University of Wisconsin--Madison}
\affiliation[3]{NYU}
\affiliation[4]{Stanford University}

\contribution[*]{Work done at Meta}

\abstract{
An emerging hypothesis about reinforcement learning (RL) post-training of large language models (LLMs) is that it merely sharpens existing behaviors of a base model, improving single-shot accuracy at the cost of solution coverage. Although this trade-off has been observed in math and coding tasks, it need not extend to agentic tasks, where multi-turn tool use and interaction may require capabilities newly acquired during post-training.
Our surprising finding is that pre-trained LLMs, equipped with a light inference harness, can serve as capable agents. Despite far lower accuracy (pass@$1$), they often surpass their post-trained counterparts in solution coverage (pass@$K$) given a sufficient test-time budget. We further analyze the underlying mechanism and show that post-training pushes tasks toward two extremes, always solved or never solved, and thereby improves sampling efficiency and consistency at the cost of solution coverage. To measure this cost, we propose \textbf{Sharpening Tax}, a diagnostic metric that quantifies the loss in test-time scalability after post-training. Across 14 base/post-trained model pairs from four families and three agentic benchmarks (42 cases in total), the tax is prevalent in most settings, can be estimated from a few rollouts, and correlates well with other metrics. Finally, we present \textbf{posterior-tempered group sampling (PTGS)}, a simple plug-and-play Bayesian sampler that adapts the sampling temperature per prompt to its estimated difficulty. Applied during RL training in two agentic environments, PTGS pays a smaller tax than the fixed-temperature baseline, solving more tasks under repeated sampling while also improving single-shot accuracy.
}

\correspondence{Changdae Oh (\email{changdae@cs.wisc.edu}) and Sharon Li (\email{sharonli@cs.wisc.edu})}
\metadata[Project Page]{\url{https://changdaeoh.github.io/sharpening-tax/}}
\metadata[Code]{\url{https://github.com/changdaeoh/sharpening-tax}} 

\begin{document}
\addtocontents{toc}{\protect\setcounter{tocdepth}{-1}}

\maketitle

\section{Introduction}
\label{section:intro}

Post-training at scale, particularly via reinforcement learning (RL), has transformed pre-trained general text completion machines into reasoning models~\citep{shao2024deepseekmath,guo2025deepseek}. Beyond competition-level math and coding reasoners, post-trained large language models (LLMs) now serve as goal-oriented agents that call tools, manage long context, and interact with an environment across multiple turns~\citep{anthropic_claude_code,openai_codex,google2025antigravity,meta2026musesparkblog,xai2026grokbuild}. A common belief behind this progress is that frontier-level reasoning capability emerges during intensive post-training, e.g., RL unlocks new capabilities and pushes the reasoning boundary beyond what the pre-trained base model can already reach~\citep{uesato2022solving,guo2025deepseek}. However, this belief has recently been called into question.

\textit{Does RL post-training create fundamentally new capabilities, or does it merely amplify a few rewarding behaviors that the base model already possesses, i.e., \textbf{distribution sharpening}?} This question has attracted broad attention~\citep{yue2025does,zhao2025echo,wu2025invisible,yuan2026f,wen2026reinforcement,shen2026rlvr,zhou2026rlvr}. A dominant observation so far is that an RL-trained policy gains accuracy (pass@$1$) at the expense of solution coverage (pass@$K$)~\citep{yue2025does,zhao2025echo}. Despite lots of reasonable positions, from advocacy to skepticism, with training-dependent~\citep{liu2025prorl} and data-dependent~\citep{zhang2025interplay,shen2026understanding} viewpoints in between, current evidence is mainly limited to math and coding tasks~\citep{yue2025does,zhao2025echo,he2025rewarding,shao2026spurious}.
Unfortunately, LLM coverage analysis in these domains does not settle the question since (1) pre-training provides enormous exposure to math and coding; (2) evaluation often checks only a final answer, so a lucky rollout with flawed reasoning earns credit~\citep{NEURIPS2022_3d719fee, pan2024feedback, wen2026reinforcement}.

We argue that agentic tasks provide a stress testing evaluation suite to test the sharpening hypothesis. Unlike math and coding problems, for which a base model may already encode many candidate solution strategies, a model in agentic tasks must execute structured actions~\citep{patil2025bfcl}, invoke diverse tools through scenario-specific interfaces~\citep{schick2023toolformer}, incorporate heterogeneous environmental feedback under uncertainty~\citep{oh2026uncertainty}, and stay coherent over long-horizon trajectories~\citep{froger2026gaia2}. Those behaviors are far less common in pre-training data mix and are widely believed to be learned in post-training~\citep{zeng2024agenttuning, su2026scaling}. Therefore, one might expect that post-training does more than redistribute probability mass among behaviors already present in the base model; it may be necessary to make successful agentic trajectories reachable at all. 

\begin{figure*}[t]
     \centering
     \includegraphics[width=0.95\textwidth]{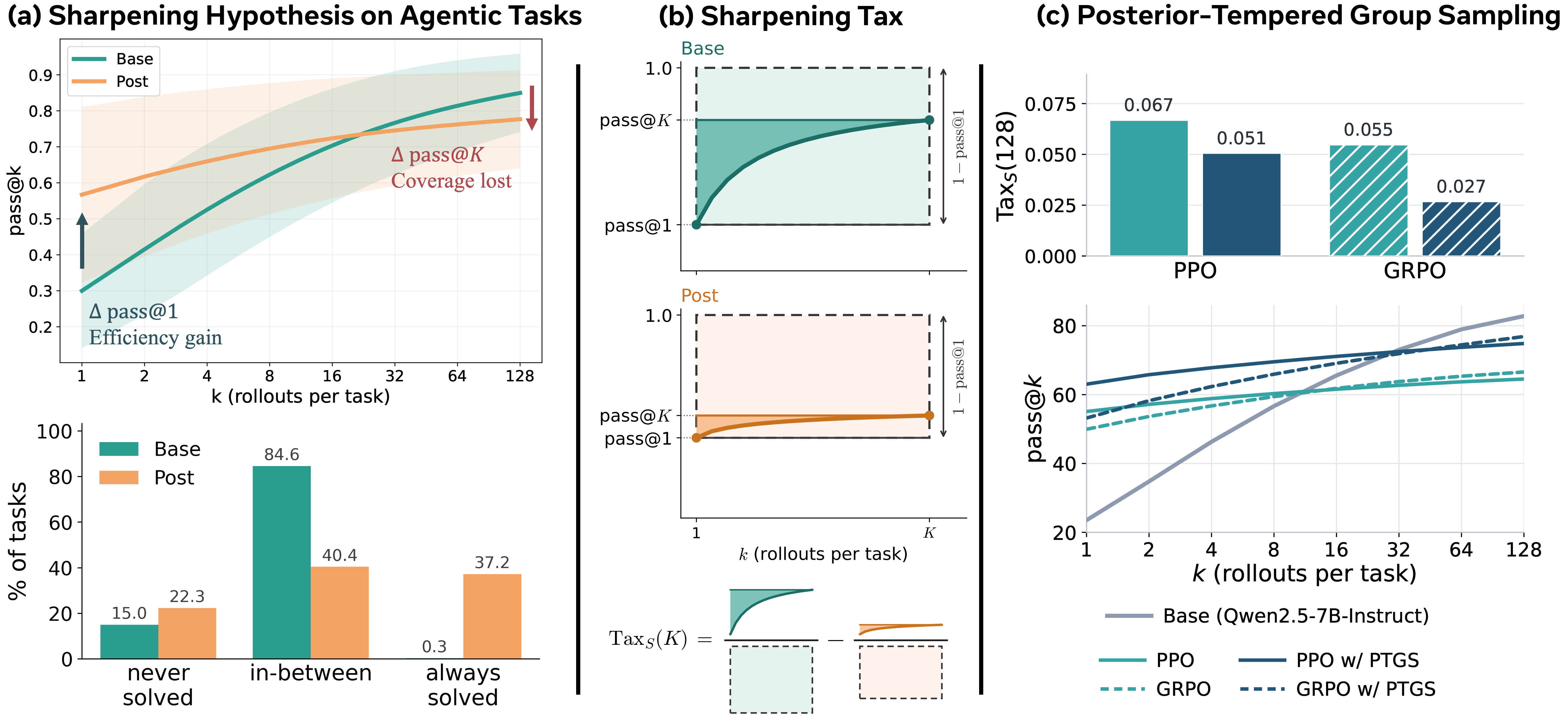}
     \vspace{-0.675em}
     \caption{\textbf{Project overview}. We investigate whether post-training fundamentally broadens the agentic reasoning capacity of its base model. (a) Our observations suggest that post-training sharpens the policy to improve sampling efficiency and consistency while compromising coverage. To quantify this systematically, (b) we present \textbf{Sharpening Tax}, measuring the difference in test-time scalability between base and post-trained LLMs. To lower the tax, (c) we then propose \textbf{Posterior-Tempered Group Sampling (PTGS)}, dynamically adjusting the sampling temperature based on task difficulty.}
     \vspace{-1.2em}
     \label{fig:overview}
\end{figure*}

To test this expectation, we systematically study LLMs' reasoning boundaries on agentic tasks before and after post-training. In particular, we first focus on the publicly available open-source checkpoint pairs of the base and post-trained LLMs to understand how large-scale general post-training affects the agentic reasoning boundary of base LLMs. 
Our surprising observation (\S\ref{section:observation}) is that while the post-trained LLMs lead in single-shot accuracy, pre-trained base LLMs can also perform agentic reasoning when paired with a lightweight harness, and they even surpass their post-trained counterparts in terms of solution coverage given enough test-time compute. We then analyze how post-training changes the per-task success probability distribution, revealing that it bimodalizes the mass towards two extremes, always solved or never solved, thereby forgoing the benefits of test-time parallel scaling of multiple rollouts to improve single-shot accuracy. 

Motivated by these test-time scaling dynamics, we propose \textbf{Sharpening Tax} (\S\ref{section:diagnosis}), a diagnostic metric that quantifies how much post-training shrinks the effective reasoning coverage as a single scalar. Across 14 model backbones from popular open source model families and three representative basic benchmarks for agentic tasks, we find that the tax is pervasive, implying that modern post-training consistently trades coverage for sampling efficiency; then we highlight the practical usefulness of Sharpening Tax, which can be estimated from a handful of rollouts to predict the future tax of many rollouts as well as other performance metrics.

Finally, we show that even task-specific RL tuning on a single domain pays Sharpening Tax. To mitigate this, we present \textbf{posterior-tempered group sampling (PTGS)}, a general sampler that balances exploration and exploitation. It adapts the sampling temperature based on per-task difficulty to smooth policy for hard prompts and sharpen it for easy ones. Plugged into common RL algorithms (e.g., PPO and GRPO), PTGS pays a smaller tax while improving accuracy and coverage simultaneously over the baseline in agentic environments (\S\ref{section:solution}). We further provide theoretical analyses (\S\ref{section:theory}) to explain what the tax measures, why sharpening charges it, and how PTGS improves RL. Fig.~\ref{fig:overview} shows the overview, and contributions are summarized as follows:
\begin{itemize}[leftmargin=*, itemsep=2pt, topsep=2pt, parsep=0pt]
    \item We systematically study how post-training affects the average performance (pass@$1$) and solution coverage (pass@$K$) of LLMs on multi-turn, interactive agentic tasks, and show that harness-equipped base models can match or surpass their post-trained counterparts in terms of coverage.
    \item To enable diagnosis at scale, we propose \textit{Sharpening Tax}, a metric that summarizes the effect of post-training on test-time scalability in a single number. Across 42 model-benchmark combinations, we find that the tax is substantial and pervasive, predictable from a few rollouts, and closely tied to other evaluation metrics. 
    \item We present \textit{posterior-tempered group sampling (PTGS)}, a plug-and-play sampler which adaptively sharpens or smoothens the sampling temperature per prompt; it mitigates the tax, improving accuracy and coverage simultaneously in multi-turn RL of LLM agents by increasing the chance of getting informative rollout.
\end{itemize}

\section{Preliminaries}
\label{section:prelim}

\noindent\textbf{Metrics.}
LLM reasoning boundaries are commonly probed through pass@$k$ evaluation~\citep{yue2025does,zhao2025echo}. For example, an LLM policy generates $k$ independent rollouts for each task $x_i$, and success is evaluated across these trials. Let $n_i$ be the total number of rollouts sampled for task $x_i \in \mathcal{D}$, of which $c_i \le n_i$ succeed. Given a budget $k \le n_i$, we consider three complementary metrics, each computed with the standard unbiased estimator~\citep{chen2021evaluating,yao2024tau} below.

\begin{itemize}[itemsep=2pt, topsep=2pt, parsep=0pt]
    \item $\text{pass}@1_i = \frac{c_i}{n_i}$ \textbf{(Accuracy)}: Empirical pass rate of a single trial.
    \item $\text{pass}@k_i = 1 - \binom{n_i - c_i}{k} \big/ \binom{n_i}{k}$ \textbf{(Coverage)}: Probability that at least one of $k$ rollouts succeeds.
    \item $\text{pass}^k_i = \binom{c_i}{k} \big/ \binom{n_i}{k}$ \textbf{(Consistency)}: Probability that all $k$ rollouts succeed.
\end{itemize}

We report dataset-level metrics by averaging over all tasks, e.g., $\text{pass}@k = \frac{1}{|\mathcal{D}|} \sum_{i} \text{pass}@k_i$. Intuitively, pass@$1$ captures the sampling efficiency of a policy, pass@$k$ its solution coverage under a finite budget, and $\text{pass}^k$ its success reliability across multiple attempts.

\noindent\textbf{Datasets and environments.}
Most prior work evaluates pass@$K$ reasoning boundary on math and coding benchmarks, which sometimes check only the final outcome. As a result, an incorrect reasoning trajectory that stumbles onto a lucky final answer still counts as a success, inflating pass@$K$~\citep{wen2026reinforcement}. We instead evaluate on three agentic benchmarks that require multi-turn tool calling for final goal achievement: BFCL v4 multi-turn base split~\citep{patil2025bfcl}, WebShop~\citep{yao2022webshop}, and ACEBench~\citep{chen2025acebench}. In these environments, intermediate actions and state transitions are checked by design, offering a faithful setup for stress testing of the agentic reasoning boundary. See Appendix~\ref{appendix:setup} for more details.

\noindent\textbf{Models.}
Our question is whether post-training extends the reasoning boundary of the base model. To explore this at scale, rather than developing the base and post-trained LLMs from scratch, we leverage open-source checkpoint pairs consisting of a pre-trained base model and its post-trained counterpart (e.g., \texttt{gemma-4-31B} vs. \texttt{gemma-4-31B-it}). Specifically, the evaluation spans 14 backbones from \texttt{Gemma-4}~\citep{team2026gemma}, \texttt{Ministral-3}~\citep{liu2026ministral}, \texttt{Qwen2.5}~\citep{yang2024qwen25}, and \texttt{Qwen3.5}~\citep{qwen2026qwen35} families in HuggingFace collections, ranging from 3B to 35B effective parameters (Tab.~\ref{table:model_pairs}). The exact training recipes of these models are not fully disclosed, so we use the term loosely for RL post-trained models; since SFT and DPO~\citep{rafailov2023direct} also sharpen the policy distribution~\citep{huang2025self}, our analysis does not hinge on the exact recipe. Meanwhile, for fair comparison across different model backbones, we turn off thinking mode for all post-trained models that support the enforced explicit reasoning (we also ablate thinking mode in Figure~\ref{fig:thinkingmode}). See Appendix~\ref{appendix:models} for justification and details.

\begin{figure*}[t]
     \centering
     \vspace{-1.15em}
     \includegraphics[width=0.88\textwidth]{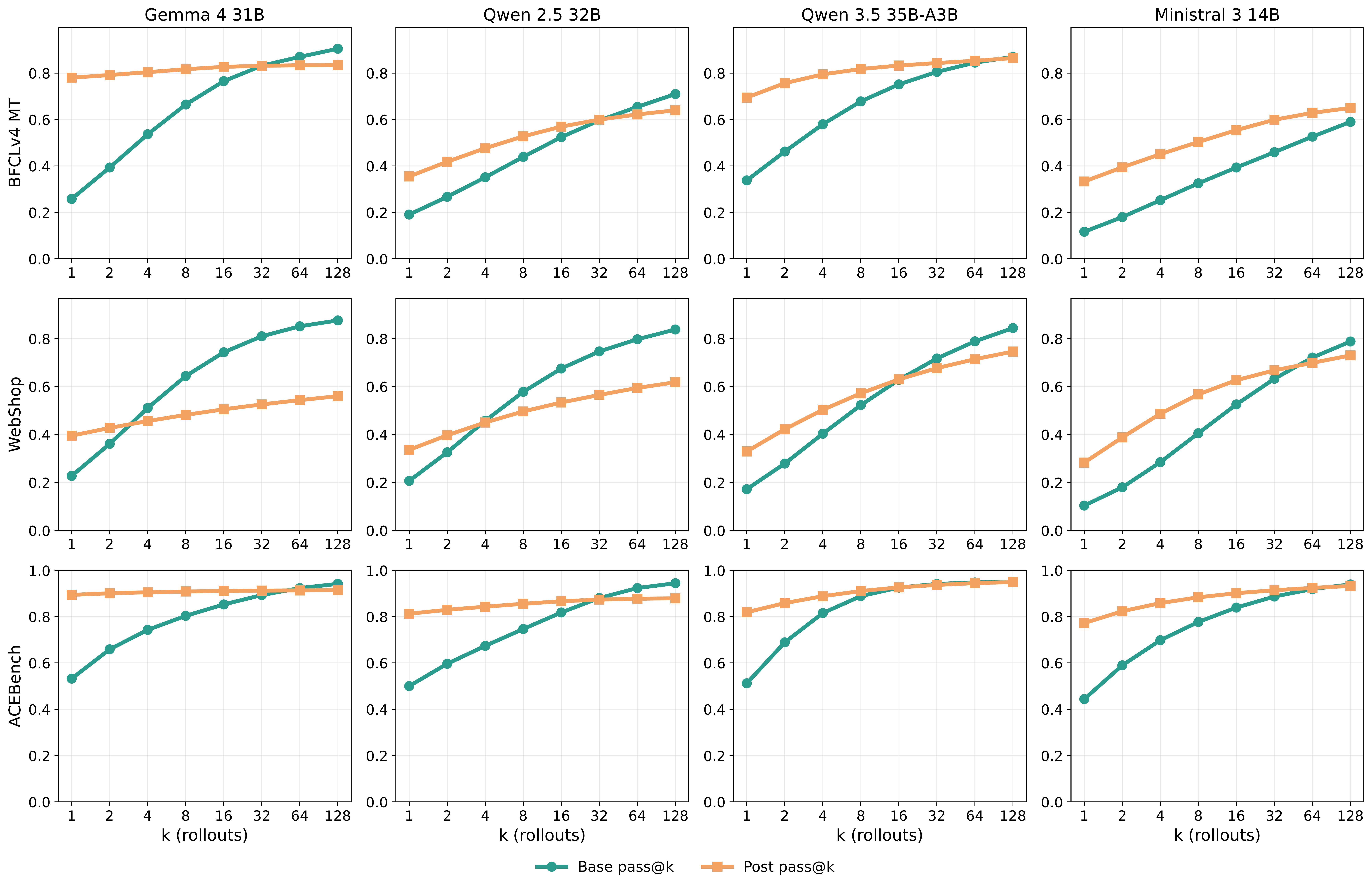}
     \vspace{-1.2em}
     \caption{\textbf{Base models catch up to their post-trained counterparts on agentic tasks}. We draw pass@$K$ curves (up to 128 rollouts per task) for four different models on three benchmarks. Post-trained models (orange) dominate at small $K$, but pre-trained base models equipped with a light harness (teal) scale more steeply and cross over in most settings.}
     \vspace{-1.4em}
     \label{fig:observation-tts}
\end{figure*}
\section{The Accuracy-Coverage Tension of Post-trained Agent Policy}
\label{section:observation}

Prior observations that base models have a higher performance upper bound (coverage) than post-trained models come almost entirely from math and coding domains. In agentic domains, however, base models have likely seen far less data in those formats, and they are often assumed to lack the instruction-following and tool-calling skills that agentic tasks require~\citep{zeng2024agenttuning, su2026scaling}. Thus, whether previous coverage analysis transfers to agentic domains remains as a non-trivial gap which motivates our work.

\noindent\textbf{Base models in the harness handle agentic tasks, catch up to post-trained ones given test-time compute.}
Figure~\ref{fig:observation-tts} presents our first observation, which holds consistently across model backbones and benchmarks. Pre-trained models equipped with a simple harness perform agentic reasoning surprisingly well, catching up to their post-trained counterparts as the rollout budget $K$ grows. This is reminiscent of \textit{large language monkeys}~\citep{borel1913mecanique,brown2024large}, but now disciplined under the harness.

By \emph{harness}, we mean a dataset-independent, model-agnostic scaffolding that combines a simple system prompt with relaxed tool-calling and parsing interfaces similar to \cite{yang2024swe} or \cite{wang24executable}. As shown in Table~\ref{table:harness_ablation}, harness dramatically improves both pass@$1$ and pass@$32$ on benchmarks that base models particularly struggle with, while leaving already-manageable tasks largely unchanged. Meanwhile, we observed that the same harness hurts the post-trained models' performance (Table~\ref{table:harness_ablation_full}), consistent with evidence that prompt engineering is not uniformly beneficial for advanced models~\citep{wang2026advanced}. More generally, harness effectiveness depends on how it interacts with post-training~\citep{kim2026interplay}. Unless specified otherwise, we evaluate base models with the harness and post-trained models with the dataset-default scaffolding; see Appendix~\ref{appendix:harness} for details and full results. 
\begin{figure*}[t]
     \centering
     \includegraphics[width=0.975\textwidth]{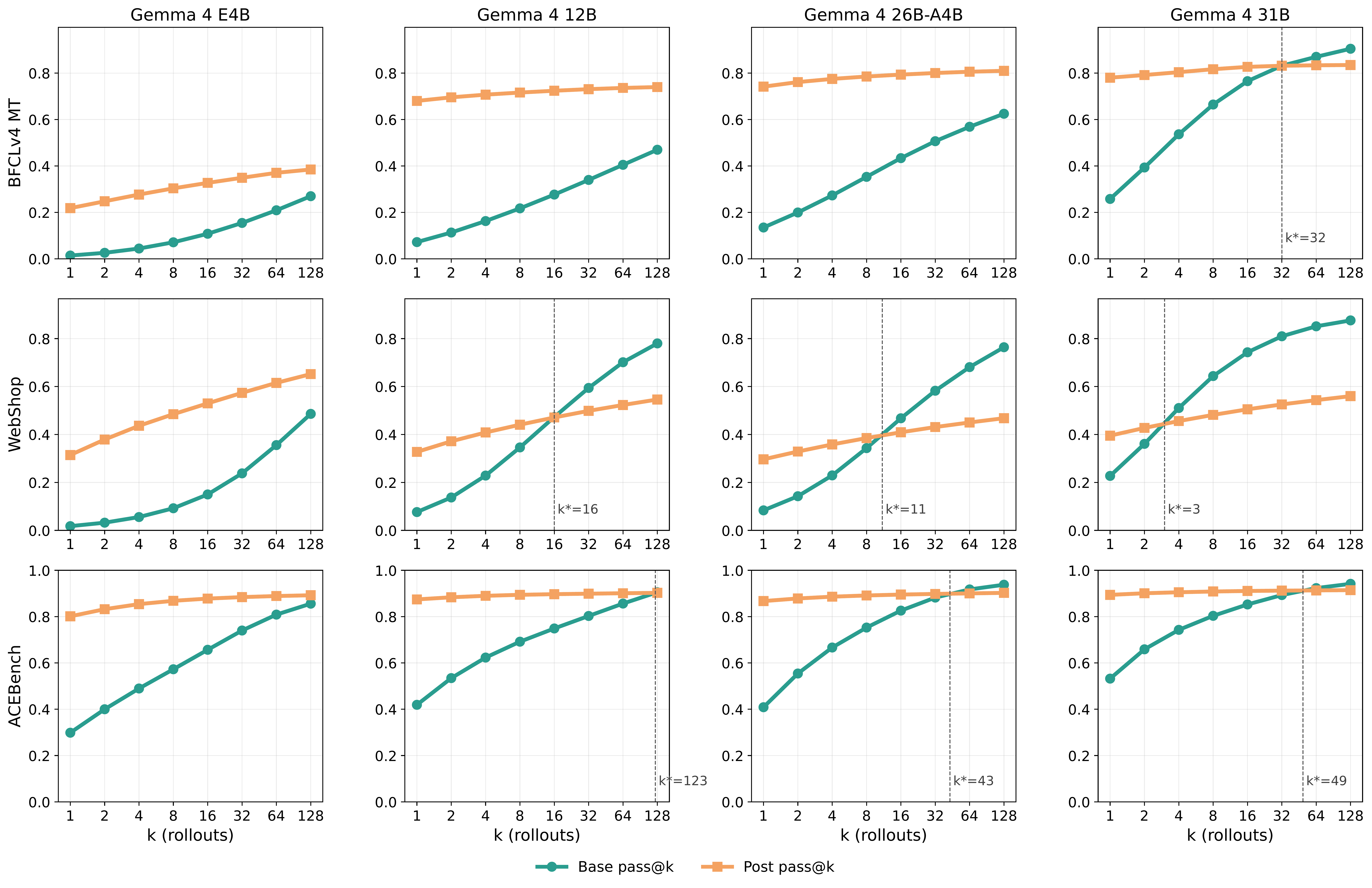}
     \vspace{-0.6em}
     \caption{\textbf{Larger models pull the crossover points earlier.}  Pass@$K$ test-time scaling curves of harness-equipped \texttt{gemma-4} base models (teal) vs. their post-trained counterparts (orange) with model size increasing from left to right. Overall, the crossover budget at which the base model overtakes post-trained model shrinks as model scale grows.}
     \vspace{-0.5em}
     \label{fig:observation-tts-modelsize}
     \vspace{-0.6em}
\end{figure*}

\begin{wraptable}{r}{0.42\textwidth}
\vspace{-2em}
\centering
\caption{\textbf{Harness ablation study of base models.} Average pass@$1$ and pass@32 of eight base models from the Gemma-4, Ministral-3, Qwen2.5, and Qwen3.5 families with (w/) and without (w/o) harness. Table~\ref{table:harness_ablation_full} provides per-model results.}
\label{table:harness_ablation}
\vspace{-0.75em}
{\small\setlength{\tabcolsep}{4pt}
\begin{tabular}{@{}lcccc@{}}
\toprule
& \multicolumn{2}{c}{pass@$1$} & \multicolumn{2}{c}{$\text{pass}@32$} \\
\cmidrule(lr){2-3}\cmidrule(l){4-5}
& w/o & w/ & w/o & w/ \\
\midrule
BFCL & 5.59 & 15.63 & 15.19 & 49.13 \\
WebShop & 6.32 & 9.73 & 39.48 & 49.15 \\
ACEBench & 44.24 & 42.63 & 89.25 & 87.69 \\
\bottomrule
\end{tabular}}
\vspace{-0.5em}
\end{wraptable}
We see a clear trade-off between the two policies. Post-trained models achieve higher single-sample accuracy (pass@$1$), but base models scale much more steeply in solution coverage (pass@$K$). As the budget grows to 128 rollouts per task, the curves of base models cross over and eventually surpass the post-trained ones across most benchmarks, e.g., over 85\% $\text{pass}@128$ on WebShop vs. 56\% for RL with \texttt{gemma-4-31B}. The natural next question is how post-training reshapes the policy to produce these outcomes. A popular explanation is the \emph{\textbf{sharpening hypothesis}}: (RL) post-training amplifies a few high-reward behavior modes that the base model already contains, while suppressing the rest. We dive deeper into this observation.

\noindent\textbf{Is sharpening a curse or a blessing?}
The scaling curves of base and post-trained models cross over, but when does the crossover happen? Figure~\ref{fig:observation-tts-modelsize} plots pass@$K$ curves across four model scales of \texttt{Gemma-4} (4B, 12B, 26B, and 31B). We observe that larger models pull the crossover point $k^*$ to a smaller rollout budget. In WebShop, for example, the crossover budget decreases from $k^* > 128$ for 4B to $k^* \approx 3$ for 31B. Appendix~\ref{appendix:agg_scopes} provides the full analysis results, including the similar conclusion from other model families.

This trend aligns with existing accounts of model capacity, saying larger models can store more behavior modes with less interference~\citep{huang2026larger} and extract more structured information from the same data~\citep{kaplan2020scaling,finzi2026entropy}, which helps them discover solutions within a small sampling budget~\citep{wei2022chain}. Therefore, sharpening cuts both ways. For smaller models, it raises the performance floor (pass@$1$) with little coverage to lose; for larger models, it quickly lowers the performance ceiling (pass@$K$). 
Consequently, whether sharpening is harmful or beneficial should be interpreted along with model scale, budget, and application.
For instance, a user who runs a large model with an abundant budget and only needs one promising solution among many rollouts is better served by the base model than by its post-trained counterpart. This is attractive whenever diversity and creativity matter, e.g., scientific discovery~\citep{agarwal2025autodiscovery,Ghareeb2026,yuksekgonul2026learning}, automated research~\citep{yamada2025ai,lu2026towards}, and long-standing open problems~\citep{lee2025banel} given a scalable verifier~\citep{kwok2026llm}.

\noindent\textbf{How does post-training shrink solution coverage?}
To look beyond test-time scaling curves, we categorize every task by its outcomes over 128 rollouts: \emph{always pass} ($c_i = K$), \emph{pass given compute} ($0 < c_i < K$), and \emph{always fail} ($c_i = 0$). As shown in Figure~\ref{fig:pertasktts-bimodalization} (and Table~\ref{table:bimodal_freq_full}, Figure~\ref{fig:taskpassrate_dist_gemma4}--\ref{fig:taskpassrate_dist_ministral3} in Appendix), post-training drastically reduces the middle category, \emph{pass given compute} (e.g., from 87.6\% to 30.0\% on WebShop with \texttt{gemma-4-31B}), and pushes those tasks to both extremes, i.e., \emph{always pass} grows (from 0.0\% to 26.0\%), but so does \emph{always fail} (from 12.4\% to 44.0\%), which is aligned with the recent findings of \cite{shen2026understanding} stating RL post-training amplifies not only the correct modes of actions but also incorrect ones that base policy already prefers.
\begin{figure}[bht]
    \centering
    \includegraphics[width=0.9\linewidth]{figures/difficulty_hist_gemma-4-31B.pdf}
    \vspace{-0.35em}
    \caption{\textbf{Post-training bimodalizes per-task success rates} (\texttt{gemma-4-31B}, $K{=}128$ rollouts). The base model shows a spread distribution, whereas post-trained model polarizes it into a bimodal shape by pushing most of the intermediate \emph{pass given compute} mass to the two extremes: always solved and never solved. See Appendix~\ref{appendix:perfamily} for full results.}
    \label{fig:pertasktts-bimodalization}
    \vspace{-1em}
\end{figure}

In summary, from a distributional viewpoint over per-task empirical success rates, post-training shows a \emph{bimodalization} effect. Base models spread most of their mass over the region of intermediate success rates, exactly where additional rollouts keep converting failures into successes, whereas the post-trained policies concentrate their mass onto two extrema, leaving little probability in between and hence little room to improve with more rollouts. Theorem~\ref{thm:proportional-tax} in \S\ref{section:theory} formalizes this. 

\begin{wrapfigure}{r}{0.42\textwidth}
\centering
\vspace{-0.75em}
     \includegraphics[width=0.96\linewidth]{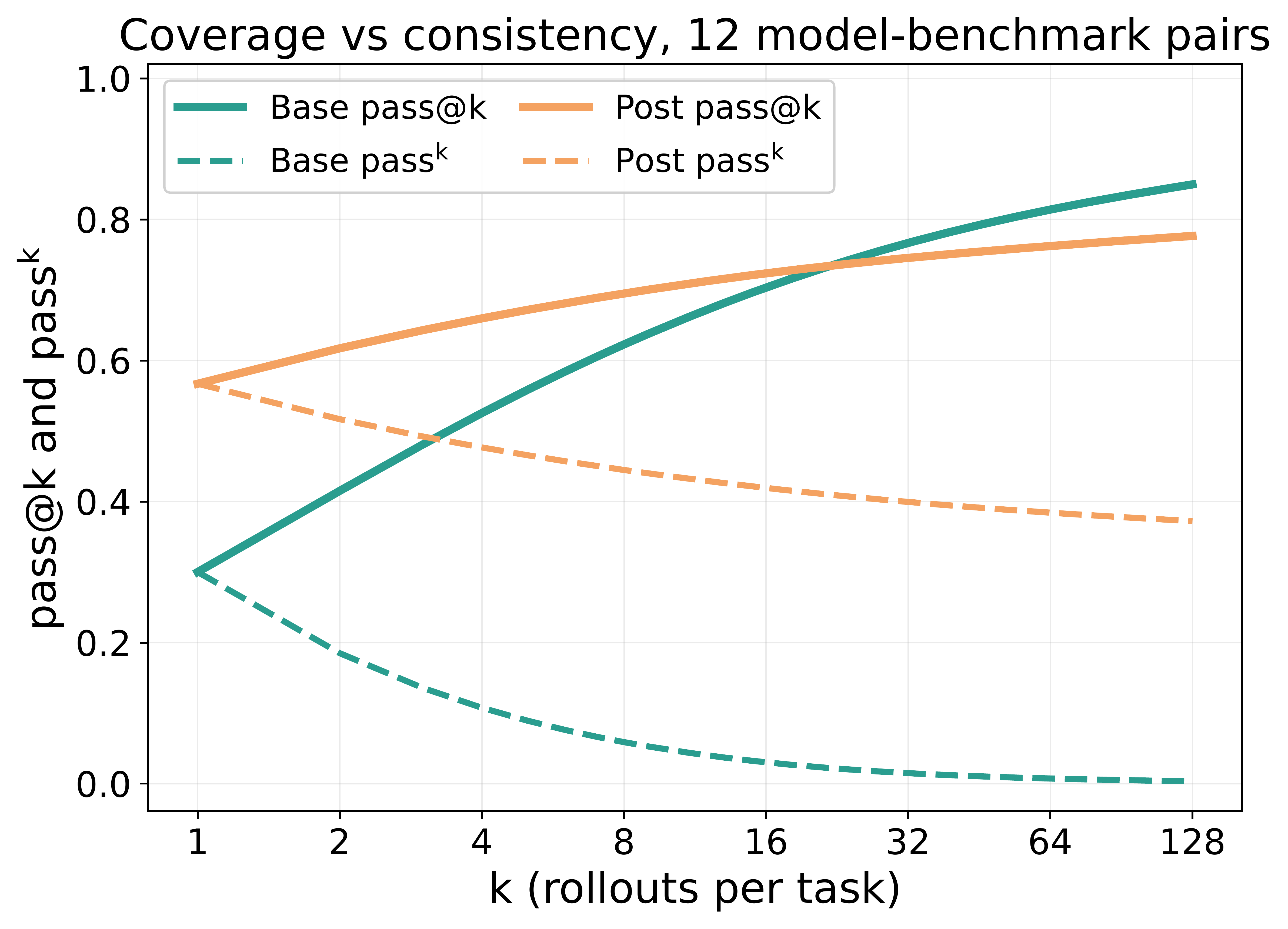}
     \vspace{-1.2em}
     \caption{\textbf{Post-training trades coverage for consistency.} Mean pass@$K$ and $\text{pass}^K$ over 12 model-benchmark pairs.}
     \label{fig:consistency-coverage}
\vspace{-1.2em}
\end{wrapfigure}
Meanwhile, this bimodalization also makes the policy act more consistently, i.e., for a given prompt, the rollouts within a group show high agreement. Figure~\ref{fig:consistency-coverage} confirms this by showing that, averaged over the 12 model-benchmark pairs (largest backbone per family $\times$ three benchmarks), the post-trained policy's coverage (pass@$K$) and consistency ($\text{pass}^K$) curves stay comparatively close, whereas the base model shows a remarkably wide gap, i.e., its consistency collapses to zero while its coverage eventually surpasses RL. In short, post-training buys sampling efficiency and consistency by paying with coverage. These phenomena replicate across all model families (See Appendix Figure~\ref{fig:appendix_ttscurve_gemma4}--~\ref{fig:appendix_ttscurve_ministral3}), with \texttt{Qwen3.5} exhibiting the softest sharpening and \texttt{Gemma-4} exhibiting the steepest sharpening.

\begin{tcolorbox}[mybox]
\noindent \textbf{Takeaways.} (1) With a simple harness, pre-trained base models are capable agentic reasoners and eventually surpass their post-trained counterparts in solution coverage; (2) the crossover budget at which this happens shrinks with model scale, so whether sharpening helps or hurts must be judged jointly with model scale and test-time budget; (3) the underlying mechanism is distribution sharpening, i.e., post-training bimodalizes the per-task success rate distribution to extrema.
\end{tcolorbox}

\section{Quantifying the Effect of Post-Training Sharpening}
\label{section:diagnosis}

So far, we have examined how RL post-training reshapes test-time scaling dynamics by visualizing the scaling curve. However, manually inspecting full pass@$K$ curves for every model and dataset quickly becomes neither practical nor scalable. Meanwhile, evaluating models with only the endpoint metrics such as pass@$K$ and $\text{pass}^K$ at a specific $K$ says little about the full landscape of the scaling behavior, e.g., whether success saturates within a few attempts or keeps rising. To this end, we propose \textbf{Sharpening Tax}, a diagnostic metric that summarizes the effect of post-training on test-time scalability in a single number. For an evaluation budget $K$, we define the following quantities.

\begin{itemize}[leftmargin=*,itemsep=2pt,topsep=2pt,parsep=0pt]
    \item \textbf{Raw area scalability.} The cumulative performance recovered by additional compute (parallel rollouts) up to $K$, defined as the area beneath the pass@$K$ ceiling:
    {\abovedisplayskip=2pt \belowdisplayskip=2pt
    \begin{equation}\label{eq:raw_scalability}
        A(K) = \sum_{k=1}^{K-1} \big[ \text{pass}@K - \text{pass}@k \big]
    \end{equation}
    }
    A larger value of $A(K)$ reflects a greater advantage from performing extra rollouts at test time.

    \item \textbf{Calibrated scalability.} We normalize $A(K)$ with the expected single-shot failure rate $(1 - \text{pass}@1)$, i.e., accuracy headroom, to favor high average success rate while bounding $S(K) \in [0, 1]$:
    {\abovedisplayskip=2pt \belowdisplayskip=2pt
    \begin{equation}\label{eq:norm_scalability}
        S(K) = {\color{black}\dfrac{1}{K-1}}\sum_{k=1}^{K-1} \frac{\text{pass}@K-\text{pass}@k}{1-\text{pass}@1}= \frac{A(K)}{(K-1)(1 - \text{pass}@1)}
    \end{equation}
    }
    \item \textbf{Sharpening Tax.} For a given budget $K$, the deficit between the base and post-trained policies:
    {\abovedisplayskip=1.5pt \belowdisplayskip=1.5pt
    \begin{equation}\label{eq:tax}
        \text{Tax}_X(K) = X_{\text{Base}}(K) - X_{\text{Post}}(K),\quad X \in \{A, S\}
    \end{equation}
    }
\end{itemize}
A positive $\text{Tax}_X(K)$ means post-training reduces test-time scalability relative to the base model, charging a ``tax'' on the coverage gains achievable through compute scaling up to $K$. Next, we show how broadly Sharpening Tax is charged across models and benchmarks (Fig.~\ref{fig:sharpeningtax}), and why it is a useful diagnostic (Fig.~\ref{fig:sharpeningtax-predictability}). \S\ref{section:theory} provides deeper theoretical insights for our tax formulation.

\noindent\textbf{Sharpening Tax is charged broadly across modern post-training pipelines.}
Figure~\ref{fig:sharpeningtax} reports both tax variants' values of the largest and smallest backbone of each model family on the three benchmarks as a function of the rollout budget $k$ (Figure~\ref{fig:sharpeningtax_all14} provides the full results). For the largest backbones, the tax is positive at almost every budget and benchmark, and it is largest for the largest checkpoints such as \texttt{gemma-4-31B}. For the smallest backbones, the tax is often negative at small budgets ($k \le 8$), most visibly for \texttt{Ministral-3-3B}, reflecting that sharpening does help when compute is severely constrained. As the budget scales toward $k = 128$, however, the tax rises in most of these cases, e.g., $\text{Tax}_{S}(128) > 0$ in 36 of the 42 combinations. 
In other words, the coverage trade-off we observed in the previous section is not specific to one model or benchmark; the modern post-training pipelines prioritize immediate sampling efficiency and consistency at the expense of the solution coverage upper bound achievable by test-time scaling. 
The two variants also differ in shape. While both are positive in most cases, the raw tax ($\text{Tax}_{A}$) increases almost monotonically with the budget, whereas the calibrated tax ($\text{Tax}_{S}$) can either rise or fall with the budget, owing to its accuracy-headroom calibration. 
Moreover, a positive tax has a concrete probabilistic interpretation (Proposition~\ref{thm:redeemed-depth} in \S\ref{section:theory} and Corollary~\ref{cor:two-metric} in Appendix~\ref{appendix:two_metric})---after post-training, successes rely less on retries, since failed first attempts are recovered less often (lost coverage) or sooner (faster saturation). Table~\ref{table:two_metric_summary_allfam} in Appendix separates the two effects.
\begin{figure*}[!bht]
     \centering
     \vspace{-0.2em}
     \includegraphics[width=0.92\textwidth]{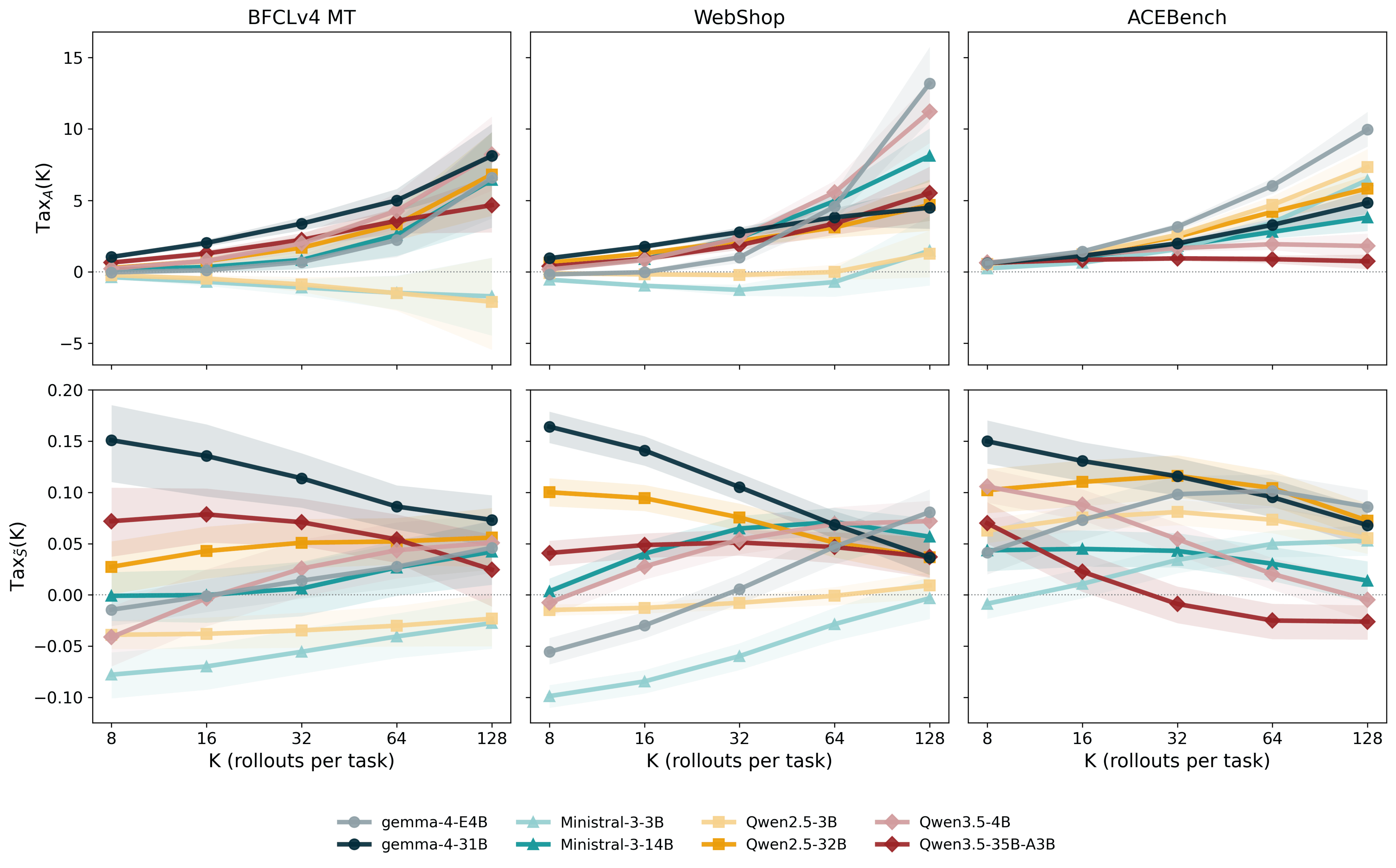}
     \vspace{-0.3em}
     \caption{\textbf{Sharpening Tax across rollout budgets.} $\text{Tax}_{S}(k)$ of eight models (the largest and smallest backbone of each family) across three benchmarks; positive values indicate the base model's scaling advantage over RL. Shades represent 95\% bootstrap confidence intervals. The tax is positive at almost every budget for the largest backbones, and for the smallest backbones it grows from negative values toward zero or above as the test-time budget scales.}
     \label{fig:sharpeningtax}
     \vspace{-0.8em}
\end{figure*}

\noindent\textbf{Sharpening Tax is predictable and strongly correlates with other metrics.}
Beyond describing the observed tax values, we ask two practical questions: can tax be estimated cheaply without full budget evaluation, and what does it tell us about other evaluation metrics? Figure~\ref{fig:sharpeningtax-predictability} provides answers.
\begin{enumerate}[leftmargin=*,itemsep=2pt,topsep=2pt,parsep=0pt]
    \item \textbf{Early predictability}: In Figure~\ref{fig:sharpeningtax-predictability} (Left), $\text{Tax}_{S}(8)$, estimated from 8 rollouts on half of the tasks, strongly predicts $\text{Tax}_{S}(32)$ on the held-out half of the tasks, achieving a remarkably high Spearman correlation $\rho=0.85$. That is, a cheap 8-rollout estimate is a reliable early signal of how the tax will grow, indicating the practicality of the tax estimation in budget-limited setups~\citep{kazdan2025efficient}. By leveraging this predictability, we showcase an interesting application, routing base and post-trained models per task in Figure~\ref{fig:routing}.
    \item \textbf{Correlation with the consistency gap}: As shown in Figure~\ref{fig:sharpeningtax-predictability} (Middle), the tax computed at $K=8$ exhibits a clear positive rank correlation with the consistency gap between the RL and base models, $\Delta \text{pass}^8 = \text{pass}^8_{\text{RL}} - \text{pass}^8_{\text{Base}}$. This supports our earlier finding in Figure~\ref{fig:consistency-coverage}, i.e., the more consistent the post-trained policy becomes, the larger the tax it pays in lost coverage.
    \item \textbf{Comparison with other predictors}: Figure~\ref{fig:sharpeningtax-predictability} (Right) compares early predictors using only 8 rollouts against three future targets from 32 held-out rollouts ($\text{Tax}_{S}(32)$, $\Delta\text{pass}@32$, and $\Delta\text{pass}^{32}$). The early tax $\text{Tax}_{S}(8)$ is the only one that predicts all three metrics with a strong statistical sign, suggesting its promise as a base-model-anchored evaluation metric for the post-trained policy under repeated sampling.
\end{enumerate}

\begin{tcolorbox}[mybox]
\noindent \textbf{Takeaways.} (1) Sharpening Tax is prevalent across numerous models and benchmarks, indicating that modern post-training prioritizes accuracy over coverage; (2) Sharpening Tax is a practical diagnostic of a post-trained policy, which extrapolates to larger budgets and is predictive of other metrics of interest.
\end{tcolorbox}

\begin{figure*}[t]
     \centering
     \vspace{-0.35em}

     \includegraphics[width=0.9\textwidth]{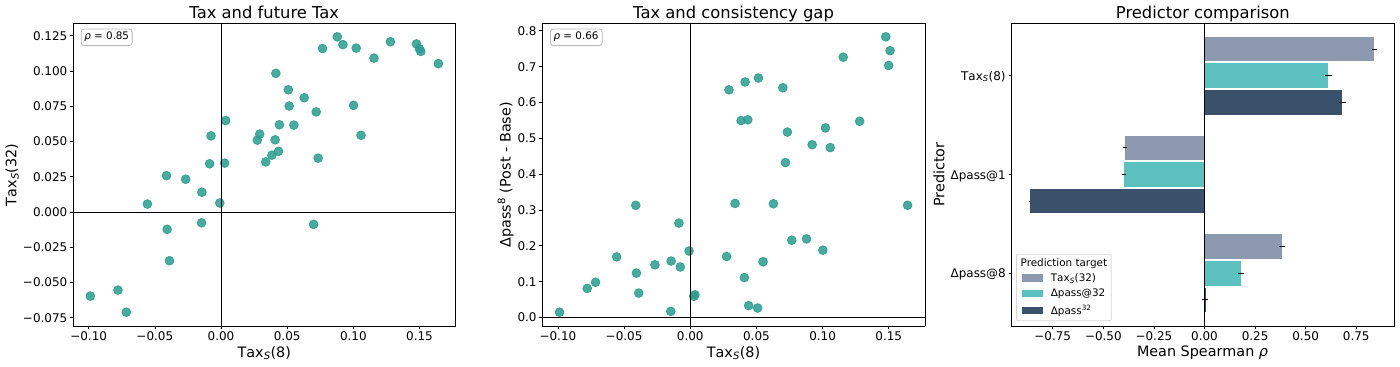}
     \vspace{-0.35em}
     \caption{\textbf{Sharpening Tax extrapolates to larger budgets; predicts the consistency gap and the value of additional test-time compute.} Each panel reports Spearman rank correlation $\rho$ over 42 model-benchmark combinations, estimated and validated over disjoint task subsets. (\textbf{Left}) $\text{Tax}_{S}(8)$ vs. future tax $\text{Tax}_{S}(32)$. (\textbf{Middle}) $\text{Tax}_{S}(8)$ estimated from 8 rollouts vs. consistency gap $\Delta\text{pass}^8$. (\textbf{Right}) Mean Spearman $\rho$ of candidate predictors for other evaluation metrics.}
     \label{fig:sharpeningtax-predictability}
     \vspace{-1.15em}
\end{figure*}

\section{How to Balance Sampling Efficiency and Solution Coverage?}
\label{section:solution}
We so far observed that post-training charges Sharpening Tax, collapsing the policy onto a few rewarding behaviors and compromising the rich coverage that the base model had. Now we ask: \emph{Can we keep the sampling efficiency that post-training buys while securing the coverage it takes away?} The most obvious remedy,  increasing the temperature, improves pass@$K$ but degrades pass@$1$ (Fig.~\ref{fig:temperaturescaling-rl} in Appendix~\ref{appendix:global_temperature}). To tackle this dilemma, we introduce a simple Bayesian sampler pursuing per-task adaptive tempering during RL.

\begin{figure*}[!bh]
     \centering
     \vspace{-0.55em}
     \includegraphics[width=0.975\textwidth]{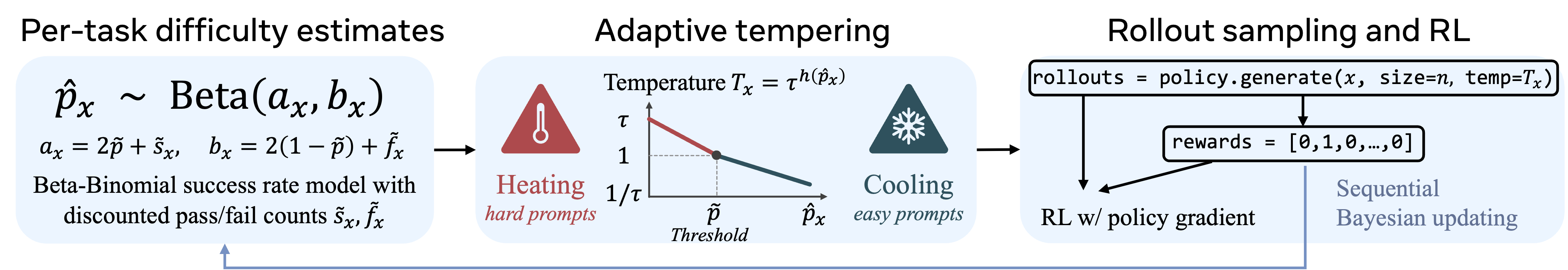}
     \vspace{-0.35em}
     \caption{\textbf{Illustration of posterior-tempered group sampling}. Given a policy, we model its success rate for each task (prompt) $x$ as a dynamic difficulty estimate to adaptively determine rollout sampling temperature in RL training pipeline. It encourages exploratory behavior for hard prompts while inducing conservative behavior for easy ones.}
     \label{fig:ptgs_illustration}
     \vspace{-1em}
\end{figure*}

\noindent
\begin{minipage}[t]{0.53\textwidth}
\noindent\textbf{Posterior-tempered group sampling.}
We propose \textit{posterior-tempered group sampling (PTGS)}, estimating per-task difficulty online from the empirical success rate of the policy and letting this running estimate set the temperature of the next sampling across training progress. 
To be specific, whenever we observe $n$ rollouts per prompt $x$, we track cumulative discounted success and failure counts $(\tilde{s}_x, \tilde{f}_x) \leftarrow (\gamma\tilde{s}_x + s_x,\; \gamma\tilde{f}_x + n - s_x)$ throughout training with a forgetting factor $0\leq \gamma < 1$.
We set them as parameters of a Beta distribution combined with a target success rate $\tilde{p}$, forming the posterior $\mathrm{Beta}(a_x, b_x)$ via $a_x = 2\tilde{p} + \tilde{s}_x$ and $b_x = 2(1-\tilde{p}) + \tilde{f}_x$. Then, PTGS draws $\hat{p}_x \sim \mathrm{Beta}(a_x, b_x)$ by Thompson sampling and decodes the next group of $n$ rollouts when facing the prompt $x$ at the temperature $T_x$ as follows:
\begin{equation}
\begin{aligned}
T_x &= \tau^{\,h(\hat{p}_x)}, \\
h(p) &=
\begin{cases}
\frac{\tilde{p} - p}{\tilde{p}}, & p \le \tilde{p} \quad \text{(hard prompt)},\\[2pt]
\frac{\tilde{p} - p}{1 - \tilde{p}}, & p > \tilde{p}\quad \text{(easy prompt)}.
\end{cases}
\end{aligned}
\label{eq:ptgs}
\end{equation}
\end{minipage}%
\hfill
\begin{minipage}[t]{0.445\textwidth}
\vspace{0pt}\vspace{\dimexpr-\intextsep-0.7\baselineskip\relax}
\begin{algorithm}[H]
\small
\caption{PTGS within an RL training loop}
\label{alg:ptgs}
\begin{algorithmic}[1]
\Require task pool $\mathcal{D}$, group size $n$, target success rate $\tilde{p}$, base temperature $\tau>1$, and forgetting factor $0\leq \gamma < 1$
\State $(\tilde{s}_x, \tilde{f}_x) \gets (0, 0)$ for every $x \in \mathcal{D}$
\For{RL training step $t = 1, 2, \dots$}
    \State Sample a task batch $\mathcal{B}_t \subset \mathcal{D}$
    \For{each prompt $x \in \mathcal{B}_t$}
        \State $(a_x, b_x) \gets \bigl(2\tilde{p} + \tilde{s}_x,\; 2(1 - \tilde{p}) + \tilde{f}_x\bigr)$
        \State Draw $\hat{p}_x \sim \mathrm{Beta}(a_x, b_x)$
        \State Set $T_x \gets \tau^{h(\hat{p}_x)}$ using Eq.~\eqref{eq:ptgs}
        \State \algparbox{Generate $n$ rollouts at temperature $T_x$ and count $s_x$ successes}
        \State $(\tilde{s}_x, \tilde{f}_x) \gets \bigl(\gamma \tilde{s}_x + s_x,\; \gamma \tilde{f}_x + n - s_x\bigr)$
    \EndFor
    \State \algparbox{Apply the chosen RL algorithm's policy update rule}
\EndFor
\end{algorithmic}
\end{algorithm}
\vspace{-\intextsep}
\end{minipage}
\par
where the base temperature $\tau > 1$ defines the spread with a range $T_x \in [1/\tau, \tau]$. With this Bayesian sequential difficulty estimator, a prompt with a lower $\hat{p}_{x}$ is heated up to $\tau$, to encourage exploratory rollouts, and a prompt with a higher $\hat{p}_{x}$ is cooled down to $1/\tau$, for exploitative rollouts. 
Although we drop this adaptive sampler at test time, intervening in the sampling distribution of training experience can improve the resulting policy's behavior after training~\citep{schaul2015prioritized,nath2025adaptive} by reinforcing the model's action sequence towards exploratory and exploitative trajectories depending on the prompt.
Note that \textit{PTGS does not require any changes to RL update itself nor bring computational overhead}, enabling flexible application to any modern RL algorithms. See Algorithm~\ref{alg:ptgs} and Figure~\ref{fig:ptgs_illustration} for description,  and Appendix~\ref{appendix:ptgs_setup} for details.

\noindent\textbf{PTGS gets a tax break: sharper sampling without narrower coverage.}
Since PTGS changes rollout sampling during RL training, we evaluate it by fine-tuning our own agents. We thus study the sharpening from a general post-trained policy to a task-specific RL policy. Following RAGEN~\citep{wang2025ragen,wang2026ragen}, we fine-tune \texttt{Qwen2.5-7B-Instruct} on Sokoban and FrozenLake with two representative RL algorithms, PPO~\citep{schulman2017proximal} and GRPO~\citep{shao2024deepseekmath}, each with and without PTGS. All runs use 200 training steps and signal-to-noise ratio reward-variance filter. We report means over five runs, using the final checkpoint for PPO and the highest validation success checkpoint in each run for GRPO. 
We adopt $\tau \in [1.2,1.5]$ and $\gamma = 0.95$, and gradually raise the target success rate $\tilde{p}$ from $0.25$ to $0.5$ during training. Appendix~\ref{appendix:ptgs_setup} elaborates details.

\begin{wraptable}{r}{0.53\textwidth}
\vspace{-1em}
\centering
\caption{\textbf{PTGS improves accuracy and coverage while reducing tax.} RL training results of Qwen2.5-7B-Instruct, averaged over five random seed runs. $^{\dagger}$Best validation success checkpoint per run; PPO uses the final checkpoint.}
\label{tab:ptgs-combined}
\vspace{-1em}
{\small\setlength{\tabcolsep}{3pt}
\renewcommand{\arraystretch}{1.1}
\begin{tabular}{@{}c|lccc}
\toprule
\multicolumn{1}{@{}c}{} & Method & pass@$1\uparrow$ & pass@$128\uparrow$ & $\text{Tax}_{S}(128)\downarrow$ \\
\midrule
\rowcolor{gray!10}
\cellcolor{white}& Base & 20.7 & 76.6 & - \\
& PPO & 46.5 & 55.0 & 0.094 \\
\rowcolor[HTML]{cbf3f0}
\cellcolor{white} & PPO w/ PTGS & \textbf{61.1} & \textbf{69.7} & \textbf{0.081} \\
& GRPO$^{\dagger}$ & 36.5 & 55.3 & 0.081 \\
\rowcolor[HTML]{cbf3f0}
\cellcolor{white}\multirow{-5}{*}{\rotatebox{90}{Sokoban}}
& GRPO$^{\dagger}$ w/ PTGS & \textbf{39.1} & \textbf{72.5} & \textbf{0.025} \\
\midrule
\rowcolor{gray!10}
\cellcolor{white}& Base & 26.3 & 89.1 & - \\
& PPO & 63.7 & 74.1 & 0.039 \\
\rowcolor[HTML]{cbf3f0}
\cellcolor{white} & PPO w/ PTGS & \textbf{65.0} & \textbf{80.0} & \textbf{0.020} \\
& GRPO$^{\dagger}$ & 63.4 & 77.8 & 0.029 \\
\rowcolor[HTML]{cbf3f0}
\cellcolor{white}\multirow{-5}{*}{\rotatebox{90}{FrozenLake}}
& GRPO$^{\dagger}$ w/ PTGS & \textbf{67.3} & \textbf{81.2} & \textbf{0.028} \\
\bottomrule
\end{tabular}}
\vspace{-0.85em}
\end{wraptable}

Table~\ref{tab:ptgs-combined} shows that task-specific RL still inherits the accuracy-coverage tension. That is, both algorithms improve accuracy over the base model while losing solution coverage. \textbf{PTGS reduces the tax and improves both pass@$1$ and pass@$128$ for two representative RL methods, PPO and GRPO, in both environments.} These joint gains suggest that some of the coverage loss can be avoided by changing how training rollouts are sampled, while retaining the accuracy gains of RL. 

Theorem~\ref{thm:ptgs-signal} in \S\ref{section:theory} offers an explanation through the training signal. If tempering raises a difficult prompt's success probability, PPO receives groups containing a success more often. Besides, rollout groups containing both successes and failures also become more likely, which is important for GRPO to induce nonzero group-relative advantages. Therefore, PTGS can keep difficult prompts contributing to learning under both algorithms, helping preserve coverage as accuracy improves. See Appendix~\ref{appendix:deferred_results} and~\ref{appendix:ptgs_tau_ablation} for further training dynamics and temperature ablations.

\noindent
\begin{minipage}[c]{0.5\textwidth}
Next, Figure~\ref{fig:ptgs-training-fl-reduced} shows the PPO training dynamics on FrozenLake. If we look at the validation success rate, the two methods look similar and reach the same value at the final checkpoint.
However, the average entropy of the output token distribution evolves very differently. Under fixed-temperature PPO, the entropy decreases monotonically, and thus the policy converges to a few action sequences. Under PTGS, in contrast, the entropy stays much higher throughout training and rises repeatedly, since PTGS heats the prompts that the policy keeps failing on. Therefore, the two runs end with the same accuracy but very different policies: PTGS keeps exploratory behavior for challenging prompts, which explains its higher coverage and smaller tax in Table~\ref{tab:ptgs-combined}. 
\end{minipage}\hfill
\begin{minipage}[c]{0.48\textwidth}
    \centering
    \includegraphics[width=\linewidth]{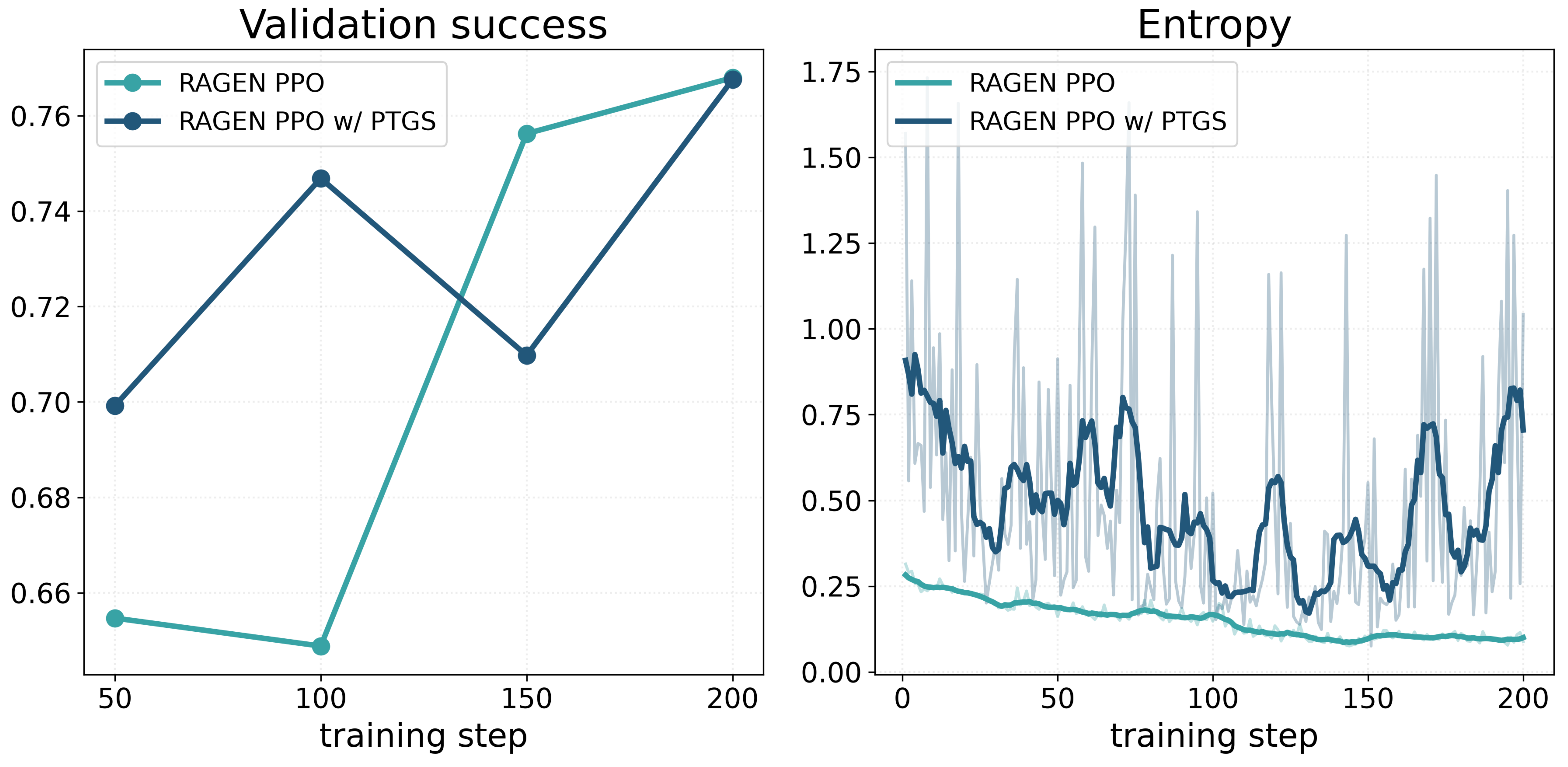}
    \vspace{-1.6em}
    \captionof{figure}{\textbf{Training dynamics on FrozenLake.} Validation success of intermediate checkpoints (left) and average entropy of the output token distribution during training for PPO and PPO with PTGS.}
    \label{fig:ptgs-training-fl-reduced}
\end{minipage}
In short, \textbf{PTGS yields a policy as accurate as fixed-temperature PPO but has a broader space of solvable problems}.

\begin{tcolorbox}[mybox]
\noindent \textbf{Takeaways.} Guided by running estimates of per-prompt difficulty, PTGS exposes the policy to diverging trajectories for hard prompts and converging trajectories for easy ones; it pays a far smaller tax while balancing exploration and exploitation to achieve high pass@1 and pass@$K$ simultaneously.
\end{tcolorbox}

\section{Theoretical Analysis}
\label{section:theory}

In this section, we provide simple theoretical analyses for a better understanding of the Sharpening Tax and PTGS. To be specific, we first explain what Sharpening Tax measures (\S\ref{section:diagnosis}) in Proposition~\ref{thm:redeemed-depth}, then show why the bimodalization of post-training (observed in \S\ref{section:observation}) charges it in Theorem~\ref{thm:proportional-tax}. Finally, we explain how the prompt-adaptive temperature scaling of PTGS improves the rollout groups that RL learns from (\S\ref{section:solution}) in Theorem~\ref{thm:ptgs-signal}. See Appendix~\ref{appendix:theory} for all the deferred proofs.

\noindent\textbf{What the tax measures.}
The raw scalability $A(K)$ of Eq.~\eqref{eq:raw_scalability}, defined as the area under the pass@$K$ ceiling, has a clear probabilistic interpretation---the expected number of failed attempts followed by a success within the budget---connected to classical waiting-time and survival models~\citep{sheps1964time,Berkson1952SurvivalCF}.

\begin{restatable}[Scalability as expected failures before the first success]{proposition}{propredeemed}
\label{thm:redeemed-depth}
Suppose we draw a task randomly from a dataset and let $H$ be the index of its first successful rollout ($H=\infty$ if no rollout succeeds), so that $\Prob(H\le k)=\text{pass}@k$, taken over both the task and its rollouts. Then, given the test-time rollout budget $K$,
\begin{equation}
A(K)=\E\!\left[(H-1)\,\mathbf{1}\{H\le K\}\right],
\qquad
S(K)
=\E\!\left[
\tfrac{H-1}{K-1}\,\mathbf{1}\{H\le K\}
\;\middle|\; H>1
\right]\in[0,1],
\label{eq:redeemed-depth}
\end{equation}
where the second identity holds whenever $\text{pass}@1<1$.
\end{restatable}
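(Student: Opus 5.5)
The plan is a layer-cake (tail-sum) computation for $A(K)$, after which $S(K)$ follows by dividing by the right normalizer. I work throughout with the population quantities $\text{pass}@k=\Prob(H\le k)$ as in the statement, with the probability taken jointly over the random task and its i.i.d.\ rollouts. This law is consistent across $k$: for a fixed task with success probability $p$, $\Prob(H\le k\mid p)=1-(1-p)^k$, and averaging over tasks gives the population pass@$k$. The unbiased estimators are then read in expectation.

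\textbf{Identity for $A(K)$.} Rewrite each summand of Eq.~\eqref{eq:raw_scalability} as
\[
\text{pass}@K-\text{pass}@k=\Prob(k<H\le K)=\E\bigl[\mathbf{1}\{k<H\le K\}\bigr].
\]
Summing over $k=1,\dots,K-1$ and exchanging the finite sum with the expectation gives
\[
A(K)=\E\Bigl[\sum_{k=1}^{K-1}\mathbf{1}\{k<H\}\,\mathbf{1}\{H\le K\}\Bigr].
\]
On the event $\{H\le K\}$ the inner count is $\#\{k\in\{1,\dots,K-1\}:k<H\}=H-1$. Off that event the indicator is zero, which also disposes of the case $H=\infty$. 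This yields $A(K)=\E[(H-1)\mathbf{1}\{H\le K\}]$.

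\textbf{Identity for $S(K)$.} By definition $S(K)=A(K)/\bigl((K-1)(1-\text{pass}@1)\bigr)$, and $1-\text{pass}@1=\Prob(H>1)$, which is positive by assumption. The random variable $(H-1)\mathbf{1}\{H\le K\}$ vanishes on $\{H=1\}$, so
\[
A(K)=\E\bigl[(H-1)\mathbf{1}\{H\le K\}\,\mathbf{1}\{H>1\}\bigr].
\]
Dividing by $\Prob(H>1)$ turns this into the conditional expectation given $H>1$, and dividing further by $K-1$ gives the stated formula.

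\textbf{Range $[0,1]$.} On $\{H\le K\}$ we have $0\le (H-1)/(K-1)\le 1$, and the integrand is $0$ elsewhere. Any conditional expectation of a quantity lying in $[0,1]$ also lies in $[0,1]$.

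\textbf{Main obstacle.} The only delicate step is justifying $\text{pass}@k=\Prob(H\le k)$ as a single well-defined law across all $k$, so that the differences in Eq.~\eqref{eq:raw_scalability} are tail probabilities of one random variable. I handle this by conditioning on the task, as described above. Everything else is a finite interchange of sum and expectation.
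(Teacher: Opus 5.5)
Your proof is correct and is essentially the paper's argument. Like the paper, you write each gap $\text{pass}@K-\text{pass}@k$ as $\Prob(k<H\le K)$ and interchange two finite sums; you do it with indicators inside the expectation, where the paper collects $\sum_{h}(h-1)\Prob(H=h)$. You then obtain $S(K)$ by dividing by $(K-1)\Prob(H>1)$, using that the integrand vanishes on $\{H=1\}$. Your "main obstacle" is in fact supplied by the hypothesis $\Prob(H\le k)=\text{pass}@k$. The paper also notes a sharper finite-sample point: taking $H$ as the first success in a uniformly random ordering of the observed rollouts makes the identity hold exactly for the reported estimates, not only in expectation, which matters because $S(K)$ is a ratio.
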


Proposition~\ref{thm:redeemed-depth} states that a task solved on the very first try or a task never solved within $K$ attempts has zero contribution to $A(K)$. Only the task first solved on the $h$-th attempt adds $h-1$ to $A(K)$. Thus, $A(K)$ measures \textit{how much of a policy's success is obtained through retries}, with a late first success contributing more than an early one, and $S(K)$ is the same quantity restricted to the tasks that fail on the first try and normalized to $[0,1]$. In summary, a positive tax implies that the post-trained policy's successes rely less on retries, either because its failed attempts are less recovered within the budget (lost coverage) or because they are recovered after fewer retries (faster saturation)\footnote{Corollary~\ref{cor:two-metric} in Appendix~\ref{appendix:two_metric} explicitly characterizes two effects by decomposing the tax.}, thereby reflecting the value of additional rollouts regardless of outcomes, as we intended.

\noindent\textbf{Why sharpening charges a tax.}
We now describe each task $x$ with its per-rollout success probability $p_x \in [0,1]$ under a given policy and assume that repeated rollouts are conditionally independent given $x$. Then $\text{pass}@k = \E_x[1-(1-p_x)^k]$, $\text{pass}^k = \E_x[p_x^k]$, and Eq.~\eqref{eq:raw_scalability} becomes $A(K) = \E_x[a_K(p_x)]$ with the per-task raw scalability, $a_K(p) = \sum_{k=1}^{K-1}[(1-p)^{k}-(1-p)^{K}]$,
which is positive for every $p \in (0,1)$ and vanishes at both extremes, $a_K(0) = a_K(1) = 0$. Under the notation of Proposition~\ref{thm:redeemed-depth}, a task at either extreme has $H=1$ or $H=\infty$, so retries have no value. The bimodalization effect of post-training moves tasks toward exactly these two zeros. We model its limiting case, in which sharpened tasks collapse fully to the extremes, and identify how much test-time scalability it removes.

\begin{restatable}[Collapse to the extremes charges a proportional tax]{theorem}{thmtax}
\label{thm:proportional-tax}
Suppose that post-training sharpens each task with probability $\lambda \in [0,1]$, independently of the base model's success probability $p_x$, replacing $p_x$ by either $0$ or $1$ and leaving it unchanged otherwise, and let $\lambda_0 \le \lambda$ be the proportion of tasks sharpened to $0$. Then, for every budget $K \ge 2$,
\begin{equation}
\text{Tax}_A(K) = \lambda\, A_{\mathrm{Base}}(K) \;\ge\; 0,
\qquad
\frac{S_{\mathrm{Post}}(K)}{S_{\mathrm{Base}}(K)} = \frac{(1-\lambda)\,\epsilon}{(1-\lambda)\,\epsilon + \lambda_0},
\label{eq:proportional-tax}
\end{equation}
where $\epsilon = 1 - \text{pass}@1_{\mathrm{Base}}$, and the second identity holds whenever $A_{\mathrm{Base}}(K) > 0$ and $\text{pass}@1_{\mathrm{Post}} < 1$.
\end{restatable}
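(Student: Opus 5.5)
The plan is to work entirely at the level of per-task success probabilities, using the representation $A(K)=\E_x[a_K(p_x)]$ established just before the statement. The post-trained success probability of task $x$ is modeled as a mixture: with probability $1-\lambda$ it equals $p_x$, with probability $\lambda_0$ it equals $0$, and with probability $\lambda-\lambda_0$ it equals $1$. The sharpening event is independent of $p_x$, and we take this to mean that both the choice to sharpen and the choice of $0$ versus $1$ are independent of $p_x$. Then $\E_x[a_K(p^{\mathrm{Post}}_x)]$ splits into three terms. The two collapsed terms vanish because $a_K(0)=a_K(1)=0$. The unchanged term, by independence, factorizes as $(1-\lambda)\E_x[a_K(p_x)]=(1-\lambda)A_{\mathrm{Base}}(K)$. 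Hence $A_{\mathrm{Post}}(K)=(1-\lambda)A_{\mathrm{Base}}(K)$ and $\text{Tax}_A(K)=\lambda A_{\mathrm{Base}}(K)$. Nonnegativity follows because $a_K(p)\ge 0$ on $[0,1]$, as each summand $(1-p)^k-(1-p)^K$ is nonnegative for $k<K$.

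For the ratio of calibrated scalabilities, we use $S(K)=A(K)/[(K-1)(1-\text{pass}@1)]$. The factor $K-1$ cancels, so it suffices to compute $1-\text{pass}@1_{\mathrm{Post}}=\E_x[1-p^{\mathrm{Post}}_x]$ with the same three-way split. The unchanged tasks contribute $(1-\lambda)\E_x[1-p_x]=(1-\lambda)\epsilon$, again by independence. The tasks sharpened to $0$ contribute $\lambda_0\cdot 1$, and those sharpened to $1$ contribute nothing. Therefore
\[
\frac{S_{\mathrm{Post}}(K)}{S_{\mathrm{Base}}(K)}=\frac{(1-\lambda)A_{\mathrm{Base}}(K)}{A_{\mathrm{Base}}(K)}\cdot\frac{\epsilon}{(1-\lambda)\epsilon+\lambda_0},
\]
which is the claimed expression. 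The hypotheses $A_{\mathrm{Base}}(K)>0$ and $\text{pass}@1_{\mathrm{Post}}<1$ are exactly what make both $S$ values well defined and the ratio meaningful. Note that $A_{\mathrm{Base}}>0$ forces $\epsilon>0$, since $\epsilon=0$ would give $p_x=1$ almost surely and hence $a_K\equiv 0$.

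The main obstacle is stating the independence assumption precisely. The factorization $\E[\mathbf 1\{\text{unchanged}\}\,a_K(p_x)]=(1-\lambda)\E[a_K(p_x)]$ needs only that the unchanged event is independent of $p_x$. The $S$ ratio, however, also needs the event ``sharpened to $0$'' to have probability $\lambda_0$ irrespective of $p_x$; if collapse direction depended on $p_x$, it would change nothing in the vanishing $a_K$ terms but would change nothing for the pass@1 computation either, since those tasks contribute exactly $1$ or $0$ to $1-p$. So in fact only the sharpening indicator needs independence, and $\lambda_0$ can be read as the marginal proportion. I will make this explicit, so that the proof is a clean two-line expectation computation in each part.
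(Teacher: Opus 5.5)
Your proposal is correct and follows the paper's proof essentially step for step: you split $\E_x[a_K(p_x^{\mathrm{Post}})]$ three ways, use $a_K(0)=a_K(1)=0$ and independence of the sharpening event to get $A_{\mathrm{Post}}(K)=(1-\lambda)A_{\mathrm{Base}}(K)$, then obtain $1-\text{pass}@1_{\mathrm{Post}}=(1-\lambda)\epsilon+\lambda_0$ and cancel the factor $K-1$. Your closing observation, that only the sharpening indicator and not the choice of extreme needs to be independent of $p_x$, is the same remark the paper makes at the end of its proof.
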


The first identity says that sharpening reduces scalability regardless of which extreme a task moves to. A task moved to $p=1$ succeeds on its first attempt, and a task moved to $p=0$ fails on every attempt; in both cases, retries can no longer turn failures into successes. The post-trained policy thus keeps a $(1-\lambda)$ proportion of the base model's scalability, and the raw tax, $\text{Tax}_{A}(K)$, is strictly positive whenever $\lambda > 0$ and $A_{\mathrm{Base}}(K) > 0$. Besides, the second identity says that a higher first-attempt accuracy does not remove the calibrated tax. The ratio is below one, so $\text{Tax}_{S}(K) > 0$ whenever $\lambda_0 > 0$, since post-training keeps only a $(1-\lambda)$ proportion of the base model's scalability but more than a $(1-\lambda)$ proportion of its failures (including tasks sharpened to $p=0$). Meanwhile, post-training improves pass@$1$ when enough of the sharpened tasks move to $p=1$. Therefore, a post-trained policy looks strictly better at $K{=}1$ and still pays the tax, consistent with the higher pass@$1$ (Figure~\ref{fig:observation-tts}) and positive $\text{Tax}_{S}$ (Figure~\ref{fig:sharpeningtax}) we observe for large backbones.

\noindent\textbf{How PTGS improves group-based RL.}
Our PTGS cools prompts that the policy has already mastered, locking in their drift toward $p=1$ for more consistent success, but heats the prompts that the policy keeps failing, raising zero-success prompts up to the intermediate zone, where retries can turn failures into successes. 
We argue that this adaptive heating mechanism enhances the learning signal during modern RL post-training.
Since modern RL pipelines usually sample a group of $n \ge 2$ rollouts per prompt, we analyze the signal coming from the rollout group per prompt. Group-based RL methods such as GRPO and its variants~\citep{shao2024deepseekmath,yu2025dapo,liu2025understanding,chu2026gpg} need multiple rollouts by design, and even PPO, which in principle needs only one rollout per prompt, often uses multiple rollouts in practice for cheaper and lower-variance advantage estimates~\citep{kazemnejad25vineppo,wang2025ragen,wang2026ragen,hu2025open}. How much a group contributes to learning depends on two events: (1) an actor-critic learner such as PPO still updates on an all-failure group via its critic, but can reinforce a success only if the group has one, and (2) a group-contrast learner such as GRPO gets a nonzero advantage only from a \emph{mixed} group, i.e., a group with both a success and a failure. Theorem~\ref{thm:ptgs-signal} shows how PTGS changes the frequency of both events.

\begin{restatable}[PTGS improves rollout groups that RL learns from]{theorem}{thmptgs}
\label{thm:ptgs-signal}
Consider a prompt whose true success probability changes from $p$ to $p'$ under the temperature selected by PTGS, and let each group consist of $n \ge 2$ conditionally independent rollouts. Denote $I_n(q) = 1-(1-q)^n$ as the probability that the group contains at least one success and $J_n(q) = 1-q^n-(1-q)^n$ as the probability that it is a mixed group of successes and failures. If $p' > p$, then
\begin{equation}
\begin{aligned}
\mathrm{(i)} \quad & I_n(p') > I_n(p) && \textnormal{for all } 0 \le p < p' \le 1,\\
\mathrm{(ii)} \quad & J_n(p') > J_n(p) && \textnormal{if } p < p' \le \tfrac12.
\end{aligned}
\label{eq:ptgs-signal}
\end{equation}
\end{restatable}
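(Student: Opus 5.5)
Both claims are elementary monotonicity facts about one-variable polynomials on $[0,1]$, and I would prove each directly from the closed forms of $I_n$ and $J_n$. The hypothesis that the $n$ rollouts are conditionally independent given the prompt, with per-rollout success probability $q$, justifies these closed forms. The number of successes in a group is then $\mathrm{Bin}(n,q)$, so $\Prob(\text{no success})=(1-q)^n$ and $\Prob(\text{all succeed})=q^n$. This gives $I_n(q)=1-(1-q)^n$ and $J_n(q)=1-q^n-(1-q)^n$ at once, and the temperature change only enters through the replacement $p\mapsto p'$.

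For (i), note that $q\mapsto(1-q)^n$ is strictly decreasing on $[0,1]$ for $n\ge 1$. Indeed, $1-q$ is strictly decreasing and nonnegative there, and $t\mapsto t^n$ is strictly increasing on $[0,\infty)$. Hence for $0\le p<p'\le 1$ we have $(1-p')^n<(1-p)^n$, that is, $I_n(p')>I_n(p)$. The endpoints need no separate treatment, since strict monotonicity holds on the closed interval, including the case $p'=1$ where $(1-p')^n=0<(1-p)^n$.

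For (ii), I would show that $J_n$ is strictly increasing on $[0,\tfrac12]$. Differentiating gives
\[
J_n'(q)=n\bigl[(1-q)^{n-1}-q^{n-1}\bigr].
\]
For $n\ge 2$ and $0\le q<\tfrac12$ we have $1-q>q\ge 0$, so $(1-q)^{n-1}>q^{n-1}$ and $J_n'(q)>0$. The derivative vanishes only at the single point $q=\tfrac12$. Since $J_n'>0$ on $[0,\tfrac12)$ with at most one zero, $J_n$ is strictly increasing on $[0,\tfrac12]$, which yields $J_n(p')>J_n(p)$ for $p<p'\le\tfrac12$. The symmetry $J_n(q)=J_n(1-q)$ also explains why the restriction $p'\le\tfrac12$ is needed: beyond $\tfrac12$ the mixed-group probability decreases. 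I could remark that the argument extends to any $p<p'$ with $|p'-\tfrac12|<|p-\tfrac12|$, but the statement does not require this.

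The only mildly delicate step is the strictness at the boundary point $q=\tfrac12$ in (ii). I would handle it via the mean value theorem: for $p<p'$, pick $\xi\in(p,p')\subset[0,\tfrac12)$, so that $J_n(p')-J_n(p)=(p'-p)J_n'(\xi)>0$. Here the requirement $n\ge 2$ is essential; for $n=1$ we would have $J_1\equiv 0$. Nothing else is an obstacle, and no earlier results of the paper are needed.
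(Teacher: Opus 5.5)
Your proposal is correct and matches the paper's proof, which likewise obtains the closed forms from conditional independence, proves (i) by the strict monotonicity of $I_n$ (via $I_n'(q)=n(1-q)^{n-1}>0$), and proves (ii) from the sign of $J_n'(q)=n\bigl[(1-q)^{n-1}-q^{n-1}\bigr]$, which is positive exactly for $q<\tfrac12$. Your composition argument for (i) and the mean-value-theorem step for strictness up to $q=\tfrac12$ only make the endpoint handling explicit.
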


Parts (i) and (ii) cover the hard prompts that PTGS heats. If heating raises such a prompt's success probability, groups with at least one success become more frequent at every difficulty level, so an actor-critic learner more often has a success to reinforce (i), and mixed groups also become more frequent as long as the prompt remains hard ($p' \le \tfrac12$), so a group-contrast learner receives nonzero advantages more often (ii). Since $J_n$ peaks at $p = \tfrac12$, the theorem also motivates raising the target success rate $\tilde{p}$ toward $\tfrac12$ over training (Appendix~\ref{appendix:ptgs_setup}). For the mastered prompts that PTGS cools, Proposition~\ref{thm:ptgs-mastered} in Appendix~\ref{appendix:ptgs_signal} shows that raising the success probability turns mixed groups into all-success rather than all-failure groups.

Note that Theorem~\ref{thm:ptgs-signal} is assuming that temperature scheduling raises the success probability ($p' > p$); it does not claim that a higher temperature always helps a hard prompt. In Figure~\ref{fig:ptgs-mech-zero} in Appendix~\ref{appendix:ptgs_mech_1}, we show that PPO training with PTGS's adaptive temperature reduces the number of zero-success prompts compared to a fixed-temperature baseline, demonstrating that this assumption empirically holds.

\begin{tcolorbox}[mybox]
\noindent \textbf{Takeaways.} (1) Sharpening Tax counts the expected failed attempts until facing the first success; (2) Collapsing tasks to the two extremes through the post-training sharpening charges a tax proportional to the collapsed fraction; (3) PTGS cuts tax during training by raising a hard prompt's success probability to make groups with at least one success and groups with both success and failure more frequent, thereby inducing better learning signals that PPO and GRPO can extract from.
\end{tcolorbox}

\section{Related Work}
\label{section:literature}

\noindent\textbf{Does RL post-training expand or sharpen the reasoning boundary?}
Whether RL post-training extends the reasoning boundary of the base model is under active debate. One line of work observes that RL-tuned models lose to their base counterparts in pass@$K$ at large $K$, suggesting that post-training sharpens the policy around behaviors the base model already possesses~\citep{yue2025does,zhao2025echo,wu2025invisible}. Another line argues that the boundary can genuinely expand, e.g., through prolonged RL training~\citep{liu2025prorl} or grokking~\citep{sun2026rl}, while others attribute the outcome to properties of the training data and task distribution~\citep{zhang2025interplay,shen2026understanding,shao2026spurious}. However, the evidence on both sides comes almost entirely from mathematics and coding~\citep{yue2025does,he2025rewarding}, where the skills required for realistic agentic tasks are not assessed, and pass@$K$ can be inflated by lucky final answers~\citep{wen2026reinforcement,dragoi2025beyond}. We bring this debate to agentic environments and give it a quantitative diagnosis, Sharpening Tax, which turns manual inspection of test-time scaling visualizations into a measurable and predictable quantity.

\noindent\textbf{Preserving diversity in post-training.}
Alongside the investigation of the sharpening hypothesis of post-trained policy, preventing RL post-training from collapsing policy diversity has been a popular topic of research. Common objective-side solutions include exploration bonuses and mitigation of entropy-collapse~\citep{yu2025dapo,he2025rewarding,song2025outcome,gai2025differential} and pass@$k$-aware learning objectives~\citep{walder2025pass,tajwar2026maximum}. Another line is the sampling-side solutions adjusting the rollout distribution through temperature scheduling across training or within rollouts~\citep{yang2025let,liao2025enhancing,dang2026temperature}, or estimating power distribution~\citep{karan2026reasoning}. While belonging to this sampling-side solution, our PTGS distinguishes itself from others by pursuing difficulty-guided adaptation of temperature for each prompt via a Beta-Binomial posterior over its success rate; it requires no change to the existing RL algorithm, thereby applying to most of the modern RL pipelines on the fly.

\noindent\textbf{Test-time parallel scaling via repeated sampling.}
Repeated sampling has been established as a reliable axis of test-time compute where solution coverage grows smoothly with the number of rollouts~\citep{brown2024large}, following predictable inference-time scaling laws~\citep{wu2024inference,schaeffer2025how}, and can be gathered with verifiers or best-of-$N$ selection~\citep{christiano2017deep,stiennon2020learning,gao2023scaling,luo2024improve,snell2025scaling,kwok2026llm}, where the standard unbiased estimator of pass@$k$~\citep{chen2021evaluating} has been central to these analyses. Complementary approaches scale the reasoning budget within a single trajectory through budget forcing~\citep{muennighoff2025s1}. While this line of work treats repeated sampling as a way to gain performance, we instead repurpose it as a diagnostic tool, using the shape of the scaling curve itself rather than its endpoints to reveal what post-training did to the policy.

\section{Discussion and Conclusion}
\label{section:conclusion}
Large-scale post-training, especially via RL with verifiable rewards, has become a central step in building LLM reasoners and agents. Recently, \citet{lambert2025reinforcement} framed this step through the \textit{elicitation theory} of post-training, ``base models determine the vast majority of the potential of a final model, and post-training's job is to cultivate all of it,'' but he noted that this view may fit lighter recipes better than compute-heavy frontier RL. Agentic tasks, whose multi-turn tool use with complex environment interaction is rare in pre-training data, were a natural place to expect post-training to go beyond elicitation. The results we gain in this work are thus interesting---base models with only a light harness already solve many agentic tasks and, given enough rollouts, often reach more tasks than their post-trained versions. That is, post-training still mainly changes \textit{how reliably} a model solves tasks rather than \textit{which} tasks it can solve in the agentic domain. This cost grows with model scale, but fortunately, it is not unavoidable, as PTGS lowers the tax while improving accuracy.

We close with a simple lesson on the road to \textit{Super Intelligence}~\citep{whitehouse2026superintelligence}. Base models keep getting stronger~\citep{karan2026reasoning}, and more compute is moving to test time for search~\citep{yao2023tree}, repeated sampling~\citep{brown2024large}, and verification~\citep{kwok2026llm}. Both trends raise the price of sharpening, since a stronger base model holds more rare but high-value behaviors, and a larger test-time budget is exactly what turns them into solutions. Many challenges we hope future systems will solve, such as open scientific questions~\citep{gottweis2026accelerating} and Millennium Prize Problems~\citep{openai2026navierstokes}, may be cracked by an unusual shot that works once rather than by a routine that works every time. Post-training should thus be judged not only by its pass@$1$ but also by how much of the base model's potential it keeps, e.g., by reporting Sharpening Tax alongside accuracy. In conclusion, \textbf{reliability and reach should grow together}. Whether heavier RL truly moves beyond elicitation remains open, and base-anchored diagnostics like Sharpening Tax offer a direct way to check it as post-training continues to scale.

\section{Limitations and Future Work}
\label{section:limitation}
Note that the representative open-source models that we extensively analyze in \S\ref{section:observation} do not disclose the exact data used in pre-training and post-training, so our analysis remains observational rather than interventional. Replicating this in a controlled setup~\citep{zhao2025echo,shen2026understanding} where both data mixes are fully known would allow a causal account of when and how sharpening emerges exactly. Second, although the Sharpening Tax does not depend on the exact post-training recipe, we did not compare how different types of post-training sharpen the policy. Investigating how RL, classic supervised fine-tuning and on-policy distillation~\citep{hinton2015distilling,gu2023minillm,agarwal2024policy,lu2025onpolicy,heo2026policy} charge taxes differently is a natural next step. Third, our scope is text-based LLM agents. Whether the same accuracy-consistency-coverage trilemma holds and whether the benefits of PTGS carry over to post-training beyond pure LLMs, including multimodal LLMs and vision-language-action models, remains an open question~\citep{sun2024aligning,kwok2025robomonkey,shen2026rlvr,jeddi2026does,li2026simplevla}. 
Moreover, Sharpening Tax relies on a binary success signal; extending it to open-ended generation, where alignment likewise collapses output diversity~\citep{yuan2026forty}, would require replacing pass@$k$ with quality-gated diversity measures.
Finally, given the diverse applications of base and post-trained policy pairs spanning contrastive decoding~\citep{li2023contrastive,o2023contrastive}, model merging~\citep{wortsman2022robust,rame2024warp,oh2025dawin}, and process reward modeling~\citep{yuan2024free,oh2026neglected}, characterizing how sharpening reshapes these signals and quantities would be valuable beyond test-time scaling diagnosis.

\section*{Acknowledgments}
Changdae Oh thanks Max Khanov, Shawn Im, and Jimmy Di for their intensive proofreading and professional feedback; also thanks Leitian Tao, Samuel Yeh, and Yulin Chen, for their constructive discussion. Changdae Oh and Sharon Li are supported in part by the AFOSR Young Investigator Program under award number FA9550-23-1-0184, the Office of Naval Research (ONR) under award number N000142612508, the National Science Foundation under awards IIS-2237037 and IIS-2331669, the Alfred P. Sloan Fellowship, Open Philanthropy (now Coefficient Giving), and Schmidt Sciences Foundation.

\clearpage
\bibliographystyle{assets/plainnat}
\bibliography{paper}

\clearpage
\beginappendix
\noindent\rule{\linewidth}{0.4pt}
\par{\small\makeatletter
\renewcommand*\l@section[2]{\ifnum \c@tocdepth >\z@ \addpenalty\@secpenalty \addvspace{0.5em \@plus\p@}%
  \setlength\@tempdima{1.5em}\begingroup \parindent \z@ \rightskip \@pnumwidth \parfillskip -\@pnumwidth
  \leavevmode \bfseries \advance\leftskip\@tempdima \hskip -\leftskip #1\nobreak\hfil \nobreak\hb@xt@\@pnumwidth{\hss #2}\par
  \endgroup \fi}%
\@starttoc{toc}\makeatother\par}
\vspace{0.25em}
\noindent\rule{\linewidth}{0.4pt}
\vspace{0.5em}
\addtocontents{toc}{\protect\setcounter{tocdepth}{2}}

\clearpage

\FloatBarrier
\section{Experimental Details}
\label{appendix:setup}

In this section, we describe the settings needed to reproduce our experiments. For the sharpening analysis in \S\ref{section:observation} and \S\ref{section:diagnosis}, we describe the benchmarks (\S\ref{appendix:benchmarks}), the model pairs and serving stack (\S\ref{appendix:models}), the harness for base models (\S\ref{appendix:harness}),  metric estimation and uncertainty quantification (\S\ref{appendix:metrics}), and the pseudocode of the tax computation (\S\ref{appendix:pseudocode}). For PTGS (\S\ref{section:solution}), we describe the training and evaluation setup (\S\ref{appendix:ptgs_setup}) and provide its pseudocode (\S\ref{appendix:pseudocode}).

\subsection{Benchmarks and Environments}
\label{appendix:benchmarks}

Notably, all three benchmarks we consider are process-centric. That is, task success is determined by the environment state that the model reaches through its intermediate actions, not by matching a final answer string. Therefore, unlike some math and STEM benchmarks~\citep{wen2026reinforcement,shao2026spurious,rahman2026can}, a flawed trajectory cannot get credit from a lucky final answer.

\noindent\textbf{BFCL multi-turn.} We use the \texttt{multi\_turn\_base} category of the Berkeley Function-Calling Leaderboard v4~\citep{patil2025bfcl}. A rollout passes if and only if the native BFCL multi-turn checker returns \texttt{valid}. This checker compares the environment state produced by the tool calls of the model with the reference end state, so it also grades intermediate actions. Following BFCL, we allow at most 20 tool-call rounds per user turn. We report this single category rather than the full 800-task multi-turn suite with four categories, because the categories differ substantially in difficulty and the comparison between the base and post-trained models can flip from one category to another. Fixing one category avoids averaging over such different regimes.

\noindent\textbf{WebShop.} We evaluate on the first 500 goal instructions of WebShop~\citep{yao2022webshop} with a 1,000-product index, and each episode is limited to 30 environment actions. The model takes one action per turn, either \texttt{search[<keywords>]} or \texttt{click[<button>]}. Each observation is a text rendering of the page, with UI elements separated by \texttt{[SEP]} special token, followed by the list of available actions. Although WebShop environment produces a dense reward in $[0,1]$ that also credits partial attribute matches, we count an episode as a success only when the final reward is $1.0$, i.e., a fully matching purchase, following the common evaluation protocol~\citep{chen2026scaling}.

\noindent\textbf{ACEBench.} We use the official English/Chinese version of ACEBench~\citep{chen2025acebench} with its per-category system prompts and default verifier. We use 770-task subset that combines single-turn general tool calling and multi-step problem solving (750 tasks) with multi-turn agentic tasks (20 tasks). Due to budget constraints, we exclude the tasks that require an LLM-based user simulator.

BFCL and WebShop are widely used benchmarks of basic agentic capabilities such as multi-turn tool calling and environment interaction. ACEBench tests similar skills but is more recent and less commonly reported in model development, so it is less likely to have been optimized for during post-training of the considered model families. We also chose benchmarks on which both base and post-trained models reach non-trivial success within our budget, since pass@$k$ curves that stay near zero for both policies reveal little about sharpening.

\subsection{Model Checkpoints and Serving}
\label{appendix:models}

\noindent\textbf{Checkpoint pairs.} We use open-source base/post-trained checkpoint pairs (Table~\ref{table:model_pairs}) from four model families, Gemma-4\footnote{\url{https://huggingface.co/collections/google/gemma-4}}, Ministral-3\footnote{\url{https://huggingface.co/collections/mistralai/ministral-3}}, Qwen2.5\footnote{\url{https://huggingface.co/collections/Qwen/qwen25}}, and Qwen3.5\footnote{\url{https://huggingface.co/collections/Qwen/qwen35}}, available in huggingface collections.

\begin{table}[h]
\centering
\caption{\textbf{The 14 base/post-trained checkpoint pairs used in our analysis}, ranging from 3B to 35B parameters. \texttt{gemma-4-26B-A4B} and \texttt{Qwen3.5-35B-A3B} are mixture-of-experts models with about 4B and 3B active parameters, respectively.}
\label{table:model_pairs}
\resizebox{\textwidth}{!}{%
\begin{tabular}{@{}llll@{}}
\toprule
\# & Base checkpoint & Post-trained checkpoint & Family \\
\midrule
1 & \texttt{google/gemma-4-E4B} & \texttt{google/gemma-4-E4B-it} & Gemma-4 \\
2 & \texttt{google/gemma-4-12B} & \texttt{google/gemma-4-12B-it} & Gemma-4 \\
3 & \texttt{google/gemma-4-26B-A4B} & \texttt{google/gemma-4-26B-A4B-it} & Gemma-4 \\
4 & \texttt{google/gemma-4-31B} & \texttt{google/gemma-4-31B-it} & Gemma-4 \\
5 & \texttt{mistralai/Ministral-3-3B-Base-2512} & \texttt{mistralai/Ministral-3-3B-Instruct-2512} & Ministral-3 \\
6 & \texttt{mistralai/Ministral-3-8B-Base-2512} & \texttt{mistralai/Ministral-3-8B-Instruct-2512} & Ministral-3 \\
7 & \texttt{mistralai/Ministral-3-14B-Base-2512} & \texttt{mistralai/Ministral-3-14B-Instruct-2512} & Ministral-3 \\
8 & \texttt{Qwen/Qwen2.5-3B} & \texttt{Qwen/Qwen2.5-3B-Instruct} & Qwen2.5 \\
9 & \texttt{Qwen/Qwen2.5-7B} & \texttt{Qwen/Qwen2.5-7B-Instruct} & Qwen2.5 \\
10 & \texttt{Qwen/Qwen2.5-14B} & \texttt{Qwen/Qwen2.5-14B-Instruct} & Qwen2.5 \\
11 & \texttt{Qwen/Qwen2.5-32B} & \texttt{Qwen/Qwen2.5-32B-Instruct} & Qwen2.5 \\
12 & \texttt{Qwen/Qwen3.5-4B-Base} & \texttt{Qwen/Qwen3.5-4B} & Qwen3.5 \\
13 & \texttt{Qwen/Qwen3.5-9B-Base} & \texttt{Qwen/Qwen3.5-9B} & Qwen3.5 \\
14 & \texttt{Qwen/Qwen3.5-35B-A3B-Base} & \texttt{Qwen/Qwen3.5-35B-A3B} & Qwen3.5 \\
\bottomrule
\end{tabular}}
\vspace{-0.75em}
\end{table}

\noindent\textbf{On the term ``RL post-trained model''.} The post-training recipes of these open checkpoints are not fully disclosed, and the \texttt{-it}/\texttt{-Instruct} models may combine SFT, preference optimization, and RL with verifiable rewards in unknown proportions. However, our analysis does not depend on the exact recipe. SFT and DPO also sharpen the policy distribution toward preferred behaviors~\citep{huang2025self}; even knowledge distillation does the same~\citep{cha2025knowledge}. Here, Sharpening Tax measures the effect of post-training, i.e., the loss of test-time scalability relative to the base checkpoint, rather than its mechanism. Therefore, we use ``RL post-trained model'' as a shorthand for all post-trained models, reflecting that RL has become the central and most compute-intensive stage of modern post-training pipelines~\citep{guo2025deepseek,olmo2025olmo,blakeman2025nemotron,microsoft2026mai}.

\noindent\textbf{Inference configurations.} All models are served with vLLM 0.19.0~\citep{kwon2023efficient}, except \texttt{gemma-4-12B} for vLLM 0.23. Base models receive a raw text prompt without any chat template, whereas post-trained models are queried with the native chat template and tool-call parser of each family (e.g., \texttt{gemma4} for Gemma-4 and \texttt{mistral} for Ministral-3).
Meanwhile, sampling parameters are fixed for each benchmark and shared by base and post-trained models, without per-model tuning (Table~\ref{table:decoding}). We use temperature 0.7 for WebShop and ACEBench and 0.4 for BFCL, since the structured tool calls on BFCL were parsed less reliably at 0.7.
Some post-trained models support an optional thinking mode, while others do not. For a fair comparison across model families, we turn it off for all post-trained models that support it by passing \texttt{chat\_template\_kwargs = \{"enable\_thinking": false\}} in every chat-completion request. Another reason is that our budget counts rollouts, not tokens. With thinking mode, post-trained models would use extra test-time compute in each rollout, which would bias the coverage comparison against the base model. We also confirmed that this choice does not change our conclusion. As shown in Figure~\ref{fig:thinkingmode}, although turning on the thinking mode improves pass@$k$ of post-trained models, the overall trend stays the same, i.e., base models still catch up with the post-trained ones at large $k$, while the post-trained models still pay a positive tax within the considered budget range.

\begin{figure}[h]
    \centering
    \includegraphics[width=\linewidth]{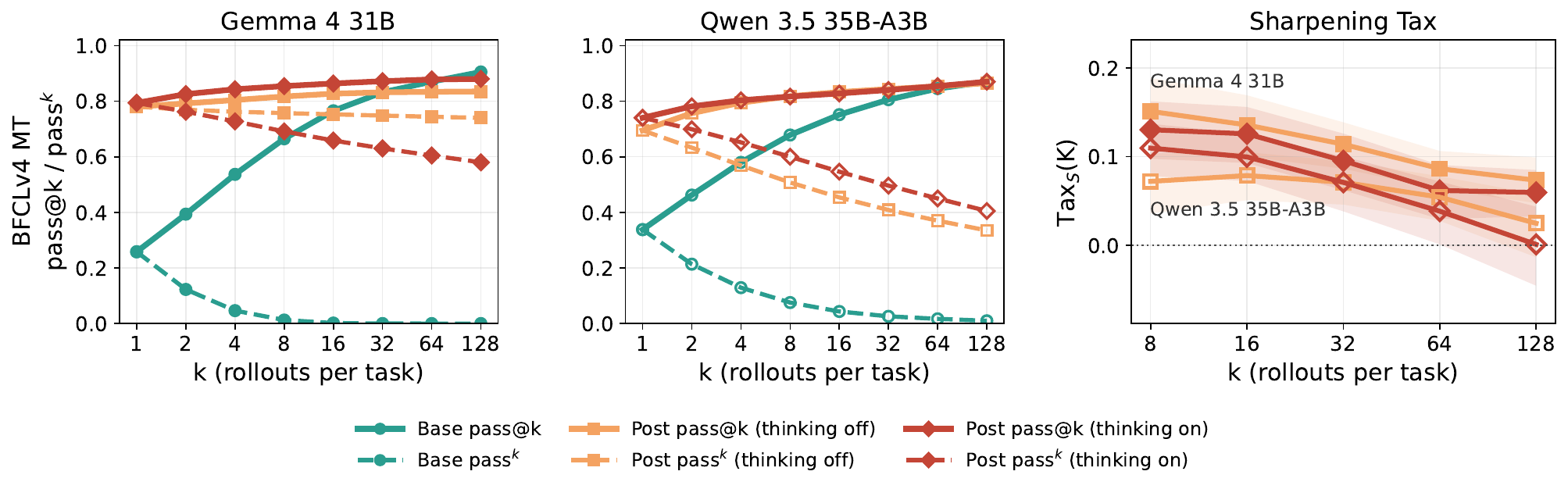}
    \caption{\textbf{Ablation on thinking mode.} Coverage (pass@$k$) and consistency ($\text{pass}^k$) curves of the base and post-trained models of \texttt{gemma-4-31B} (left) and \texttt{Qwen3.5-35B-A3B} (middle) on BFCLv4 MT, with the thinking mode of the post-trained model turned off (our default setup) and on, and the corresponding $\text{Tax}_{S}(k)$ (right), as a function of the rollout budget $k$ per task. Thinking mode improves the accuracy and coverage of the post-trained models, but it still doesn't change the bold conclusion---base models still catch up at large $k$, and the tax remains non-negative.}
    \label{fig:thinkingmode}
\end{figure}

\subsection{Harness Design for Base Models}
\label{appendix:harness}

\begin{tcolorbox}[enhanced, colback=harnessblue, colframe=harnessframe,
  boxrule=0.6pt, arc=1.5mm, left=2mm, right=2mm, top=1mm, bottom=1mm]
{\footnotesize
\begin{verbatim}
You are a helpful assistant that can call tools.

# Available tools
```python
def get_weather(
    city: str, unit: Literal["celsius", "fahrenheit"] = "celsius"
) -> dict:
    """Get the current weather for a city.

    Args:
        city (str, required): City name.
        unit (Literal["celsius", "fahrenheit"], optional):
            Temperature unit.
            choices: ['celsius', 'fahrenheit']
    """
    ...
```

# How to call tools
To call one or more tools, emit a fenced ```tool_call``` block containing a JSON array of calls. 
Each call is an object {"name": <tool_name>, "arguments": {<param>: <value>, ...}} 
whose argument keys match the function parameters above. Format:
```tool_call
[{"name": "<tool_name>", "arguments": {"<param>": "<value>"}}]
```
After the block, stop -- the result will be provided under "### Tool Output". 
You may place several calls in the array to call tools in parallel. 
When the task is complete, reply in plain text with no ```tool_call``` block.

### User
What's the weather in Palo Alto?

### Assistant
```tool_call
[{"name": "get_weather", "arguments": {"city": "Palo Alto"}}]
```

### Tool Output
[get_weather] -> {"temp_c": 21, "sky": "sunny"}

### Assistant
\end{verbatim}
}
\end{tcolorbox}

A pre-trained base checkpoint has no chat template and no tool-call format, but the default benchmark runners expect the OpenAI-style tooling interface with \texttt{messages} and \texttt{tools} arguments. Our harness serves as an adapter between the two. It renders the conversation into a plain-text prompt for the completion endpoint and parses the free-text completion back into structured tool calls. Since our goal is not state-of-the-art performance of base models, we use \textbf{a simple, lightweight universal harness for every base model and every benchmark}, without per-model presets, per-family calibration, or few-shot demonstrations. The prompt explains the tool-call format with placeholder names but never shows a solved task, so the harness does not leak any task information.

\noindent\textbf{Prompt construction.} The prompt consists of five parts in the following order. (1) The system prompt of the benchmark. (2) The tool catalog, where each tool is written as a Python function signature with a short docstring that describes each parameter, its type, and whether it is required, similar to \cite{yang2024swe} and \cite{wang24executable}. (3) An instruction on how to call tools, i.e., by writing a fenced \texttt{tool\_call} block that contains a JSON array of calls. (4) The conversation so far, under the section markers \texttt{\#\#\# User}, \texttt{\#\#\# Assistant}, and \texttt{\#\#\# Tool Output}. (5) A trailing \texttt{\#\#\# Assistant} header, so that the completion of the model starts inside its own turn. We mix the two formats (Python and JSON) on purpose. In preliminary experiments, we found that base models read tool definitions best as Python function signatures but write JSON most reliably. Therefore, we let the model read Python and write JSON, and keep the two consistent by using the parameter names of the function signature as the argument keys of the JSON call. The colored box above shows the exact prompt text that the harness produces for a simple episode with one tool, one user request, and one completed tool-call round with long lines wrapped for display.

\noindent\textbf{Stop sequences and episode termination.} Unlike an instruction-tuned model, a base model has no special token that marks the end of its turn. Therefore, unless equipped with proper guardrails, it would keep writing and eventually invent the environment's response on its own. We therefore stop generation with three stop sequences. The first two, \texttt{"\#\#\# Tool Output"} and \texttt{"\#\#\# User"}, cut the model off the moment it starts writing a section that belongs to the environment or the user. The third, \verb|"\n```"|, stops the model right after it closes its first tool-call block. Here, well-behaved models already stop there on their own, so this stop only guards against a rare failure mode in which a model repeats the same block until the token limit. For the three Qwen3.5 base models on BFCL, which often write a short prose preamble before the block, the third stop is \verb|"\n```\n"| instead, since \verb|"\n```"| would also match the \emph{opening} fence after the preamble and end the turn before any call is written. To determine episode termination, we simply take any reply without a \texttt{tool\_call} block as the model's final answer and end the episode there.

\noindent\textbf{Asymmetric scaffolding.} Throughout the paper, we evaluate base models with the harness and post-trained models with the default scaffolding of each benchmark, so that each checkpoint is used in the way it would be deployed. Table~\ref{table:harness_ablation_full} gives the full ablation. For base models, the harness is essential on BFCL, where six of the eight base checkpoints score zero without it. The two Qwen3.5 base models are the exception, since they already produce parseable native tool calls without the harness. For them, the harness helps the 35B-A3B model but hurts the 4B model on BFCL. On ACEBench, the harness leaves the base models largely unchanged. For post-trained models, the harness lowers performance on average, especially pass@$1$, which is why we keep their default scaffolding.

\noindent\textbf{Harness ablation protocol.} Since a base checkpoint has no chat template, we define the ``w/o harness'' setting of Tables~\ref{table:harness_ablation} and~\ref{table:harness_ablation_full} in the following way. We serve the base checkpoint with the chat template of its post-trained counterpart, enable the native tool-call parser, and send the requests through the same chat-completion endpoint as for the post-trained models. In this ablation, only the prompt and parsing interface change, while the weights, tasks, scoring, seed, rollout budget ($N=32$), and decoding parameters of each benchmark are kept fixed.

\begin{table}[t]
\centering
\caption{\textbf{Harness ablation for base and post-trained models.} pass@$1$ and pass@$32$ (\%) of eight backbones with (w/) and without (w/o) the harness, using $N=32$ rollouts per task and the same tasks, seed, temperature, and step budget in all four settings. $\Delta$p@32 is the pass@$32$ gain from the harness (w/ minus w/o), and the larger value of each w/o--w/ pair is in bold. For base models, ``w/o harness'' serves the checkpoint with the chat template and native tool-call parser of its post-trained counterpart. For post-trained models, ``w/ harness'' sends the plain-text harness prompt through the completion endpoint, without the chat template or the native parser.}
\label{table:harness_ablation_full}
\resizebox{\textwidth}{!}{%
\begin{tabular}{@{}ll cc cc c cc cc c@{}}
\toprule
& & \multicolumn{5}{c}{Base model} & \multicolumn{5}{c}{Post-trained model} \\
\cmidrule(lr){3-7}\cmidrule(l){8-12}
& & \multicolumn{2}{c}{w/o harness} & \multicolumn{2}{c}{w/ harness} & & \multicolumn{2}{c}{w/o harness} & \multicolumn{2}{c}{w/ harness} & \\
\cmidrule(lr){3-4}\cmidrule(lr){5-6}\cmidrule(lr){8-9}\cmidrule(lr){10-11}
Backbone & Benchmark & p@1 & p@32 & p@1 & p@32 & $\Delta$p@32 & p@1 & p@32 & p@1 & p@32 & $\Delta$p@32 \\
\midrule
\multirow{3}{*}{\texttt{gemma-4-E4B}} & BFCL & 0.0 & 0.0 & \textbf{2.2} & \textbf{21.0} & +21.0 & \textbf{23.7} & \textbf{38.0} & 12.0 & 21.0 & -17.0 \\
 & WebShop & \textbf{1.4} & \textbf{16.8} & 0.9 & 10.2 & -6.6 & \textbf{32.4} & \textbf{56.2} & 4.1 & 22.8 & -33.4 \\
 & ACEBench & 23.8 & 77.6 & \textbf{30.8} & \textbf{77.7} & +0.1 & \textbf{81.4} & \textbf{89.1} & 34.0 & 68.1 & -20.9 \\
\midrule
\multirow{3}{*}{\texttt{gemma-4-31B}} & BFCL & 0.0 & 0.0 & \textbf{32.8} & \textbf{86.5} & +86.5 & \textbf{78.0} & \textbf{83.0} & 7.5 & 21.5 & -61.5 \\
 & WebShop & 3.3 & 38.8 & \textbf{18.2} & \textbf{78.4} & +39.6 & \textbf{40.3} & 52.0 & 20.2 & \textbf{54.4} & +2.4 \\
 & ACEBench & 35.8 & \textbf{96.9} & \textbf{54.8} & 92.0 & -4.9 & \textbf{89.7} & \textbf{91.5} & 57.0 & 87.1 & -4.4 \\
\midrule
\multirow{3}{*}{\texttt{Ministral-3-8B}} & BFCL & 0.0 & 0.0 & \textbf{8.3} & \textbf{40.0} & +40.0 & \textbf{31.7} & \textbf{59.0} & 7.8 & 26.0 & -33.0 \\
 & WebShop & 1.5 & 27.6 & \textbf{6.1} & \textbf{51.4} & +23.8 & \textbf{14.7} & \textbf{44.8} & 11.9 & 31.4 & -13.4 \\
 & ACEBench & \textbf{47.0} & \textbf{93.9} & 38.7 & 88.0 & -5.9 & \textbf{74.9} & \textbf{90.0} & 56.8 & 89.1 & -0.9 \\
\midrule
\multirow{3}{*}{\texttt{Ministral-3-14B}} & BFCL & 0.0 & 0.0 & \textbf{13.2} & \textbf{47.0} & +47.0 & \textbf{33.6} & \textbf{61.5} & 17.2 & 40.5 & -21.0 \\
 & WebShop & 4.8 & 45.4 & \textbf{8.4} & \textbf{53.4} & +8.0 & 28.5 & 66.2 & \textbf{30.1} & \textbf{67.0} & +0.8 \\
 & ACEBench & \textbf{56.3} & \textbf{94.0} & 46.1 & 92.5 & -1.5 & \textbf{78.9} & 92.4 & 70.1 & \textbf{92.9} & +0.5 \\
\midrule
\multirow{3}{*}{\texttt{Qwen2.5-3B}} & BFCL & 0.0 & 0.0 & \textbf{0.8} & \textbf{6.3} & +6.3 & \textbf{11.5} & \textbf{27.5} & 2.6 & 8.5 & -19.0 \\
 & WebShop & 0.0 & 1.2 & \textbf{0.1} & \textbf{2.5} & +1.3 & 3.1 & 9.4 & \textbf{5.9} & \textbf{21.8} & +12.4 \\
 & ACEBench & 15.7 & 70.4 & \textbf{21.3} & \textbf{70.7} & +0.3 & \textbf{48.4} & \textbf{67.5} & 37.1 & 65.6 & -1.9 \\
\midrule
\multirow{3}{*}{\texttt{Qwen2.5-32B}} & BFCL & 0.0 & 0.0 & \textbf{19.1} & \textbf{59.6} & +59.6 & \textbf{35.5} & \textbf{59.0} & 13.4 & 22.0 & -37.0 \\
 & WebShop & 8.9 & 61.4 & \textbf{20.6} & \textbf{74.7} & +13.3 & 33.8 & 54.2 & \textbf{34.6} & \textbf{57.8} & +3.6 \\
 & ACEBench & 47.0 & \textbf{95.2} & \textbf{51.3} & 90.4 & -4.8 & 81.8 & 87.6 & \textbf{83.2} & \textbf{93.7} & +6.1 \\
\midrule
\multirow{3}{*}{\texttt{Qwen3.5-4B}} & BFCL & \textbf{25.9} & \textbf{60.5} & 14.9 & 52.0 & -8.5 & \textbf{58.9} & \textbf{83.5} & 21.7 & 69.0 & -14.5 \\
 & WebShop & \textbf{7.1} & 50.6 & 6.3 & \textbf{51.0} & +0.4 & \textbf{32.1} & 61.6 & 21.7 & \textbf{64.0} & +2.4 \\
 & ACEBench & \textbf{58.3} & 88.5 & 45.4 & \textbf{93.5} & +5.0 & \textbf{71.6} & 86.9 & 45.3 & \textbf{90.5} & +3.6 \\
\midrule
\multirow{3}{*}{\texttt{Qwen3.5-35B-A3B}} & BFCL & 18.9 & 61.0 & \textbf{33.8} & \textbf{80.5} & +19.5 & \textbf{69.8} & 84.0 & 48.5 & \textbf{88.0} & +4.0 \\
 & WebShop & \textbf{23.5} & \textbf{74.0} & 17.2 & 71.7 & -2.3 & \textbf{32.9} & \textbf{69.6} & 17.6 & 66.2 & -3.4 \\
 & ACEBench & \textbf{69.9} & \textbf{97.5} & 52.6 & 96.6 & -0.8 & \textbf{83.1} & \textbf{94.4} & 42.8 & 87.7 & -6.7 \\
\midrule\midrule
\multirow{3}{*}{Average (8)} & BFCL & 5.6 & 15.2 & \textbf{15.6} & \textbf{49.1} & +33.9 & \textbf{42.8} & \textbf{61.9} & 16.3 & 37.1 & -24.9 \\
 & WebShop & 6.3 & 39.5 & \textbf{9.7} & \textbf{49.2} & +9.7 & \textbf{27.2} & \textbf{51.8} & 18.3 & 48.2 & -3.6 \\
 & ACEBench & \textbf{44.2} & \textbf{89.2} & 42.6 & 87.7 & -1.6 & \textbf{76.2} & \textbf{87.4} & 53.3 & 84.4 & -3.1 \\
\bottomrule
\end{tabular}}
\end{table}

\begin{table}[h]
\centering
\caption{\textbf{Decoding configuration for each benchmark}, shared by the base and post-trained models of all 14 checkpoint pairs. Every model-benchmark combination is evaluated with $N=128$ rollouts per task.}
\label{table:decoding}
{\small
\begin{tabular}{@{}lccc@{}}
\toprule
 & BFCLv4 multi-turn & WebShop & ACEBench \\
\midrule
temperature & 0.4 & 0.7 & 0.7 \\
top\_$p$ & 0.95 & 0.95 & 0.95 \\
max tokens per generation & 1024 & 256 & 1200 \\
context length & 16384 / 32768 & 16384 & 16384 / 32768 \\
rollouts per task ($N$) & 128 & 128 & 128 \\
reported $K$ grid of pass@$K$ & \multicolumn{3}{c}{$1, 2, 4, 8, 16, 32, 64, 128$} \\
\bottomrule
\end{tabular}}
\end{table}

\subsection{Metrics and Uncertainty Estimation}
\label{appendix:metrics}

\noindent\textbf{Estimation.} We compute all metrics of \S\ref{section:prelim} for each task from its success count $c_i$ out of $n_i$ rollouts with the unbiased estimators of \cite{chen2021evaluating}, and then average them over tasks with equal weight. For numerical stability, we implement the estimators with cumulative products instead of binomial coefficients. Each model uses all of its own $N$ rollouts.

\noindent\textbf{Bootstrap confidence intervals.} The 95\% bands in Figure~\ref{fig:sharpeningtax} are percentile intervals from a paired task bootstrap with 1,000 resamples. In each resample, we draw tasks with replacement and use the same draw for both models, so that the interval of the paired difference accounts for the correlation of task difficulty between the two models. We resample tasks rather than rollouts because we want to know whether the tax would remain under a different set of evaluation tasks. Note that the estimate of pass@$K$ at $K=N$ reduces to a single per-task indicator of whether any rollout succeeded, without averaging over subsets, so the last point of each scaling curve is a noisier estimate.

\subsection{PTGS Training and Evaluation Setup}
\label{appendix:ptgs_setup}
\label{appendix:ptgs_setup_setup}

\noindent\textbf{Model and environments.} All PTGS experiments fine-tune \texttt{Qwen2.5-7B-Instruct} in the RAGEN framework~\citep{wang2025ragen,wang2026ragen} on two interactive environments. FrozenLake uses the slippery variant (each move goes in the intended direction with probability 0.8 and otherwise slips to one of the two perpendicular directions with probability 0.1 each) with a binary success reward. Sokoban uses a $6\times6$ grid with one box with a shaped reward, and an episode counts as a success once the box is on its target. Each episode allows at most 5 turns with up to 2 actions per turn.

\noindent\textbf{Training setup.} We train PPO and GRPO, each with and without PTGS, for 200 StarPO steps~\citep{wang2025ragen}. Each step samples 8 tasks from a fixed pool of 200 tasks and generates 16 rollouts per task, i.e., 128 trajectories per step. All methods use AdamW~\citep{loshchilov2017decoupled} with a constant actor learning rate of $10^{-6}$, DAPO-style~\citep{yu2025dapo} asymmetric clip ratios of $0.2$ (lower) and $0.28$ (upper), token-mean loss aggregation, no KL penalty, and a minibatch size of 32 with one epoch per step. The StarPO-S version 2 reward-variance filter~\citep{wang2026ragen} keeps the top 90\% of prompt groups ranked by reward variance. Both baselines sample rollouts ancestrally at a fixed temperature $T_{\mathrm{ref}} = 1.0$, and PTGS replaces this fixed temperature with a per-prompt temperature.

PPO uses generalized advantage estimation (GAE;~\cite{schulman2015high}) with both the discount factor and the GAE parameter set to $1$, a critic learning rate of $10^{-5}$, and an entropy coefficient of $0.001$. GRPO~\citep{shao2024deepseekmath} uses group-relative advantages without a critic and an entropy coefficient of $0$. Within each algorithm, the baseline and PTGS use the same recipe except for the rollout temperature and the matching temperature used to compute the policy log-probabilities.

\noindent\textbf{PTGS configuration.} For PPO, we use the same PTGS setting in both environments, with $\tau = 1.5$ (so $T_x \in [0.667, 1.5]$) and a target success rate $\tilde{p}$ that grows from $0.25$ to $0.5$ over the 200 steps by a constant factor per step. The final target success rate of $0.5$ is the success probability at which a rollout group is most likely to contain both successes and failures, i.e., where the within-group reward variance $p(1-p)$ peaks (Theorem~\ref{thm:ptgs-signal}). Starting from a lower target success rate heats fewer prompts early in training, when most prompts still have low success rates. Appendix~\ref{appendix:ptgs_tau_ablation} ablates $\tau \in \{1.2, 1.3, 1.4, 1.5\}$ for PPO. For GRPO, we select the PTGS parameter based on peak validation success rate for each environment, which gives $\tau = 1.4$ with a fixed target success rate of $0.5$ on Sokoban and $\tau = 1.2$ with a target success rate growing from $0.25$ to $0.5$ on FrozenLake. In all PTGS runs, the forgetting factor is $\gamma = 0.95$, the Beta prior has total mass 2 and is centered on the current target success rate, and $\hat{p}_x$ is drawn from the posterior of each task by Thompson sampling. A rollout counts as a success when its trajectory return exceeds $0.5$. We draw one temperature per prompt group and use it for every turn of every rollout in that group.

\noindent\textbf{Implementation details.} We denote $T_{\mathrm{ref}}$ as a reference temperature for an environment. It is $1.0$ by default, so we omit it in \S\ref{section:solution}. When the environment decoding temperature departs from $1.0$, one can set $T_{\mathrm{ref}}$ to that environment-default temperature (e.g., \S\ref{appendix:ptgs_inference}). The tempering rule in Eq.~\eqref{eq:ptgs} passes through three points, $T(0) = \tau T_{\mathrm{ref}}$, $T(\tilde{p}) = T_{\mathrm{ref}}$, and $T(1) = T_{\mathrm{ref}}/\tau$, and it reduces to the symmetric rule $T = T_{\mathrm{ref}}\,\tau^{\,1 - 2\hat{p}_x}$ when $\tilde{p} = 1/2$. 

\noindent\textbf{Checkpoint selection.} For PPO, Table~\ref{tab:ptgs-combined} reports the final checkpoint at step 200. For GRPO, whose validation success peaks before step 200 and then sometimes collapses (which is common in multi-turn RL setups~\cite{li2026turn,wang2025ragen,wang2026ragen}), both the baseline and PTGS use the checkpoint with the highest validation success during training (marked with $\dagger$ in the table). 

\noindent\textbf{Offline evaluation and repeated runs.} In all the evaluations in the main paper, we evaluate each checkpoint on 64 unseen tasks with 128 ancestral rollouts per task at $T = 0.5$, without PTGS at test time (except in \S\ref{appendix:ptgs_inference} where we introduce the inference-time PTGS variant). We compute pass@$k$ for $k \le 128$ with the estimators of \S\ref{section:prelim}, and $\text{Tax}_{S}(128)$ for all RL-tuned models against a single shared evaluation of the base model in each environment. Table~\ref{tab:ptgs-combined} reports the mean over five runs of each RL setting.

\subsection{Pseudocode}
\label{appendix:pseudocode}
\begin{lstlisting}[style=pseudopy, float=h, label={lst:tax},
  caption={Computing the Sharpening Tax metrics of \S\ref{section:diagnosis}.}]
def pass_at_k(c, n, k):
    return 1 - comb(n - c, k) / comb(n, k)

def scalability(counts, K):  # counts: [(c_i, n_i) for each task in D]
    curve = [mean([pass_at_k(c, n, k) for (c, n) in counts]) for k in range(1, K + 1)]
    A = sum(curve[-1] - curve[k] for k in range(K - 1))   
    S = A / ((K - 1) * (1 - curve[0]))                    
    return A, S

def sharpening_tax(counts_base, counts_rl, K):
    A_b, S_b = scalability(counts_base, K)
    A_r, S_r = scalability(counts_rl, K)
    return A_b - A_r, S_b - S_r    
\end{lstlisting}

Listing~\ref{lst:tax} shows how we compute the Sharpening Tax of \S\ref{section:diagnosis}. Given the per-task success counts of a policy, we first compute the dataset-level pass@$k$ curve for $k = 1, \dots, K$. We then sum the gaps between pass@$K$ and pass@$k$ to obtain $A(K)$ (Eq.~\eqref{eq:raw_scalability}) and normalize it by the budget and the single-attempt failure rate with total computation budget to obtain $S(K)$ (Eq.~\eqref{eq:norm_scalability}). Finally, the differences between the base and post-trained policies on the same task set give $\text{Tax}_A(K)$ and $\text{Tax}_{S}(K)$ (Eq.~\eqref{eq:tax}). For clarity, the listing omits the cumulative-product implementation of the estimator (\S\ref{appendix:metrics}) and the paired task bootstrap for confidence intervals.

\begin{lstlisting}[style=pseudopy, float=h, label={lst:ptgs},
  caption={PTGS within an RL training loop (Algorithm~\ref{alg:ptgs}).}]
s, f = zeros(len(D)), zeros(len(D))  

for step in range(num_rl_steps):
    batch = sample_tasks(D)
    for x in batch:
        a, b = 2 * p_target + s[x], 2 * (1 - p_target) + f[x]   
        p = Beta(a, b).sample()                           
        h = (p_target - p) / (p_target if p <= p_target else 1 - p_target)
        T[x] = tau ** h                     

        group = policy.generate(x, n=group_size, temperature=T[x])
        s[x] = gamma * s[x] + num_success(group)
        f[x] = gamma * f[x] + group_size - num_success(group)

    rl_update(batch, logprobs=policy.log_probs(batch, temperature=T))
\end{lstlisting}

Listing~\ref{lst:ptgs} gives the pseudocode of PTGS (\S\ref{section:solution}). For each prompt $x$, we keep discounted success and failure counts $(\tilde{s}_x, \tilde{f}_x)$. With a prior of total mass 2 centered on the target success rate $\tilde{p}$, these counts define a Beta posterior over the current success rate of the prompt. Before each rollout group, we draw $\hat{p}_x$ from this posterior by Thompson sampling, map it to a temperature $T_x \in [1/\tau, \tau]$ with Eq.~\eqref{eq:ptgs}, generate the group at $T_x$, and add the observed successes and failures to the counts with the forgetting factor $\gamma$. The log-probabilities in the policy-gradient ratio are computed at the same temperature, so the update remains on-policy. Algorithm~\ref{alg:ptgs} in the main text summarizes the same procedure.

\FloatBarrier
\section{Missing Proofs}
\label{appendix:theory}

This section restates each result of \S\ref{section:theory} and gives its proof, together with a decomposition of the tax (Corollary~\ref{cor:two-metric}) that we use in Appendix~\ref{appendix:tax_allpairs}. Throughout, we abbreviate the subscripts $\mathrm{Base}$ and $\mathrm{Post}$ of the main text as $\mathrm{B}$ and $\mathrm{P}$, respectively.

\subsection{What the Sharpening Tax Measures}
\label{appendix:redeemed_depth}

\propredeemed*

\begin{proof}[Proof of Proposition~\ref{thm:redeemed-depth}]
By definition of $H$, the event $\{H \le k\}$ is that at least one of the first $k$ rollouts succeeds, so we know $\Prob(H \le k) = \text{pass}@k$ for every $k \ge 1$. Hence, for each $1 \le k \le K-1$,
\[
\text{pass}@K-\text{pass}@k=\Prob(k < H \le K)=\sum_{h=k+1}^{K}\Prob(H=h).
\]
By summing over $k$ and exchanging the two finite sums, we have
\begin{align*}
A(K)
&=\sum_{k=1}^{K-1}\sum_{h=k+1}^{K}\Prob(H=h)
=\sum_{h=2}^{K}\Prob(H=h)\sum_{k=1}^{h-1}1 \\
&=\sum_{h=2}^{K}(h-1)\,\Prob(H=h)
=\E\!\left[(H-1)\,\mathbf{1}\{H\le K\}\right],
\end{align*}
where the $h=1$ term vanishes; this proves the first identity of Eq.~\eqref{eq:redeemed-depth}.

For the second identity, note that $1-\text{pass}@1 = \Prob(H>1) > 0$ by assumption, and that the random variable $(H-1)\,\mathbf{1}\{H\le K\}$ vanishes on the event $\{H=1\}$. Therefore,
\begin{align*}
S(K)
&=\frac{\E\!\left[(H-1)\,\mathbf{1}\{H\le K\}\right]}{(K-1)\,\Prob(H>1)}
=\frac{\E\!\left[(H-1)\,\mathbf{1}\{H\le K\}\,\mathbf{1}\{H>1\}\right]}{(K-1)\,\Prob(H>1)} \\
&=\E\!\left[\tfrac{H-1}{K-1}\,\mathbf{1}\{H\le K\}\;\middle|\;H>1\right].
\end{align*}
Finally, $S(K) \in [0,1]$ because $\tfrac{H-1}{K-1}\,\mathbf{1}\{H \le K\}$ takes values in $[0,1]$. It equals $(H-1)/(K-1) \in (0,1]$ on $\{1 < H \le K\}$ and $0$ otherwise.
\end{proof}

\noindent\textbf{Finite-sample estimates.} The proof uses only $\Prob(H \le k) = \text{pass}@k$, so it also applies to the estimates we report. For a task with $n$ observed rollouts of which $c$ succeed, let $H$ be the index of the first success in a uniformly random ordering of these rollouts. The probability that none of the first $k \le n$ rollouts succeeds is $\binom{n-c}{k}\big/\binom{n}{k}$, so $\Prob(H \le k)$ equals the unbiased pass@$k$ estimate of \S\ref{section:prelim}, and averaging over tasks gives the dataset-level estimate. Proposition~\ref{thm:redeemed-depth} therefore holds exactly for the reported $A(K)$ and $S(K)$ whenever $K \le n$.

\subsection{Ceiling and Saturation Decomposition of the Tax}
\label{appendix:two_metric}

The raw scalability $A(K)$ and the tax $\text{Tax}_{A}(K)$ can also be decomposed by using the coverage ceiling and the coverage accumulated below it.

\begin{corollary}[Ceiling and saturation decomposition]
\label{cor:two-metric}
Denote $C(K) = \text{pass}@K$ as the coverage ceiling and $\overline{C}(K) = \frac{1}{K}\sum_{k=1}^{K}\text{pass}@k$ for the budget-averaged coverage of a policy. Then, for each policy and every $K \ge 2$ test-time rollout budget,
\begin{equation}
A(K) = K\,\big[C(K) - \overline{C}(K)\big],
\qquad
\text{Tax}_A(K) = K\,\big[C_{\mathrm{B}}(K) - C_{\mathrm{P}}(K)\big] \;-\; K\,\big[\overline{C}_{\mathrm{B}}(K) - \overline{C}_{\mathrm{P}}(K)\big].
\label{eq:two-metric}
\end{equation}
\end{corollary}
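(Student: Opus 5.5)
The plan is a direct algebraic rewriting of the definition of $A(K)$ in Eq.~\eqref{eq:raw_scalability}; no probabilistic input beyond the definitions is needed. Start from
\[
A(K)=\sum_{k=1}^{K-1}\big[\text{pass}@K-\text{pass}@k\big].
\]
First observe that the summand vanishes at $k=K$. This lets us extend the sum to $k=1,\dots,K$ without changing its value, giving
\[
A(K)=\sum_{k=1}^{K}\big[C(K)-\text{pass}@k\big]=K\,C(K)-\sum_{k=1}^{K}\text{pass}@k .
\]
Next substitute the definition $\sum_{k=1}^{K}\text{pass}@k = K\,\overline{C}(K)$. This yields the first identity, $A(K)=K\,[C(K)-\overline{C}(K)]$, and it holds for each policy separately and for every $K\ge 2$.

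For the second identity, apply the first identity to the base policy and to the post-trained policy on the same task set and the same budget $K$. Then substitute both into the definition $\text{Tax}_A(K)=A_{\mathrm{B}}(K)-A_{\mathrm{P}}(K)$ from Eq.~\eqref{eq:tax}. Regrouping by linearity gives
\[
K\,[C_{\mathrm{B}}(K)-C_{\mathrm{P}}(K)]-K\,[\overline{C}_{\mathrm{B}}(K)-\overline{C}_{\mathrm{P}}(K)].
\]

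The argument has essentially no obstacle. The only point needing care is the index bookkeeping: the original sum stops at $K-1$ while $\overline{C}(K)$ averages over $K$ terms including $k=K$. The zero-summand observation at $k=K$ reconciles the two and produces exactly the factor $K$, rather than $K-1$, in front of both brackets.

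Finally, I would remark on interpretation. The identity holds verbatim for the finite-sample estimates of \S\ref{section:prelim}, since only the pass@$k$ values enter. It also separates the tax into a lost-ceiling term and a faster-saturation term (a higher budget-averaged coverage for the post-trained policy), matching the discussion after Proposition~\ref{thm:redeemed-depth}.
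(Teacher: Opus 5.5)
Your proof is correct and essentially identical to the paper's. The paper writes the sum as $(K-1)\,C(K) - \big(\sum_{k=1}^{K}\text{pass}@k - C(K)\big)$, which is the same bookkeeping as your observation that the $k=K$ summand vanishes, and it obtains the tax identity in the same way, by applying the first identity to each policy and subtracting.
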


In other words, the raw scalability of a policy is $K$ times the gap between its ceiling and its budget-averaged coverage, which we call the \emph{saturation gap}. The tax is then $K$ times the difference in the saturation gaps of the two policies, which splits into a ceiling gap and a gap in budget-averaged coverage as in Eq.~\eqref{eq:two-metric}.

\begin{proof}[Proof of Corollary~\ref{cor:two-metric}]
For the first identity,
\begin{align*}
A(K) &= \sum_{k=1}^{K-1}\big[C(K) - \text{pass}@k\big]
= (K-1)\,C(K) - \Big(\sum_{k=1}^{K}\text{pass}@k - C(K)\Big) \\
&= K\, C(K) - \sum_{k=1}^{K}\text{pass}@k
= K\,\big[C(K) - \overline{C}(K)\big].
\end{align*}
The second follows by applying the first identity to each policy in $\text{Tax}_A(K) = A_{\mathrm{B}}(K) - A_{\mathrm{P}}(K)$.
\end{proof}

\subsection{Why Sharpening Charges a Tax}
\label{appendix:proportional_tax}

\thmtax*

\begin{proof}[Proof of Theorem~\ref{thm:proportional-tax}]
Each task $x$ has a per-rollout success probability $p_x$ (population-level exact probability), and rollouts are conditionally independent given the task, so $\text{pass}@k$ for task $x$ equals $1-(1-p_x)^k$ and dataset-level metrics are expectations over $x$. Define the per-task raw scalability
\begin{align*}
a_K(p)
&=\sum_{k=1}^{K-1}\Big[\big(1-(1-p)^K\big)-\big(1-(1-p)^k\big)\Big]
=\sum_{k=1}^{K-1}\Big[(1-p)^{k}-(1-p)^{K}\Big], \\
A(K)&=\E_x\big[a_K(p_x)\big].
\end{align*}
Each term $(1-p)^k-(1-p)^K$ vanishes at $p=0$ and $p=1$, so $a_K(0)=a_K(1)=0$. A task that always fails or always succeeds gains nothing from retries. Under the sharpening model, an independently chosen $\lambda$-fraction of tasks has $p_x$ replaced by a value in $\{0,1\}$ while the remaining $(1-\lambda)$-fraction is unchanged. Taking expectations,
\begin{align*}
A_{\mathrm{P}}(K)&=(1-\lambda)\,\E_x\big[a_K(p_x)\big]+\lambda\cdot 0=(1-\lambda)A_{\mathrm{B}}(K), \\
\text{Tax}_A(K)&=A_{\mathrm{B}}(K)-A_{\mathrm{P}}(K)=\lambda A_{\mathrm{B}}(K),
\end{align*}
which is the first identity of Eq.~\eqref{eq:proportional-tax}. Since pass@$k$ is nondecreasing in $k$, we have $A_{\mathrm{B}}(K)\ge0$; hence $\text{Tax}_A(K)=\lambda A_{\mathrm{B}}(K)\ge0$, with strict inequality whenever $\lambda>0$ and $A_{\mathrm{B}}(K)>0$, regardless of which extreme the sharpened tasks reach. If the sharpened tasks are instead chosen depending on $p_x$, the same computation gives $\text{Tax}_A(K)=\E_x\big[\mathbf{1}\{x\text{ sharpened}\}\,a_K(p_x)\big]=\lambda\,\E\big[a_K(p_x)\mid x\text{ sharpened}\big]\ge0$, since each sharpened task's contribution drops from $a_K(p_x)\ge0$ to $0$; independence is needed only for the proportionality.

For the second identity, write $\epsilon_{\mathrm{B}}=1-\text{pass}@1_{\mathrm{B}}$ ($=\epsilon$ in the statement) and $\epsilon_{\mathrm{P}}=1-\text{pass}@1_{\mathrm{P}}$ for the single-attempt failure rates, and recall that $\lambda_0\le\lambda$ is the proportion of tasks sharpened to $p=0$. Unsharpened tasks keep their failure probability and tasks sharpened to $p=0$ always fail, so
\[
\epsilon_{\mathrm{P}}=(1-\lambda)\,\epsilon_{\mathrm{B}}+\lambda_0 .
\]
Assume $A_{\mathrm{B}}(K)>0$, which implies $\epsilon_{\mathrm{B}}>0$, and $\epsilon_{\mathrm{P}}>0$. Since $S(K)=A(K)/\big((K-1)\,\epsilon\big)$ and $A_{\mathrm{P}}(K)=(1-\lambda)A_{\mathrm{B}}(K)$,
\[
\frac{S_{\mathrm{P}}(K)}{S_{\mathrm{B}}(K)}=\frac{(1-\lambda)\,\epsilon_{\mathrm{B}}}{(1-\lambda)\,\epsilon_{\mathrm{B}}+\lambda_0},
\]
which is below $1$ when $\lambda_0>0$. This implies that post-training keeps only a $(1-\lambda)$ proportion of the base model's scalability but more than a $(1-\lambda)$ proportion of its failures, because tasks sharpened to $p=0$ still fail. Meanwhile, first-attempt accuracy improves exactly when $\epsilon_{\mathrm{P}}<\epsilon_{\mathrm{B}}$, i.e., $\lambda_0<\lambda\,\epsilon_{\mathrm{B}}$. Thus, when $0<\lambda_0<\lambda\,\epsilon_{\mathrm{B}}$, first-attempt accuracy improves while the calibrated tax remains strictly positive, which proves the second identity and the consequences discussed after Theorem~\ref{thm:proportional-tax} in \S\ref{section:theory}. Neither identity depends on which extreme each sharpened task moves to, so that choice may itself depend on $p_x$.
\end{proof}

\subsection{How PTGS Improves Group-Based RL}
\label{appendix:ptgs_signal}

\thmptgs*

\begin{proof}[Proof of Theorem~\ref{thm:ptgs-signal}]
Within a group, the $n$ rollouts are conditionally independent given the prompt, each succeeding with the prompt's true probability. Hence, for a prompt with success probability $q$, the probability that at least one rollout succeeds is $1-(1-q)^n = I_n(q)$, the probability that all succeed is $q^n$, and the probability of a mixed group, at least one success and at least one failure, is $1-q^n-(1-q)^n = J_n(q)$. Averaging the first quantity over prompts gives $\text{pass}@n=\E_x[I_n(p_x)]$, and subtracting the all-success probability gives $\text{pass}@n-\text{pass}^n=\E_x[J_n(p_x)]$, which we use after Proposition~\ref{thm:ptgs-mastered} below.

(i) $I_n'(q)=n(1-q)^{n-1}>0$ on $(0,1)$, so $I_n$ is strictly increasing and $p'>p$ implies $I_n(p')>I_n(p)$.

(ii) $J_n'(q)=n\big[(1-q)^{n-1}-q^{n-1}\big]$, which is positive if and only if $q<\tfrac12$ for $n\ge2$. Thus $J_n$ is strictly increasing on $[0,\tfrac12]$ and maximized at $q=\tfrac12$, so $p<p'\le\tfrac12$ implies $J_n(p')>J_n(p)$.
\end{proof}

\noindent\textbf{Extension to mastered prompts.} Theorem~\ref{thm:ptgs-signal} concerns the hard prompts that PTGS heats. The following proposition covers the mastered prompts that PTGS cools, for which raising the success probability makes mixed groups less frequent.

\begin{proposition}[Cooling mastered prompts]
\label{thm:ptgs-mastered}
In the setting of Theorem~\ref{thm:ptgs-signal}, if $\tfrac12 \le p < p'$, then
\begin{equation}
J_n(p') < J_n(p)
\qquad\text{and}\qquad
J_n(p) - J_n(p') \le (p')^{n} - p^{n}.
\label{eq:ptgs-mastered}
\end{equation}
\end{proposition}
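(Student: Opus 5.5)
For the first claim, I will reuse the derivative computed in the proof of Theorem~\ref{thm:ptgs-signal}(ii):
\[
J_n'(q)=n\big[(1-q)^{n-1}-q^{n-1}\big].
\]
For $n\ge2$ and $q>\tfrac12$ we have $q>1-q\ge 0$, so $q^{n-1}>(1-q)^{n-1}$ and hence $J_n'(q)<0$. Therefore $J_n$ is strictly decreasing on $[\tfrac12,1]$. Since $\tfrac12\le p<p'\le 1$, this gives $J_n(p')<J_n(p)$.

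For the second claim, I will expand the difference using the definition $J_n(q)=1-q^n-(1-q)^n$:
\[
J_n(p)-J_n(p')=\big[(p')^n-p^n\big]-\big[(1-p)^n-(1-p')^n\big].
\]
Because $p<p'$ gives $1-p>1-p'\ge0$, the bracket $(1-p)^n-(1-p')^n$ is nonnegative. Subtracting it can only make the expression smaller, so $J_n(p)-J_n(p')\le (p')^n-p^n$. This step does not even require $p\ge\tfrac12$.

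I expect no real obstacle. The only care needed is handling the endpoints $p'=1$ and $p=\tfrac12$ correctly. Strict monotonicity of $J_n$ on the closed interval $[\tfrac12,1]$ still holds there, because $J_n'<0$ on the open interior $(\tfrac12,1)$ and $J_n$ is continuous.

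Finally, I will note the interpretation. The decrease in mixed groups is at most the increase $(p')^n-p^n$ in all-success groups, while the all-failure probability $(1-q)^n$ also falls. So the lost mixed groups become all-success groups rather than all-failure ones. This is consistent with $\text{pass}@n-\text{pass}^n=\E_x[J_n(p_x)]$ from the proof of Theorem~\ref{thm:ptgs-signal}.
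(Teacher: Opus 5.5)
Your proof is correct and follows the paper's argument essentially step for step. You get strict decrease of $J_n$ on $[\tfrac12,1]$ from the sign of $J_n'(q)=n[(1-q)^{n-1}-q^{n-1}]$, then expand $J_n(p)-J_n(p')=[(p')^n-p^n]-[(1-p)^n-(1-p')^n]$ and drop the nonnegative second bracket; your side remark that this bound does not need $p\ge\tfrac12$ is also right, and the inequality is in fact strict since $p<p'$.
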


Mixed groups become more sparse as a mastered prompt's success probability rises, but so do all-failure groups, and the bound in Eq.~\eqref{eq:ptgs-mastered} ensures that the lost mixed groups turn into all-success groups rather than all-failure groups. Since averaging over prompts gives $\text{pass}@n=\E_x[I_n(p_x)]$ and $\text{pass}@n-\text{pass}^n=\E_x[J_n(p_x)]$, Theorem~\ref{thm:ptgs-signal} and Proposition~\ref{thm:ptgs-mastered} say that PTGS gains coverage while a prompt is still being explored and converts that coverage into consistency once the prompt is mastered, which is consistent with the joint gains in Table~\ref{tab:ptgs-combined}.

\begin{proof}[Proof of Proposition~\ref{thm:ptgs-mastered}]
By the derivative computation in the proof of Theorem~\ref{thm:ptgs-signal}~(ii), $J_n$ is strictly decreasing on $[\tfrac12,1]$, so $\tfrac12\le p<p'$ implies $J_n(p')<J_n(p)$. Moreover, from $J_n(q)=1-q^n-(1-q)^n$,
\[
J_n(p)-J_n(p')
=\big[(p')^{n}-p^{n}\big]-\big[(1-p)^{n}-(1-p')^{n}\big]
\le (p')^{n}-p^{n},
\]
since $q\mapsto(1-q)^n$ is strictly decreasing on $[0,1]$. That is, the decrease in the probability of a mixed group is fully absorbed by the increase in the all-success probability $q^n$, which is itself strictly increasing in $q$.
\end{proof}

\FloatBarrier
\section{Additional Results on the Accuracy--Coverage Tension}
\label{appendix:results_observation}

This section gives the full results behind \S\ref{section:observation} for each backbone scale, model family, and checkpoint pair. The following three main findings of \S\ref{section:observation} hold in every family. First, post-training bimodalizes the per-task success rates (Figures~\ref{fig:taskpassrate_dist_gemma4}--\ref{fig:taskpassrate_dist_ministral3}). Second, the coverage and consistency curves of the base model diverge much more than those of the post-trained model (Figures~\ref{fig:appendix_ttscurve_gemma4}--\ref{fig:appendix_ttscurve_ministral3}). Third, the post-trained model stands out in both sampling efficiency and coverage at a small budget, but at a large budget the base model frequently reaches higher coverage, mostly for the larger backbones (Figure~\ref{fig:coveff_lostgain}). Below, we first compare the largest and smallest backbones (\S\ref{appendix:agg_scopes}), then show the results for each family and checkpoint pair (\S\ref{appendix:perfamily}), and look at the individual task curves on WebShop (\S\ref{appendix:webshop_density}). Finally, we show that a global temperature cannot resolve the trade-off (\S\ref{appendix:global_temperature}).

\FloatBarrier
\subsection{Aggregate Results by Backbone Scale}
\label{appendix:agg_scopes}

We group the model-benchmark combinations by backbone scale. The \emph{largest-backbone} set contains the largest backbone of each family (\texttt{gemma-4-31B}, \texttt{Ministral-3-14B}, \texttt{Qwen2.5-32B}, and \texttt{Qwen3.5-35B-A3B}) on the three benchmarks, and the \emph{smallest-backbone} set contains the smallest backbone of each family (\texttt{gemma-4-E4B}, \texttt{Ministral-3-3B}, \texttt{Qwen2.5-3B}, and \texttt{Qwen3.5-4B}), giving 12 combinations per set. The aggregate curves are a simple average over the 12 combinations, and the shaded bands show one standard deviation across combinations, i.e., the spread between models and benchmarks (not sampling error).

Figures~\ref{fig:agg12_ttscurve}--\ref{fig:agg12_diffhist} show that the two sets behave in qualitatively different ways. For the smallest backbones (3--4B), post-training mainly supplies basic skills that the base model lacks, e.g., well-formed tool calls, and thereby expands coverage. For the largest backbones, post-training mainly polarizes the per-task success rates and thereby loses coverage. Accordingly, the mean base curve overtakes the mean post-trained curve at around $k{=}22$ for the largest backbones but never does so for the smallest ones. At the task level, post-training loses more tasks than it gains for the largest backbones, while the opposite holds for the smallest ones. The same contrast appears in Table~\ref{table:bimodal_freq_full}, where post-training reduces the proportion of always-fail tasks on BFCL for every smallest backbone but increases it for three of the four largest backbones.

\begin{figure}[h]
    \centering
    \begin{minipage}[t]{0.48\linewidth}
    \includegraphics[width=\linewidth]{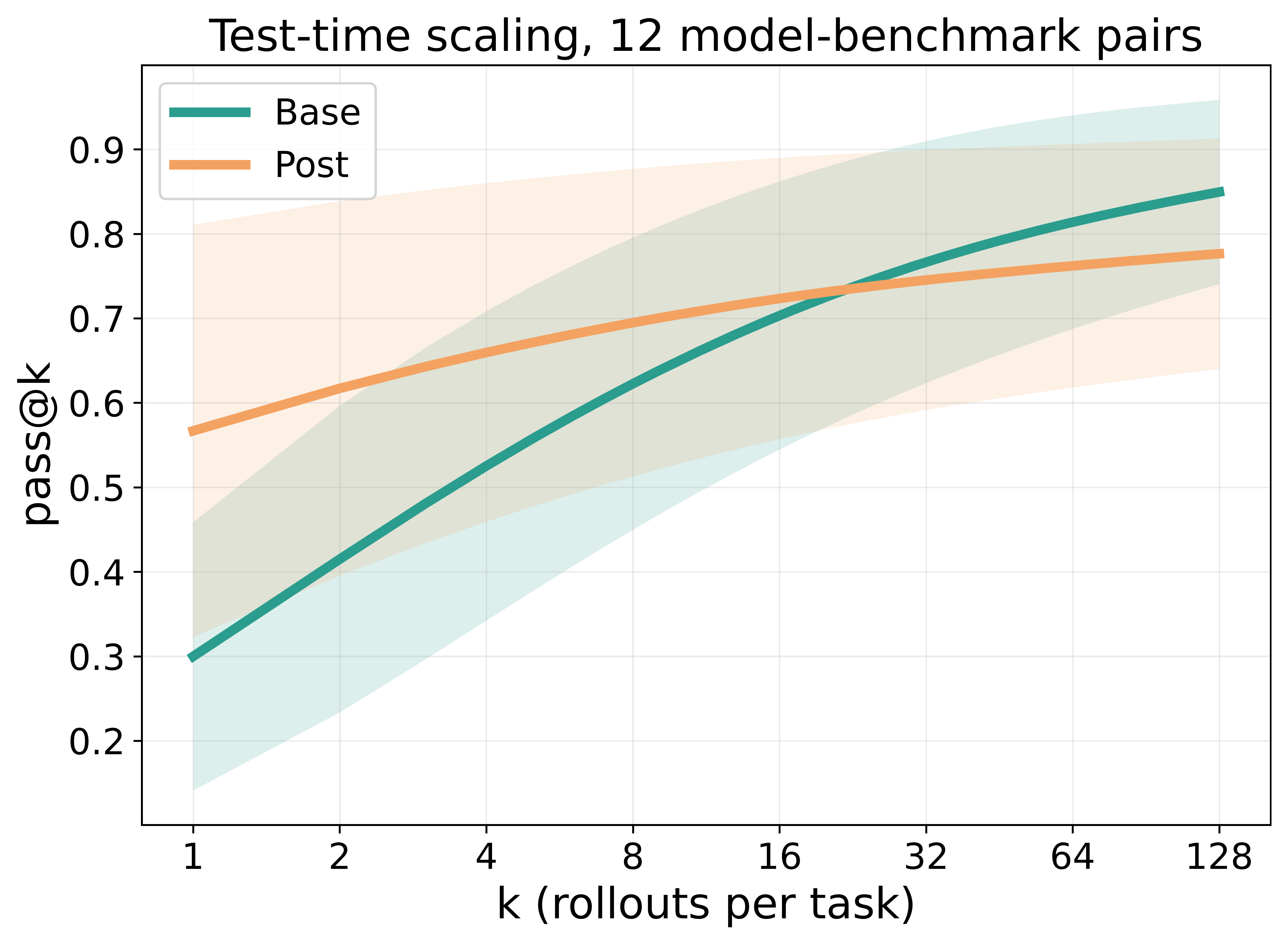}
    \end{minipage}\hfill
    \begin{minipage}[t]{0.48\linewidth}
    \includegraphics[width=\linewidth]{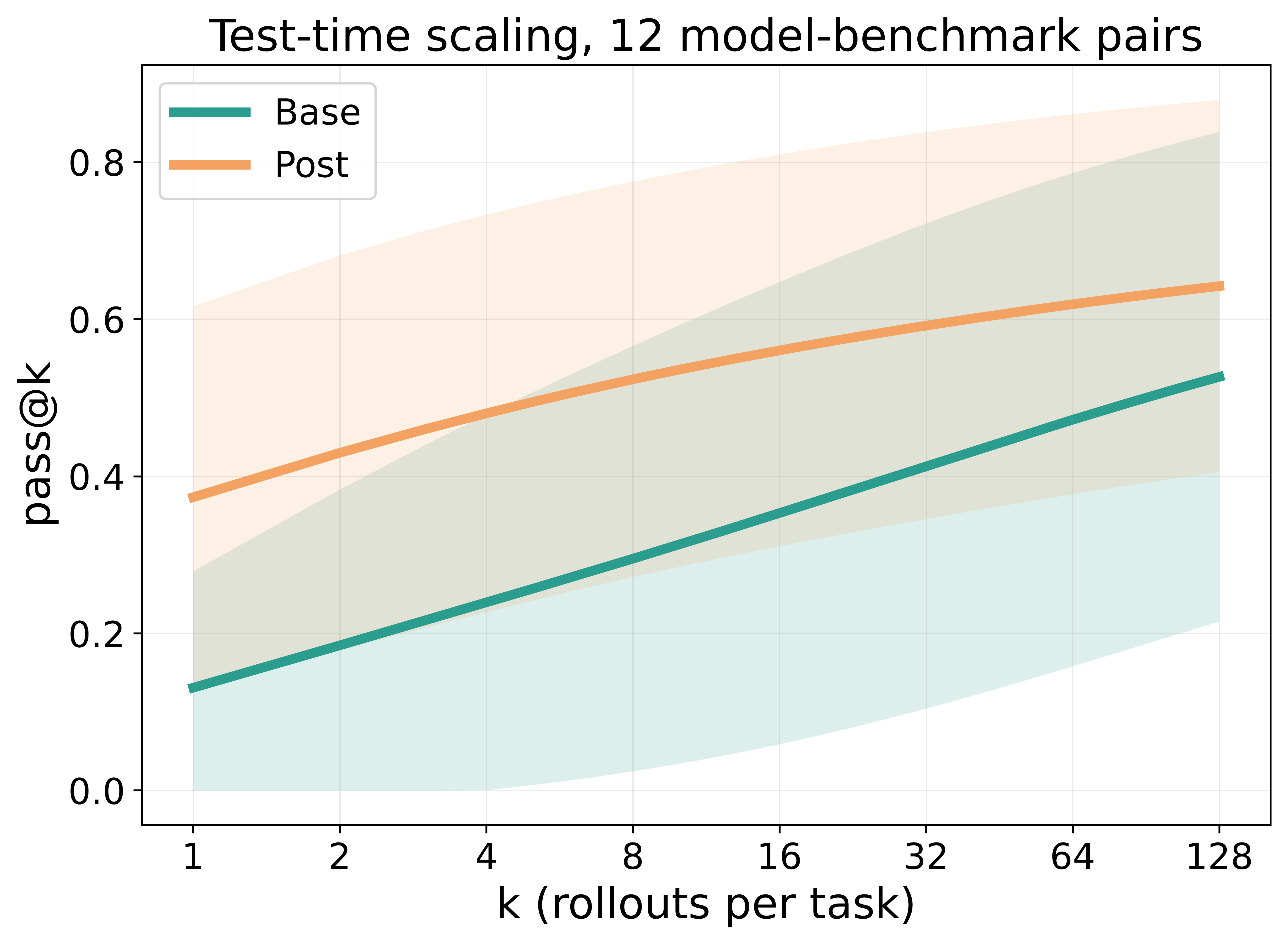}
    \end{minipage}
    \caption{\textbf{Aggregate test-time scaling curves of the base and post-trained models for the largest (left) and smallest (right) backbones.} Each curve is the mean pass@$k$ over 12 model-benchmark combinations (one backbone per family on each of the three benchmarks), and the bands show one standard deviation across combinations. For the largest backbones, the mean base curve overtakes the mean post-trained curve at $k \approx 22$ and ends clearly ahead at $k=128$. For the smallest backbones, the base curve never catches up within the budget.}
    \label{fig:agg12_ttscurve}
\end{figure}

\begin{figure}[h]
    \centering
    \begin{minipage}[t]{0.48\linewidth}
    \includegraphics[width=\linewidth]{figures/agg12L_covcons_noshade.pdf}
    \end{minipage}\hfill
    \begin{minipage}[t]{0.48\linewidth}
    \includegraphics[width=\linewidth]{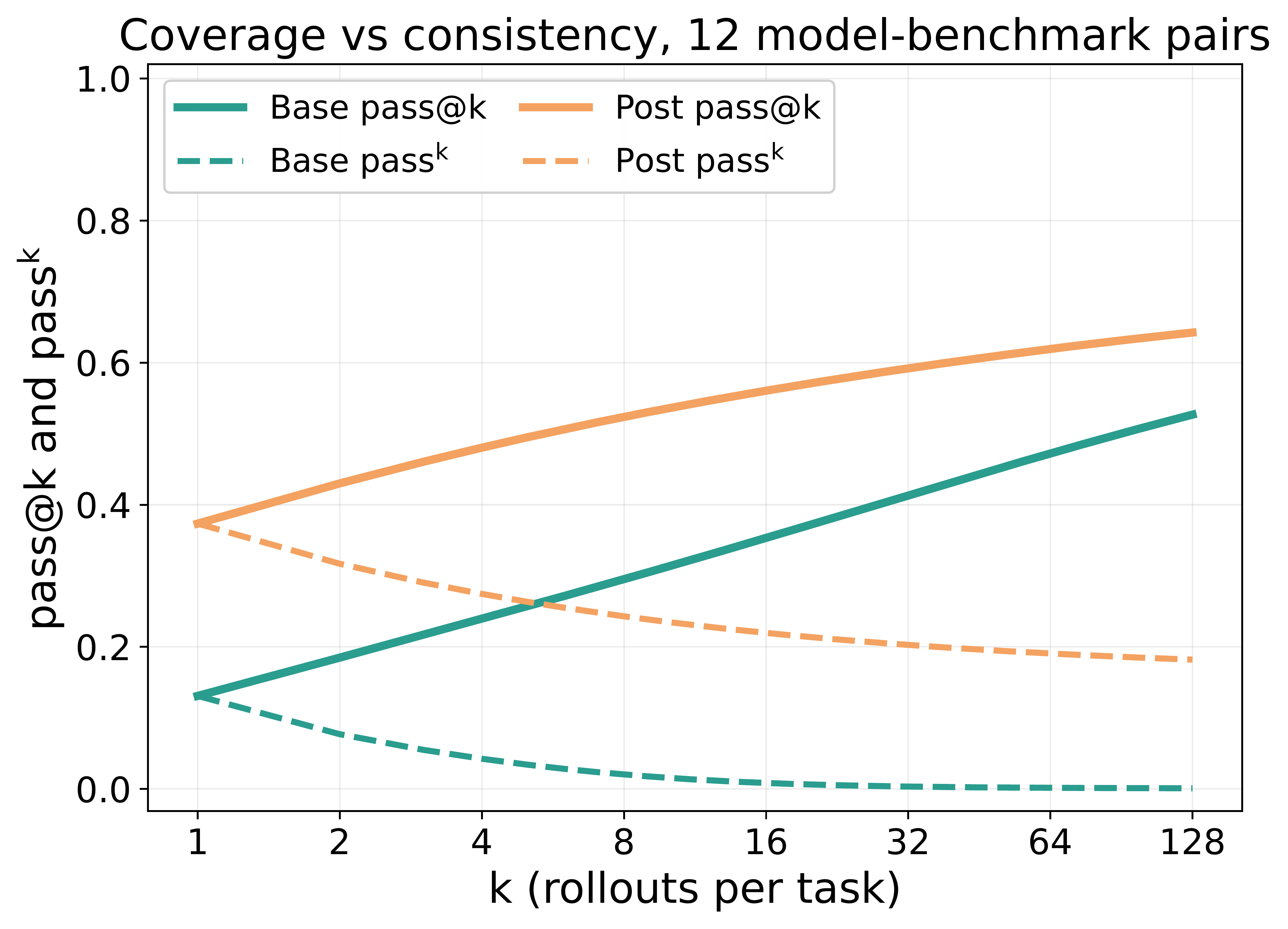}
    \end{minipage}
    \caption{\textbf{Aggregate coverage (pass@$k$) and consistency ($\text{pass}^k$) curves of the base and post-trained models for the largest (left) and smallest (right) backbones}, averaged over 12 model-benchmark combinations (one backbone per family on each of the three benchmarks). The consistency of the base model drops to nearly zero as $k$ grows, but that of the post-trained model stays much higher at both scales. For the largest backbones, more samples let the base model overtake the post-trained model in coverage, but not in consistency.}
    \label{fig:agg12_covcons}
\end{figure}

\begin{figure}[h]
    \centering
    \begin{minipage}[t]{0.48\linewidth}
    \includegraphics[width=\linewidth]{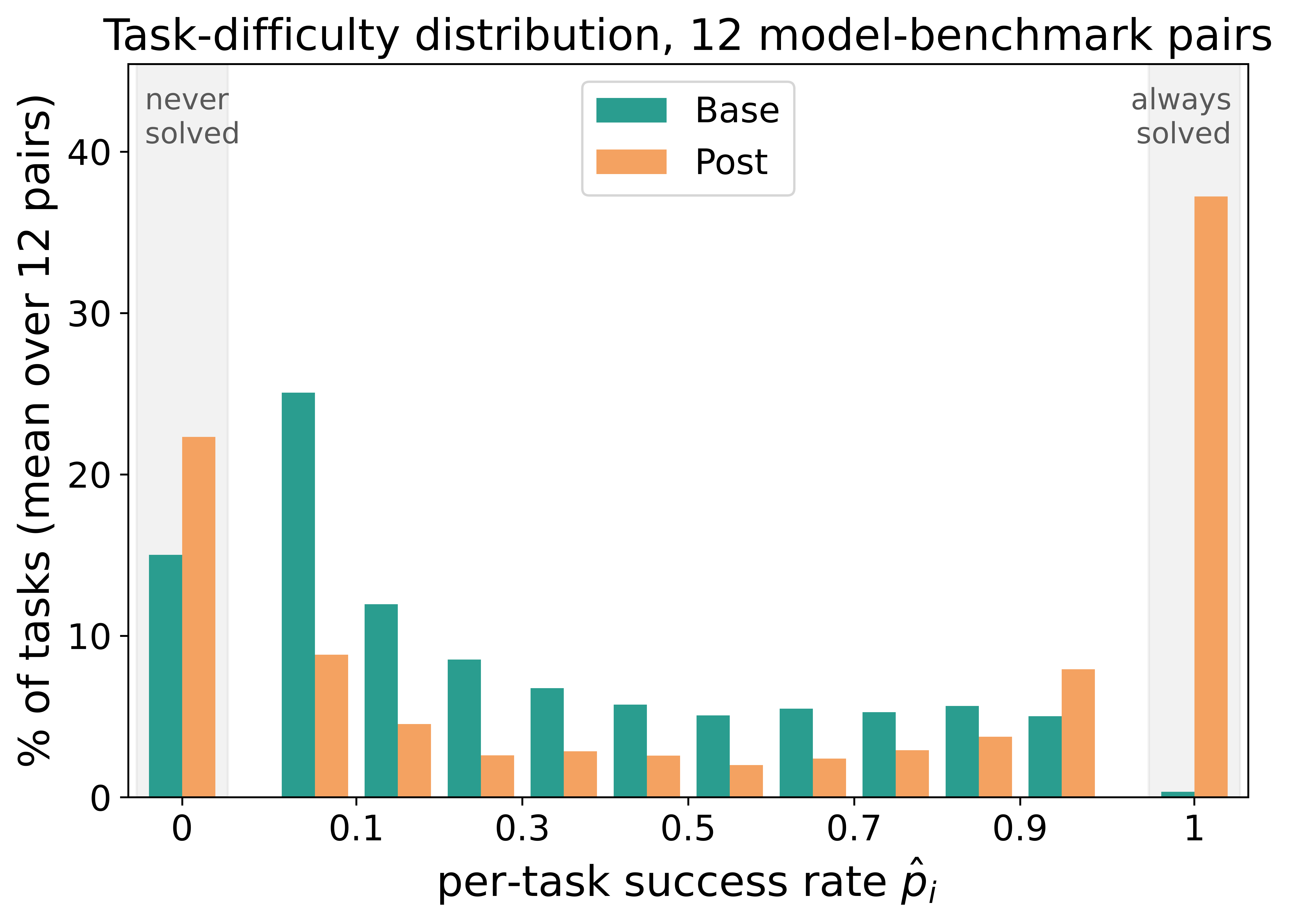}
    \end{minipage}\hfill
    \begin{minipage}[t]{0.48\linewidth}
    \includegraphics[width=\linewidth]{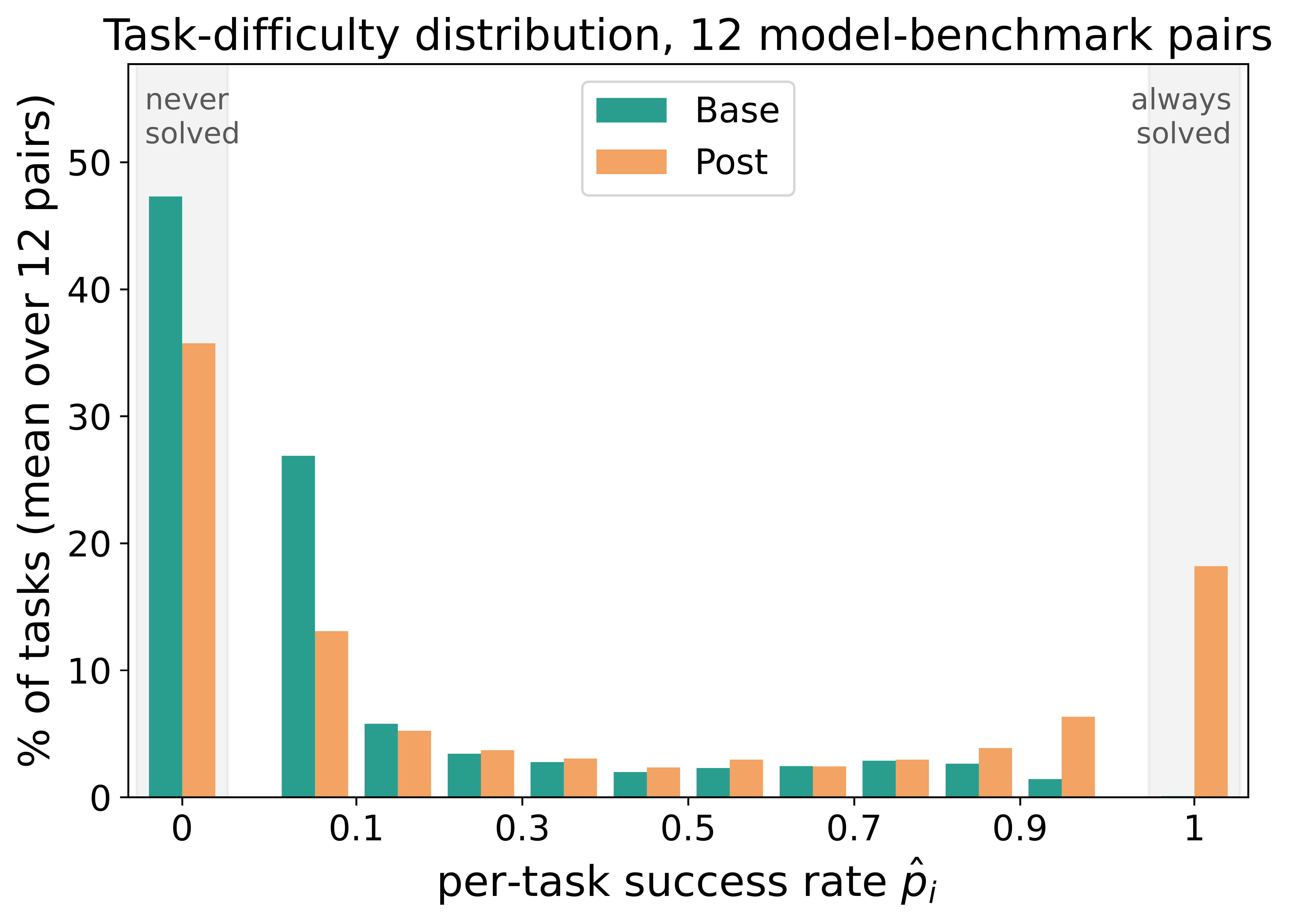}
    \end{minipage}
    \caption{\textbf{Aggregate per-task success-rate distributions of the base and post-trained models for the largest (left) and smallest (right) backbones}, averaged over 12 model-benchmark combinations (one backbone per family on each of the three benchmarks). For the largest backbones, post-training increases both the never-solved share and the always-solved share (the latter from nearly 0\% to 37\%), i.e., it loses some solvable tasks and fully solves others. For the smallest backbones, post-training instead reduces the never-solved share by repairing the base model.}
    \label{fig:agg12_diffhist}
\end{figure}

\FloatBarrier
\subsection{Results by Model Family and Checkpoint Pair}
\label{appendix:perfamily}

Table~\ref{table:bimodal_freq_full} and Figures~\ref{fig:taskpassrate_dist_gemma4}--\ref{fig:coveff_lostgain} show the results for every family and checkpoint pair, including the task categorization per benchmark at $K=128$, the per-task success-rate distributions, the coverage and consistency curves, and the comparison between the coverage lost and the sampling efficiency gained by post-training.

\begin{table}[t]
\centering
\caption{\textbf{Task categories of all 14 base/post-trained pairs at $K=128$.} Each task is classified by its outcomes over 128 rollouts as \emph{always pass} (every rollout succeeds), \emph{pass given compute} (some but not all rollouts succeed), or \emph{always fail} (no rollout succeeds). Numbers are percentages of tasks in each benchmark. Teal ($\uparrow$) and purple ($\downarrow$) mark an increase and a decrease after post-training. For the larger backbones of every family, the middle category shrinks and its tasks move to the two extremes. For the smallest backbones on BFCL, post-training can instead enlarge the middle category by moving tasks out of always fail, as it improves the tool-call formatting capability of the base model. This is the setting in which the tax can become negative.}
\label{table:bimodal_freq_full}
\resizebox{0.875\linewidth}{!}{%
\begin{NiceTabular}{llcccccc}
\CodeBefore
\Body
\toprule
 & & \multicolumn{2}{c}{BFCLv4 MT} & \multicolumn{2}{c}{WebShop} & \multicolumn{2}{c}{ACEBench} \\
\cmidrule(lr){3-4}\cmidrule(lr){5-6}\cmidrule(lr){7-8}
Backbone & Task category & Base & Post & Base & Post & Base & Post \\
\midrule
\Block{3-1}{\texttt{gemma-4-E4B}} & Always pass (any $k$) & 0.0 & \textcolor{teal}{12.5 ($\uparrow$)} & 0.0 & \textcolor{teal}{11.4 ($\uparrow$)} & 0.0 & \textcolor{teal}{66.4 ($\uparrow$)} \\
 & Pass given compute & 27.0 & \textcolor{purple}{26.0 ($\downarrow$)} & 48.6 & \textcolor{teal}{53.8 ($\uparrow$)} & 85.6 & \textcolor{purple}{22.9 ($\downarrow$)} \\
 & Always fail (any $k$) & 73.0 & \textcolor{purple}{61.5 ($\downarrow$)} & 51.4 & \textcolor{purple}{34.8 ($\downarrow$)} & 14.4 & \textcolor{purple}{10.8 ($\downarrow$)} \\
\midrule
\Block{3-1}{\texttt{gemma-4-12B}} & Always pass (any $k$) & 0.0 & \textcolor{teal}{60.5 ($\uparrow$)} & 0.0 & \textcolor{teal}{14.0 ($\uparrow$)} & 0.5 & \textcolor{teal}{83.2 ($\uparrow$)} \\
 & Pass given compute & 47.0 & \textcolor{purple}{13.5 ($\downarrow$)} & 78.0 & \textcolor{purple}{40.6 ($\downarrow$)} & 90.0 & \textcolor{purple}{7.0 ($\downarrow$)} \\
 & Always fail (any $k$) & 53.0 & \textcolor{purple}{26.0 ($\downarrow$)} & 22.0 & \textcolor{teal}{45.4 ($\uparrow$)} & 9.5 & \textcolor{teal}{9.7 ($\uparrow$)} \\
\midrule
\Block{3-1}{\texttt{gemma-4-26B-A4B}} & Always pass (any $k$) & 0.0 & \textcolor{teal}{60.5 ($\uparrow$)} & 0.0 & \textcolor{teal}{17.2 ($\uparrow$)} & 0.0 & \textcolor{teal}{82.5 ($\uparrow$)} \\
 & Pass given compute & 62.5 & \textcolor{purple}{20.5 ($\downarrow$)} & 76.4 & \textcolor{purple}{29.6 ($\downarrow$)} & 93.8 & \textcolor{purple}{7.8 ($\downarrow$)} \\
 & Always fail (any $k$) & 37.5 & \textcolor{purple}{19.0 ($\downarrow$)} & 23.6 & \textcolor{teal}{53.2 ($\uparrow$)} & 6.2 & \textcolor{teal}{9.7 ($\uparrow$)} \\
\midrule
\Block{3-1}{\texttt{gemma-4-31B}} & Always pass (any $k$) & 0.0 & \textcolor{teal}{74.0 ($\uparrow$)} & 0.0 & \textcolor{teal}{26.0 ($\uparrow$)} & 0.4 & \textcolor{teal}{84.9 ($\uparrow$)} \\
 & Pass given compute & 90.5 & \textcolor{purple}{9.5 ($\downarrow$)} & 87.6 & \textcolor{purple}{30.0 ($\downarrow$)} & 93.8 & \textcolor{purple}{6.5 ($\downarrow$)} \\
 & Always fail (any $k$) & 9.5 & \textcolor{teal}{16.5 ($\uparrow$)} & 12.4 & \textcolor{teal}{44.0 ($\uparrow$)} & 5.8 & \textcolor{teal}{8.6 ($\uparrow$)} \\
\midrule
\Block{3-1}{\texttt{Ministral-3-3B}} & Always pass (any $k$) & 0.0 & \textcolor{teal}{4.5 ($\uparrow$)} & 0.0 & \textcolor{teal}{0.2 ($\uparrow$)} & 0.0 & \textcolor{teal}{16.4 ($\uparrow$)} \\
 & Pass given compute & 24.0 & \textcolor{teal}{57.0 ($\uparrow$)} & 32.6 & \textcolor{teal}{68.6 ($\uparrow$)} & 85.7 & \textcolor{purple}{69.5 ($\downarrow$)} \\
 & Always fail (any $k$) & 76.0 & \textcolor{purple}{38.5 ($\downarrow$)} & 67.4 & \textcolor{purple}{31.2 ($\downarrow$)} & 14.3 & \textcolor{purple}{14.2 ($\downarrow$)} \\
\midrule
\Block{3-1}{\texttt{Ministral-3-8B}} & Always pass (any $k$) & 0.0 & \textcolor{teal}{9.0 ($\uparrow$)} & 0.0 & \textcolor{teal}{0.6 ($\uparrow$)} & 0.0 & \textcolor{teal}{40.6 ($\uparrow$)} \\
 & Pass given compute & 54.0 & \textcolor{teal}{55.5 ($\uparrow$)} & 78.8 & \textcolor{purple}{54.0 ($\downarrow$)} & 93.2 & \textcolor{purple}{51.2 ($\downarrow$)} \\
 & Always fail (any $k$) & 46.0 & \textcolor{purple}{35.5 ($\downarrow$)} & 21.2 & \textcolor{teal}{45.4 ($\uparrow$)} & 6.8 & \textcolor{teal}{8.2 ($\uparrow$)} \\
\midrule
\Block{3-1}{\texttt{Ministral-3-14B}} & Always pass (any $k$) & 0.0 & \textcolor{teal}{11.5 ($\uparrow$)} & 0.0 & \textcolor{teal}{0.8 ($\uparrow$)} & 0.0 & \textcolor{teal}{50.4 ($\uparrow$)} \\
 & Pass given compute & 59.0 & \textcolor{purple}{53.5 ($\downarrow$)} & 78.8 & \textcolor{purple}{72.2 ($\downarrow$)} & 93.9 & \textcolor{purple}{42.9 ($\downarrow$)} \\
 & Always fail (any $k$) & 41.0 & \textcolor{purple}{35.0 ($\downarrow$)} & 21.2 & \textcolor{teal}{27.0 ($\uparrow$)} & 6.1 & \textcolor{teal}{6.8 ($\uparrow$)} \\
\midrule
\Block{3-1}{\texttt{Qwen2.5-3B}} & Always pass (any $k$) & 0.0 & \textcolor{teal}{5.5 ($\uparrow$)} & 0.0 & \textcolor{teal}{1.2 ($\uparrow$)} & 0.0 & \textcolor{teal}{26.8 ($\uparrow$)} \\
 & Pass given compute & 12.0 & \textcolor{teal}{31.0 ($\uparrow$)} & 7.6 & \textcolor{teal}{12.0 ($\uparrow$)} & 80.0 & \textcolor{purple}{42.6 ($\downarrow$)} \\
 & Always fail (any $k$) & 88.0 & \textcolor{purple}{63.5 ($\downarrow$)} & 92.4 & \textcolor{purple}{86.8 ($\downarrow$)} & 20.0 & \textcolor{teal}{30.6 ($\uparrow$)} \\
\midrule
\Block{3-1}{\texttt{Qwen2.5-7B}} & Always pass (any $k$) & 0.5 & \textcolor{teal}{8.0 ($\uparrow$)} & 0.0 & \textcolor{teal}{10.6 ($\uparrow$)} & 0.0 & \textcolor{teal}{51.0 ($\uparrow$)} \\
 & Pass given compute & 21.5 & \textcolor{teal}{52.0 ($\uparrow$)} & 71.2 & \textcolor{purple}{46.2 ($\downarrow$)} & 89.2 & \textcolor{purple}{24.0 ($\downarrow$)} \\
 & Always fail (any $k$) & 78.0 & \textcolor{purple}{40.0 ($\downarrow$)} & 28.8 & \textcolor{teal}{43.2 ($\uparrow$)} & 10.8 & \textcolor{teal}{24.9 ($\uparrow$)} \\
\midrule
\Block{3-1}{\texttt{Qwen2.5-14B}} & Always pass (any $k$) & 0.5 & \textcolor{teal}{7.5 ($\uparrow$)} & 0.0 & \textcolor{teal}{1.4 ($\uparrow$)} & 1.6 & \textcolor{teal}{59.2 ($\uparrow$)} \\
 & Pass given compute & 58.5 & \textcolor{teal}{67.5 ($\uparrow$)} & 74.0 & \textcolor{purple}{53.8 ($\downarrow$)} & 89.2 & \textcolor{purple}{21.3 ($\downarrow$)} \\
 & Always fail (any $k$) & 41.0 & \textcolor{purple}{25.0 ($\downarrow$)} & 26.0 & \textcolor{teal}{44.8 ($\uparrow$)} & 9.2 & \textcolor{teal}{19.5 ($\uparrow$)} \\
\midrule
\Block{3-1}{\texttt{Qwen2.5-32B}} & Always pass (any $k$) & 1.0 & \textcolor{teal}{12.5 ($\uparrow$)} & 0.0 & \textcolor{teal}{11.8 ($\uparrow$)} & 1.7 & \textcolor{teal}{74.0 ($\uparrow$)} \\
 & Pass given compute & 70.0 & \textcolor{purple}{51.5 ($\downarrow$)} & 83.8 & \textcolor{purple}{50.0 ($\downarrow$)} & 92.7 & \textcolor{purple}{13.9 ($\downarrow$)} \\
 & Always fail (any $k$) & 29.0 & \textcolor{teal}{36.0 ($\uparrow$)} & 16.2 & \textcolor{teal}{38.2 ($\uparrow$)} & 5.6 & \textcolor{teal}{12.1 ($\uparrow$)} \\
\midrule
\Block{3-1}{\texttt{Qwen3.5-4B}} & Always pass (any $k$) & 1.0 & \textcolor{teal}{19.5 ($\uparrow$)} & 0.0 & \textcolor{teal}{5.6 ($\uparrow$)} & 0.1 & \textcolor{teal}{48.6 ($\uparrow$)} \\
 & Pass given compute & 62.5 & \textcolor{teal}{65.0 ($\uparrow$)} & 71.6 & \textcolor{purple}{65.0 ($\downarrow$)} & 93.9 & \textcolor{purple}{39.1 ($\downarrow$)} \\
 & Always fail (any $k$) & 36.5 & \textcolor{purple}{15.5 ($\downarrow$)} & 28.4 & \textcolor{teal}{29.4 ($\uparrow$)} & 6.0 & \textcolor{teal}{12.3 ($\uparrow$)} \\
\midrule
\Block{3-1}{\texttt{Qwen3.5-9B}} & Always pass (any $k$) & 0.0 & \textcolor{teal}{19.5 ($\uparrow$)} & 0.0 & \textcolor{teal}{1.0 ($\uparrow$)} & 0.0 & \textcolor{teal}{45.6 ($\uparrow$)} \\
 & Pass given compute & 84.0 & \textcolor{purple}{65.0 ($\downarrow$)} & 73.4 & \textcolor{purple}{70.6 ($\downarrow$)} & 95.1 & \textcolor{purple}{46.6 ($\downarrow$)} \\
 & Always fail (any $k$) & 16.0 & \textcolor{purple}{15.5 ($\downarrow$)} & 26.6 & \textcolor{teal}{28.4 ($\uparrow$)} & 4.9 & \textcolor{teal}{7.8 ($\uparrow$)} \\
\midrule
\Block{3-1}{\texttt{Qwen3.5-35B-A3B}} & Always pass (any $k$) & 1.0 & \textcolor{teal}{33.5 ($\uparrow$)} & 0.0 & \textcolor{teal}{3.2 ($\uparrow$)} & 0.0 & \textcolor{teal}{64.2 ($\uparrow$)} \\
 & Pass given compute & 86.0 & \textcolor{purple}{53.0 ($\downarrow$)} & 84.4 & \textcolor{purple}{71.4 ($\downarrow$)} & 95.1 & \textcolor{purple}{30.8 ($\downarrow$)} \\
 & Always fail (any $k$) & 13.0 & \textcolor{teal}{13.5 ($\uparrow$)} & 15.6 & \textcolor{teal}{25.4 ($\uparrow$)} & 4.9 & \textcolor{teal}{5.1 ($\uparrow$)} \\
\bottomrule
\end{NiceTabular}}
\end{table}

\begin{figure}[htb]
    \centering
    \includegraphics[width=0.9\linewidth]{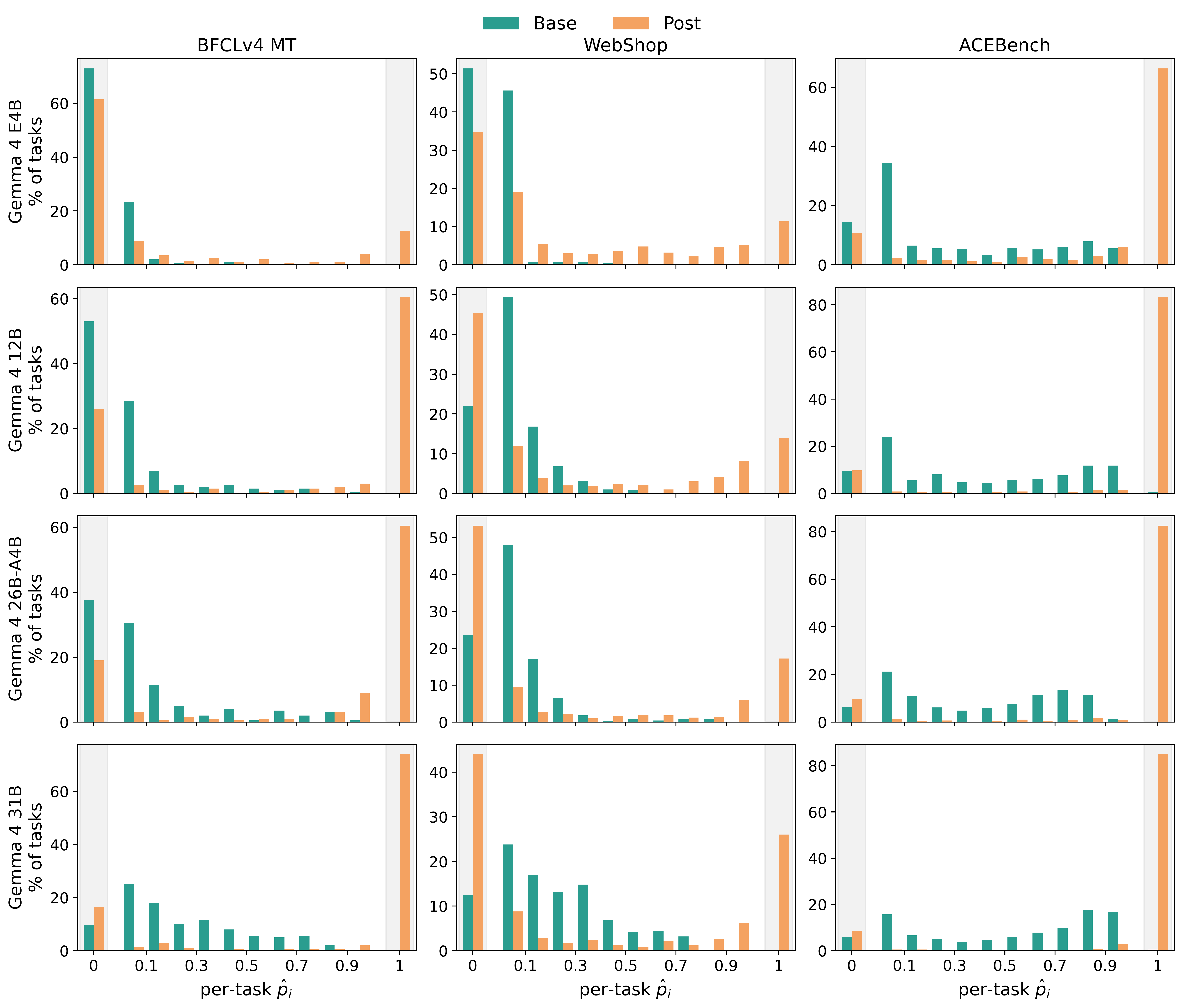}
    \caption{\textbf{Per-task success-rate distributions of the Gemma-4 base and post-trained models} (empirical success rate over 128 rollouts per task) for every backbone and benchmark. Post-training moves most of the intermediate mass to the two extremes at every model size and on every benchmark. Among the four families, this polarization is the strongest for Gemma-4, which also has the largest mean tax.}
    \label{fig:taskpassrate_dist_gemma4}
\end{figure}

\begin{figure}
    \centering
    \includegraphics[width=0.9\linewidth]{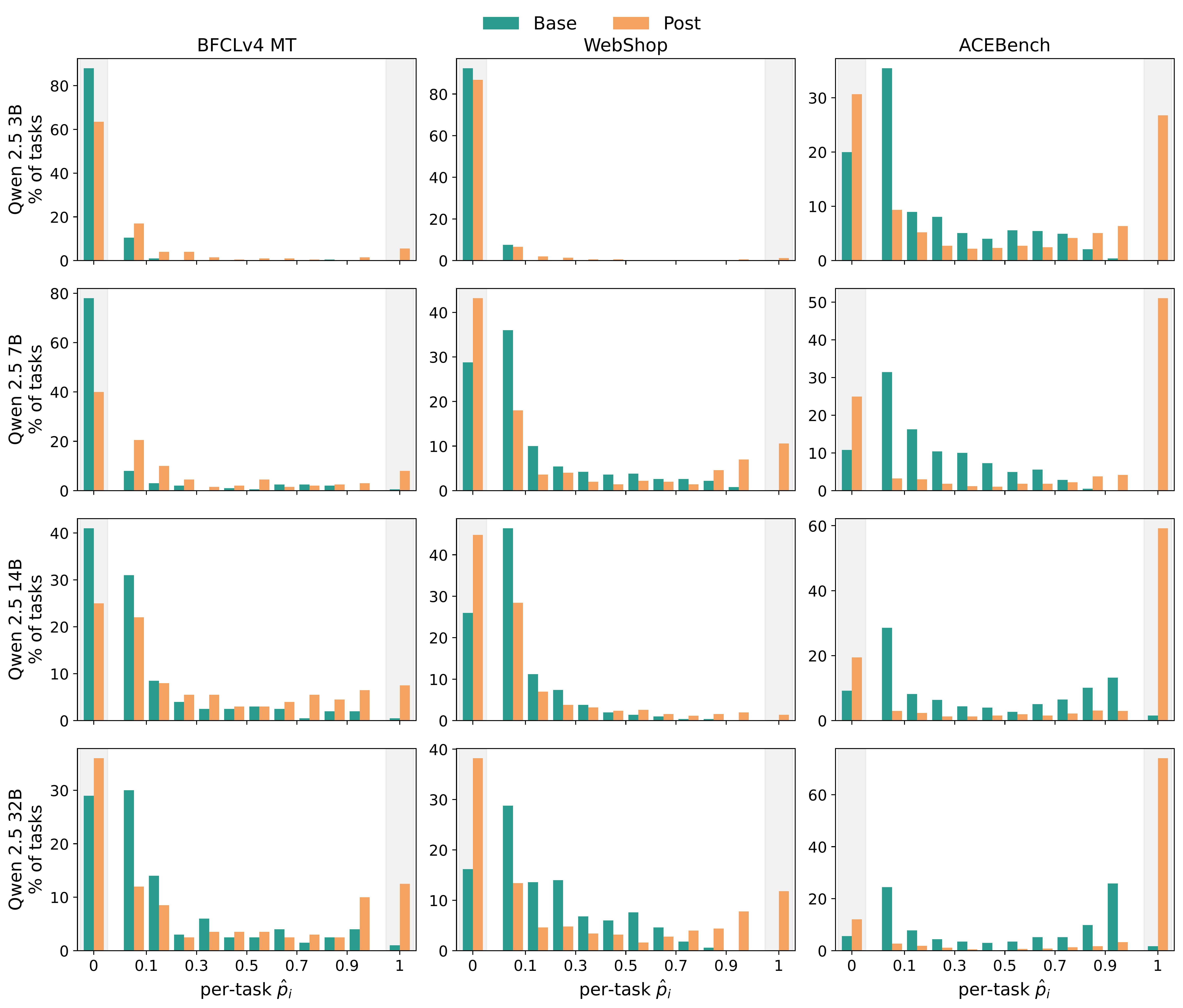}
    \caption{\textbf{Per-task success-rate distributions of the Qwen2.5 base and post-trained models} (empirical success rate over 128 rollouts per task) for every backbone and benchmark. Post-training moves the intermediate mass to the two extremes, as in the other families. 
    }
    \label{fig:taskpassrate_dist_qwen25}
\end{figure}

\begin{figure}
    \centering
    \includegraphics[width=0.9\linewidth]{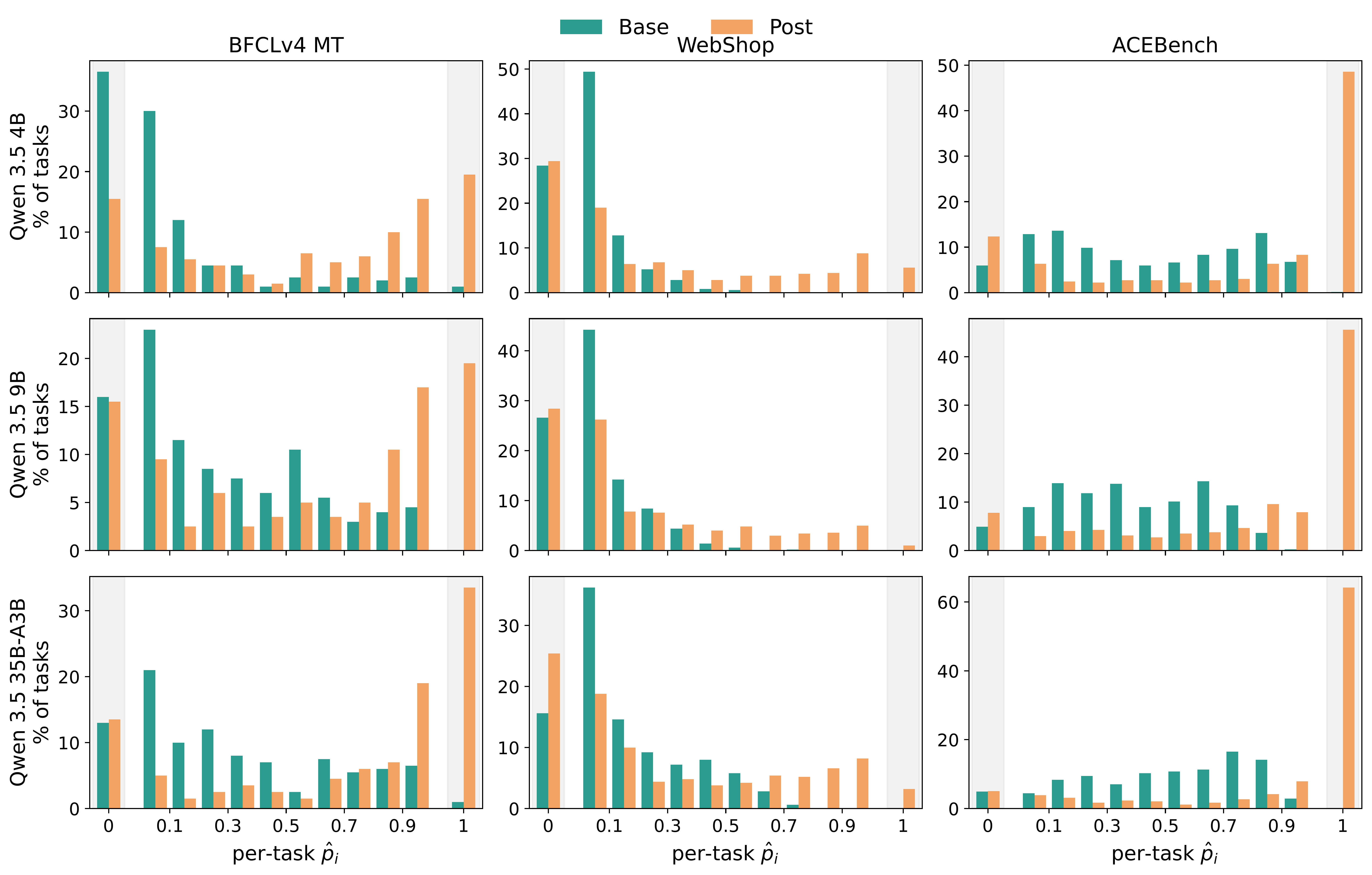}
    \caption{\textbf{Per-task success-rate distributions of the Qwen3.5 base and post-trained models} (empirical success rate over 128 rollouts per task) for every backbone and benchmark. Qwen3.5 shows the softest sharpening among the four families. 
    }
    \label{fig:taskpassrate_dist_qwen35}
\end{figure}

\begin{figure}
    \centering
    \includegraphics[width=0.9\linewidth]{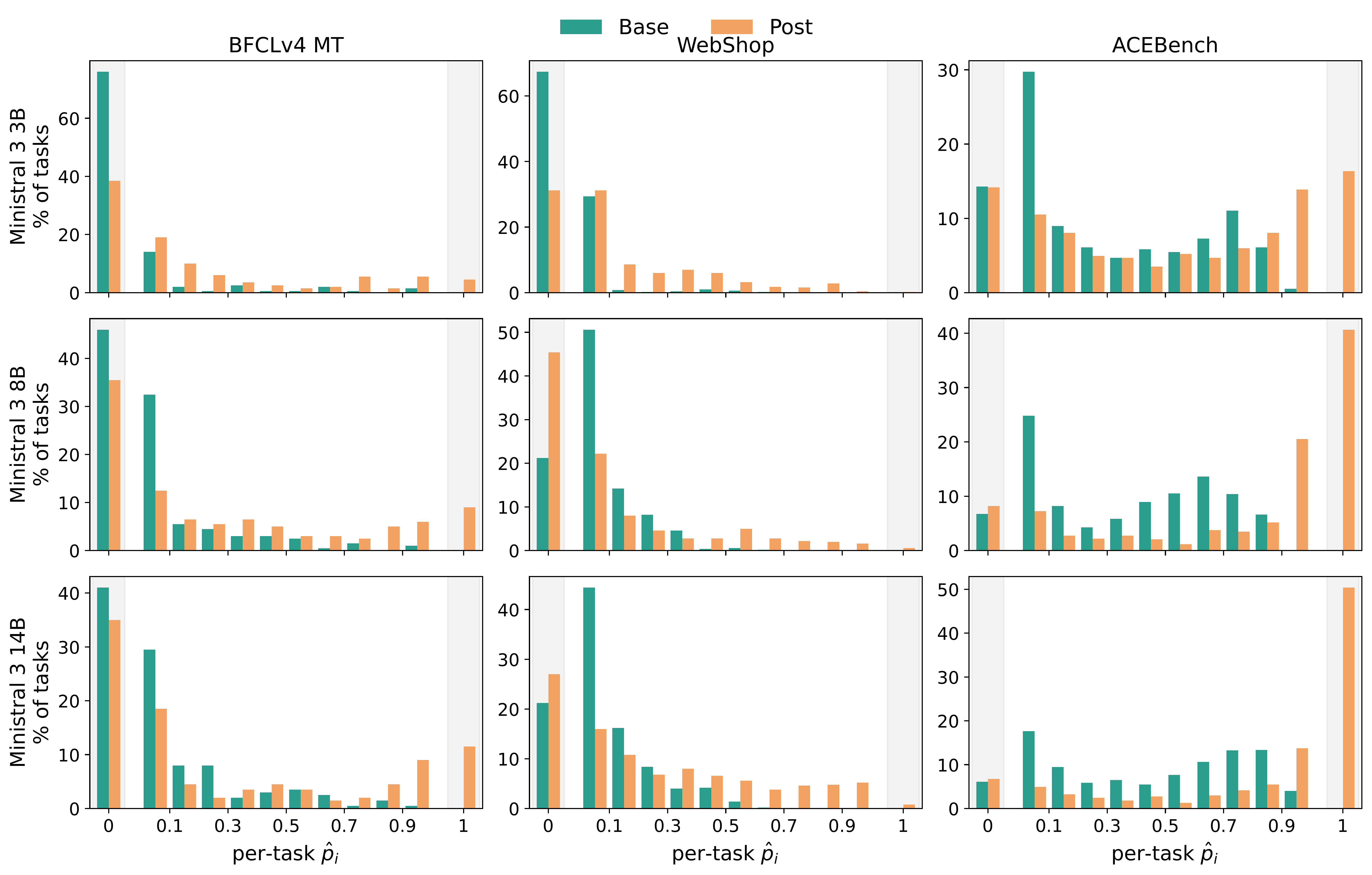}
    \caption{\textbf{Per-task success-rate distributions of the Ministral-3 base and post-trained models} (empirical success rate over 128 rollouts per task) for every backbone and benchmark. The polarization grows with model scale. For the 3B backbone, the base model rarely succeeds on BFCL and WebShop and post-training largely improves it, while the 8B and 14B backbones show the shift of mass to the two extremes as we observed in other backbones.}
    \label{fig:taskpassrate_dist_ministral3}
\end{figure}

\begin{figure}
    \centering
    \includegraphics[width=0.9\linewidth]{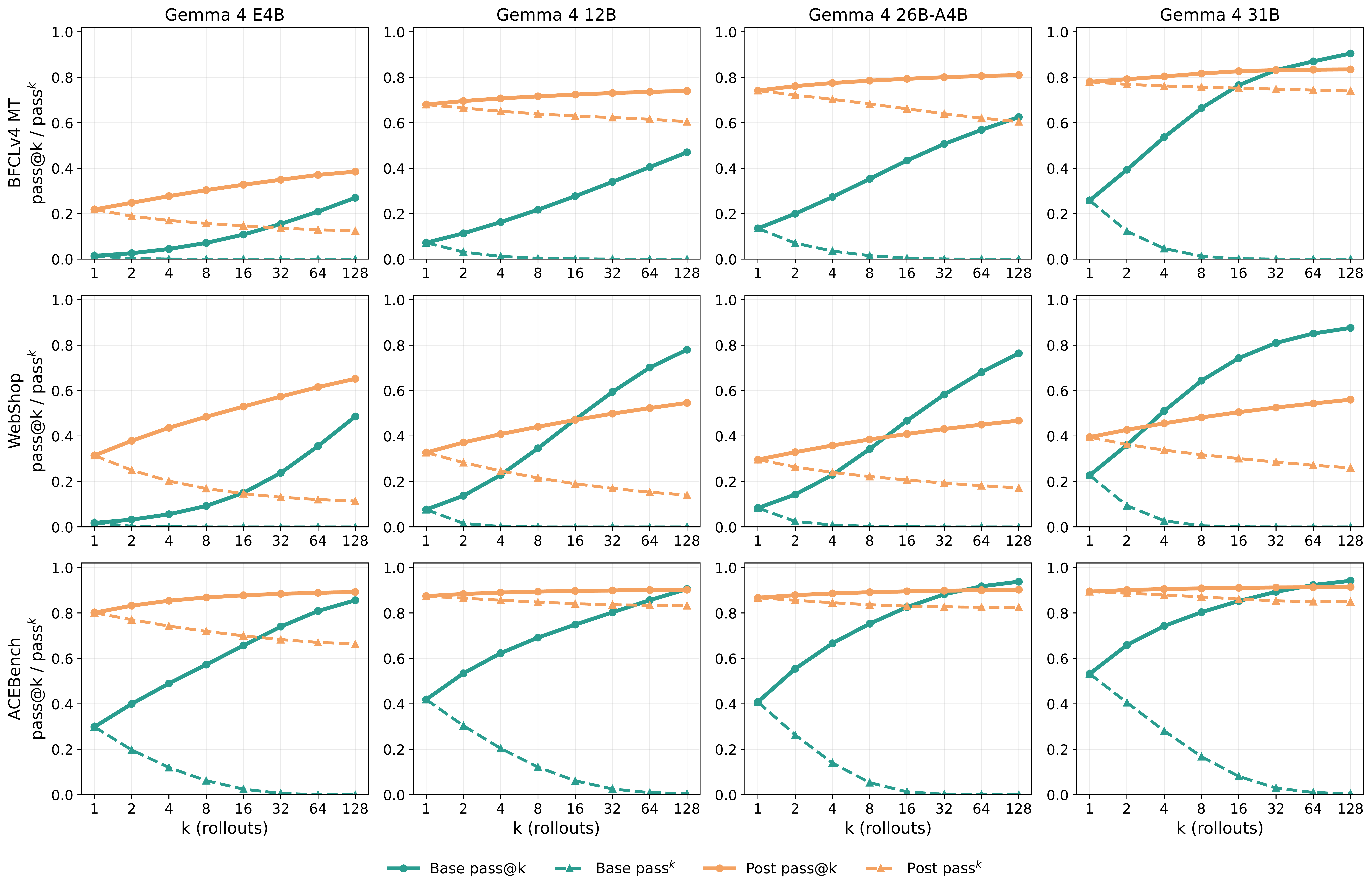}
    \caption{\textbf{Coverage (pass@$k$) and consistency ($\text{pass}^k$) curves of the Gemma-4 base and post-trained models} for every backbone and benchmark. The two curves of the post-trained model stay close together, as expected from a bimodalized policy. In contrast, the consistency of the base model drops toward zero while its coverage keeps rising. The base model overtakes the post-trained model in coverage at a smaller budget for larger backbones.}
    \label{fig:appendix_ttscurve_gemma4}
\end{figure}

\begin{figure}
    \centering
    \includegraphics[width=0.9\linewidth]{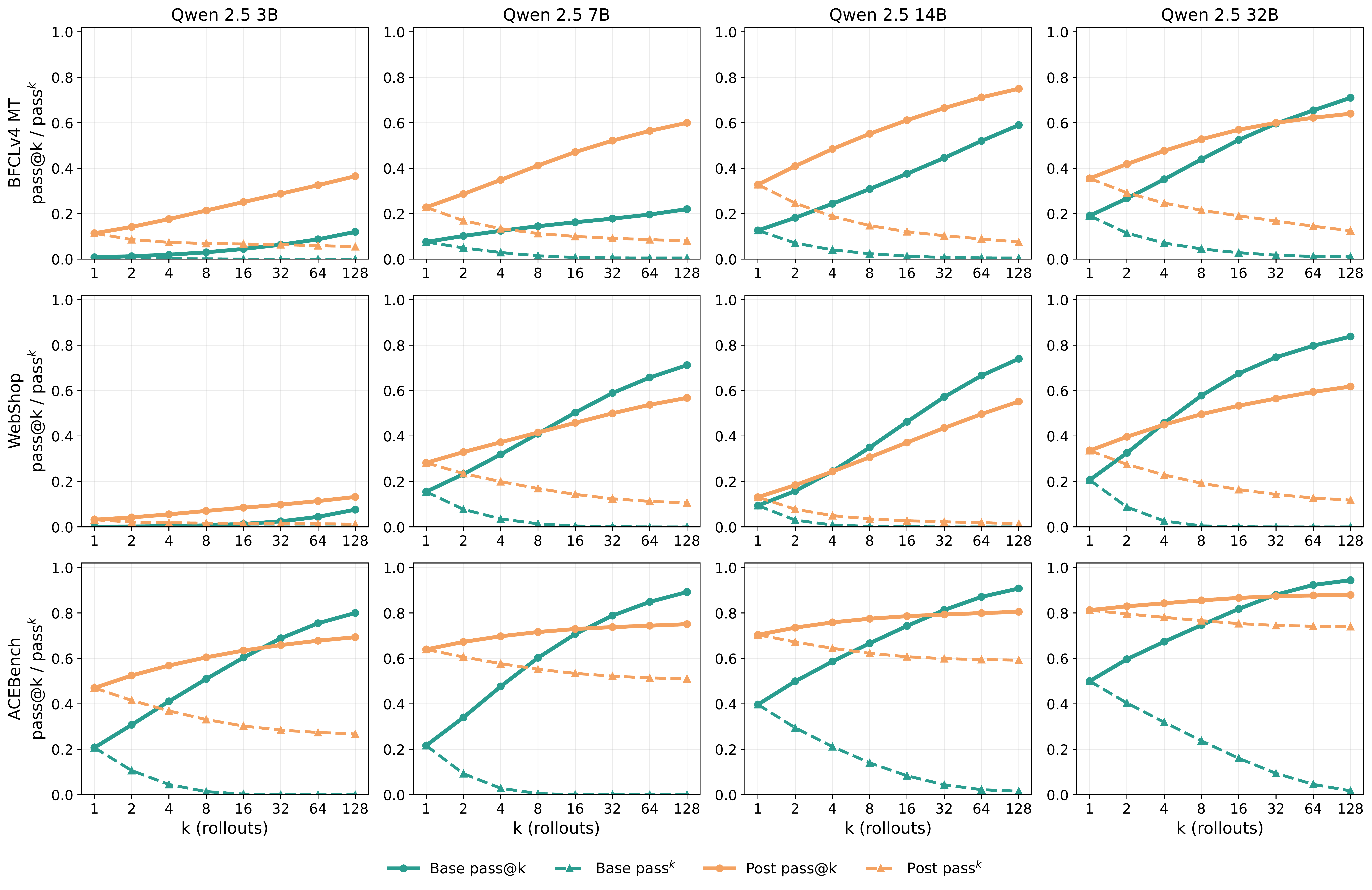}
    \caption{\textbf{Coverage (pass@$k$) and consistency ($\text{pass}^k$) curves of the Qwen2.5 base and post-trained models} for every backbone and benchmark. The consistency of the base model drops toward zero while its coverage keeps rising, and the two curves of the post-trained model stay much closer together.}
    \label{fig:appendix_ttscurve_qwen25}
\end{figure}

\begin{figure}
    \centering
    \includegraphics[width=0.9\linewidth]{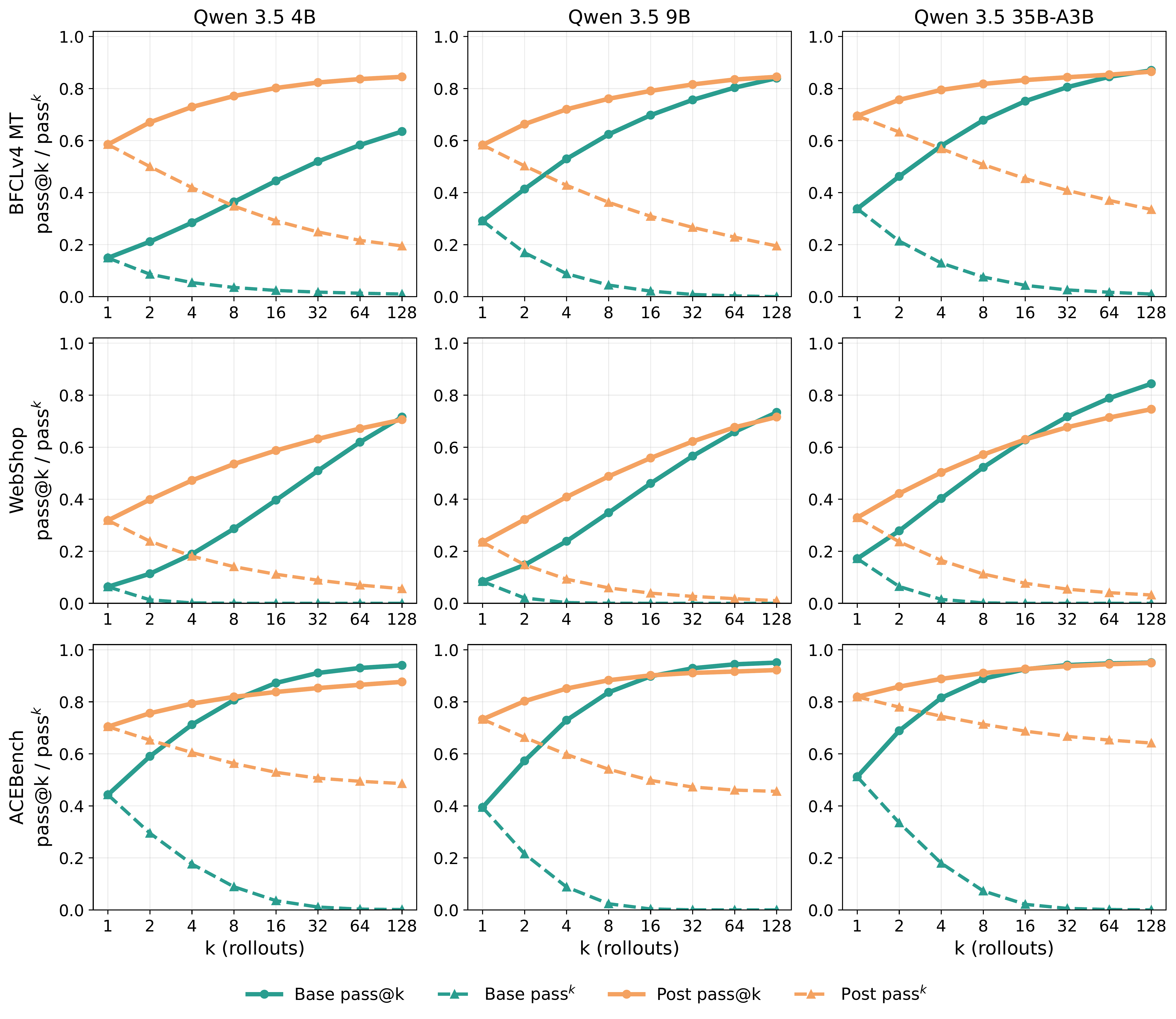}
    \caption{\textbf{Coverage (pass@$k$) and consistency ($\text{pass}^k$) curves of the Qwen3.5 base and post-trained models} for every backbone and benchmark. Since Qwen3.5 has the softest sharpening among the four families, the consistency curves of its post-trained models decay slightly instead of staying flat, and the coverage gaps between the base and post-trained models at $k=128$ are the smallest among the four families.}
    \label{fig:appendix_ttscurve_qwen35}
\end{figure}

\begin{figure}
    \centering
    \includegraphics[width=0.9\linewidth]{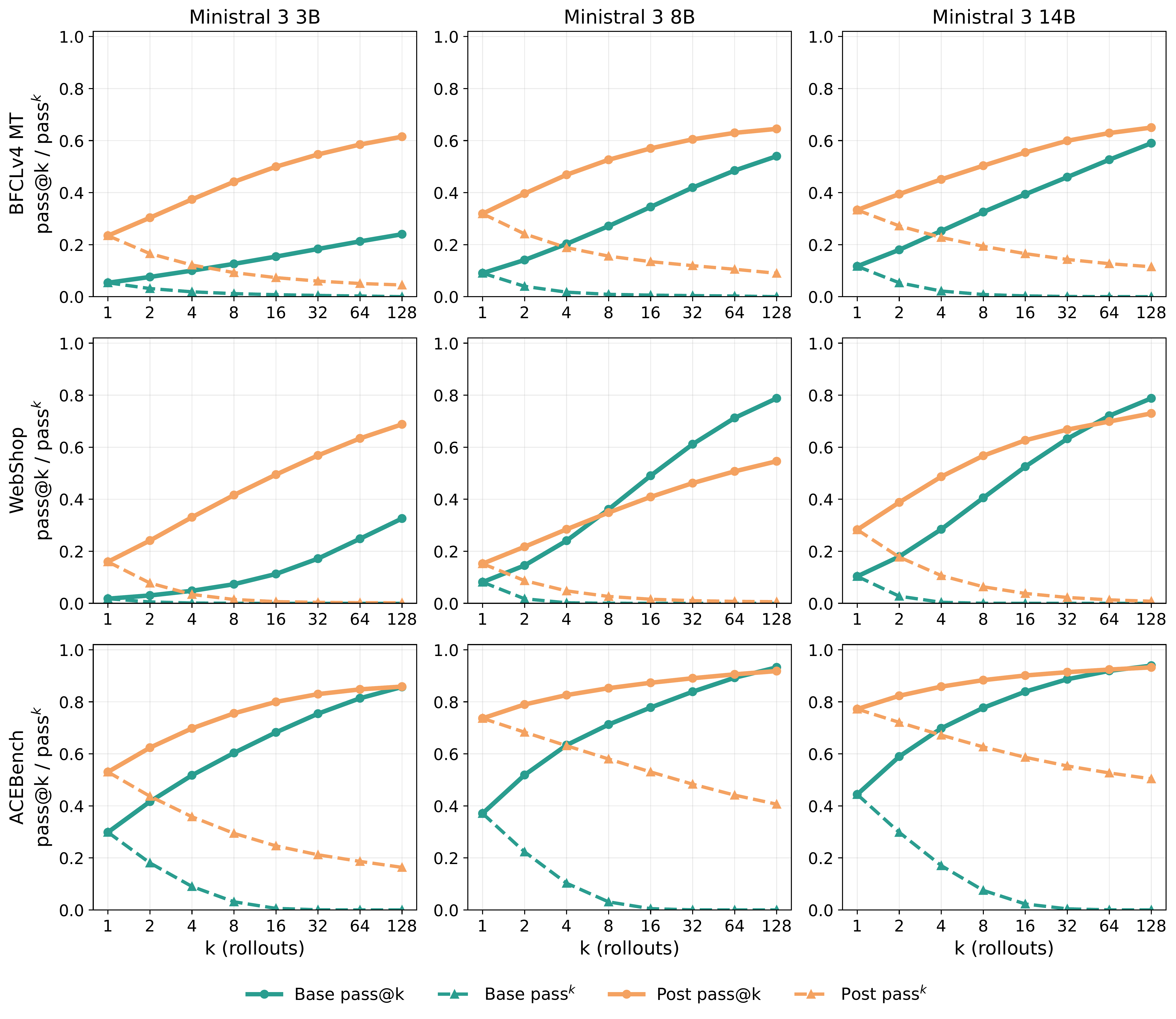}
    \caption{\textbf{Coverage (pass@$k$) and consistency ($\text{pass}^k$) curves of the Ministral-3 base and post-trained models} for every backbone and benchmark. The consistency of the base model drops toward zero while its coverage keeps rising, and the two curves of the post-trained model stay much closer together.}
    \label{fig:appendix_ttscurve_ministral3}
\end{figure}

\begin{figure}
    \centering
    \includegraphics[width=0.9\linewidth]{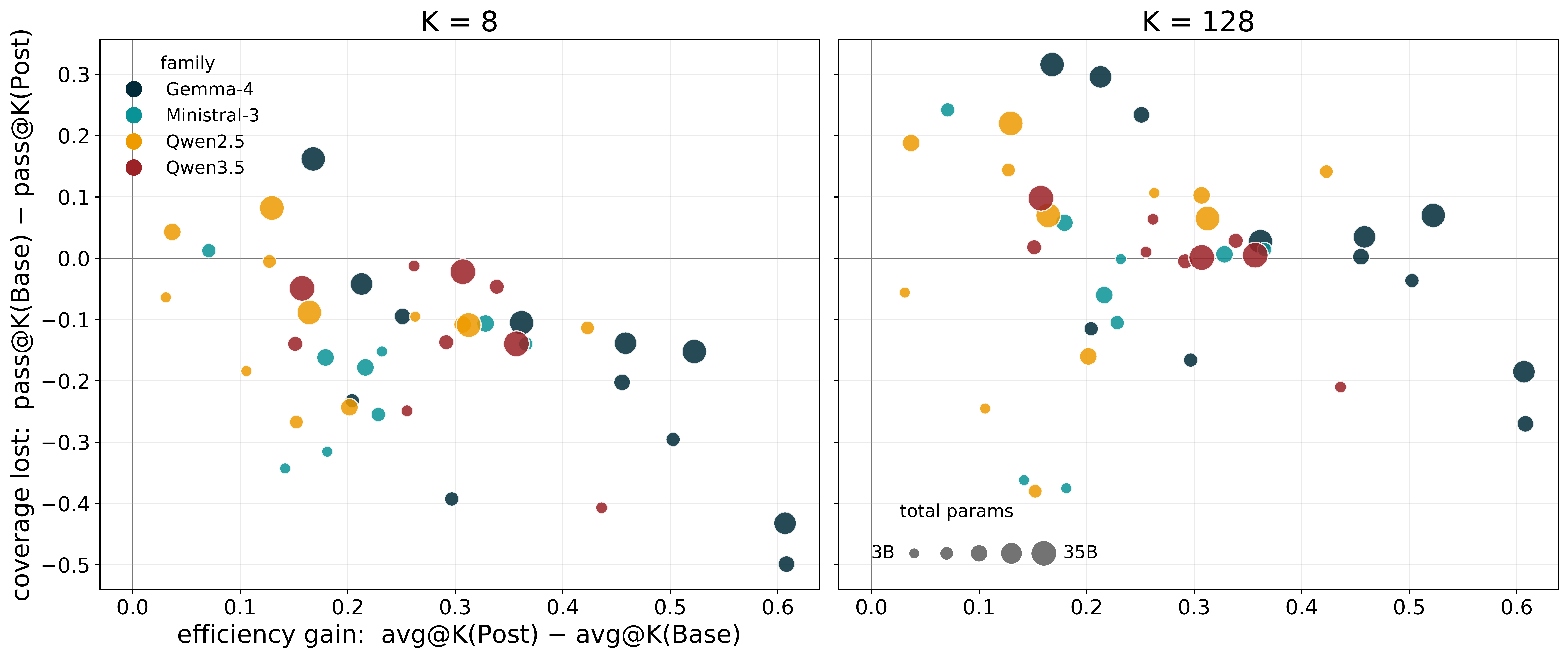}
    \caption{\textbf{Coverage lost versus sampling efficiency gained by post-training} at $K{=}8$ (left) and $K=128$ (right). The $y$-axis is the coverage lost, $\text{pass}@K_{\text{base}} - \text{pass}@K_{\text{RL}}$, and the $x$-axis is the efficiency gained, $\text{avg}@K_{\text{RL}} - \text{avg}@K_{\text{base}}$, where avg@$K$ is the mean success rate over $K$ rollouts. Each point is one model--benchmark combination, colored by family, and the marker size grows with the total number of parameters. At $K{=}8$, most points lie below zero, i.e., post-training is beneficial in both coverage and efficiency when compute is scarce. At $K=128$, many points, mostly from the larger backbones, cross above zero, implying that post-training still gains efficiency, but now pays for it with lost coverage.}
    \label{fig:coveff_lostgain}
\end{figure}

\FloatBarrier
\subsection{Per-Task Scaling Curves and Their Density}
\label{appendix:webshop_density}

Figure~\ref{fig:webshop_density_gemma} and Figure~\ref{fig:bfcl_density_gemma} show the scaling curves of individual WebShop tasks and BFCLv4 tasks (128 rollouts per policy) for the smallest and largest backbones of Gemma-4, the family with the steepest sharpening. The top row of each block shows example tasks of each policy at the harder, intermediate, and easier difficulty ranks, and the bottom row shows the density of all 500 per-task curves together with their mean.

\begin{figure}[htb]
    \centering
    \includegraphics[width=\textwidth]{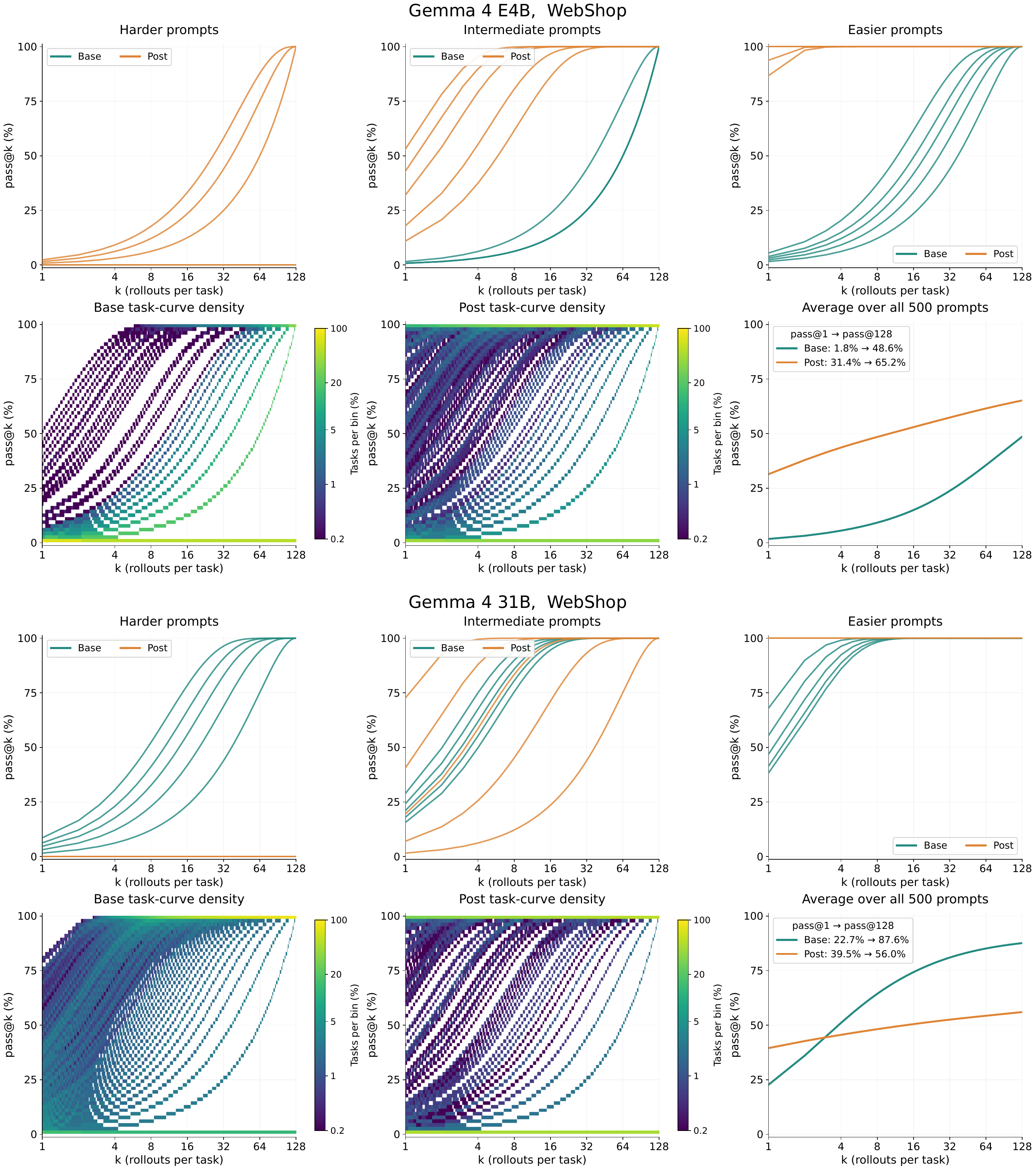}
    \caption{\textbf{Per-task scaling curves on WebShop for the smallest and largest Gemma-4 backbones} (\texttt{gemma-4-E4B}, top two rows; \texttt{gemma-4-31B}, bottom two rows). In each block, the top row shows five base (teal) and five post-trained (orange) task curves from the harder (5--25\%), intermediate (40--60\%), and easier (75--95\%) rank quantiles. Tasks are ranked separately for each policy by their success counts, among the tasks that at least one policy solves at least once. The bottom row shows the density of all 500 per-task curves of each policy and their mean pass@$k$, with pass@$1$ and pass@$128$ given in the legend. All curves use the unbiased estimator from 128 rollouts, and a flat curve at zero means no observed success.}
    \label{fig:webshop_density_gemma}
\end{figure}

\begin{figure}[htb]
    \centering
    \includegraphics[width=\textwidth]{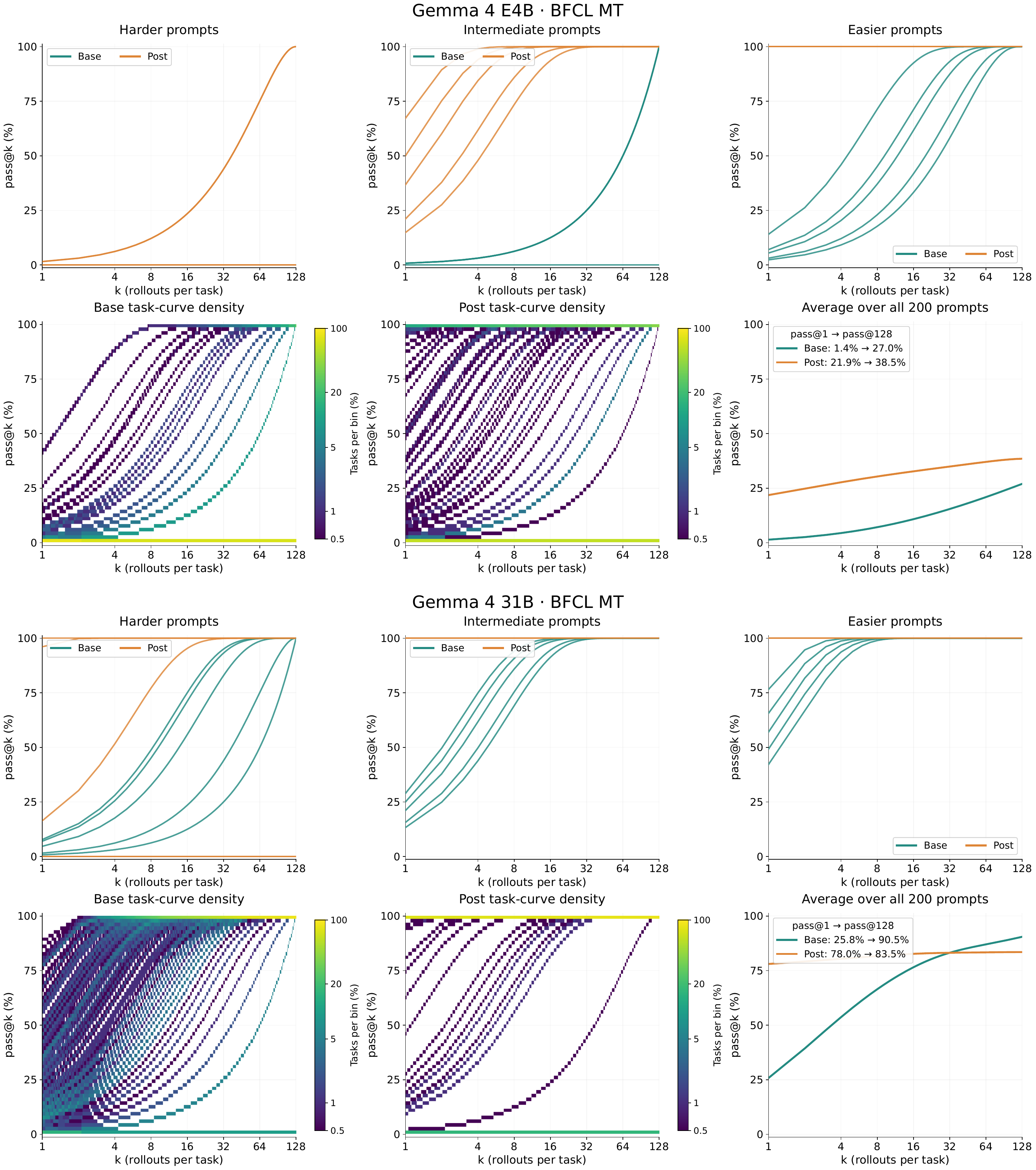}
    \caption{\textbf{Per-task scaling curves on BFCLv4 for the smallest and largest Gemma-4 backbones} (\texttt{gemma-4-E4B}, top two rows; \texttt{gemma-4-31B}, bottom two rows). In each block, the top row shows five base (teal) and five post-trained (orange) task curves from the harder (5--25\%), intermediate (40--60\%), and easier (75--95\%) rank quantiles. Tasks are ranked separately for each policy by their success counts, among the tasks that at least one policy solves at least once. The bottom row shows the density of all 200 per-task curves of each policy and their mean pass@$k$, with pass@$1$ and pass@$128$ given in the legend. All curves use the unbiased estimator from 128 rollouts, and a flat curve at zero means no observed success.}
    \label{fig:bfcl_density_gemma}
\end{figure}

\noindent\textbf{Bimodalization across the budget axis.} We first look at the largest backbone, \texttt{gemma-4-31B}, for which post-training loses coverage. The example curves in the top row already show the bimodalization of \S\ref{section:observation} (presented as an aggregate histogram) at the level of individual tasks. In the easier quantile, the post-trained model solves all five example tasks at $k{=}1$, whereas the base model needs up to about 16 rollouts to saturate. In the harder quantile, the order is reversed. The base model starts from a few percent but reaches 100\% on all five example tasks within the budget, while the post-trained model mostly stays at zero pass@$k$ for all 128 rollouts (on all five example tasks on WebShop). The intermediate quantile lies in between. On BFCLv4, the post-trained model already saturates at $k{=}1$ here as well, while on WebShop its curves in this quantile still climb with $k$.
The density panels in the bottom row confirm that this pattern holds for the whole task set, not only for the examples. The density of the post-trained model concentrates on the two edges at every $k$, i.e., always-solved tasks along the top (100\% pass@$k$) and never-solved tasks along the bottom (zero pass@$k$), with a much sparser band in between. The density of the base model instead spreads over the intermediate region and moves upward as $k$ grows, so that most tasks reach the top only at large $k$. These are the tasks that retries turn into successes, and they explain why the mean base curve overtakes the mean post-trained curve, after only a few rollouts on WebShop and at around $k{=}30$ on BFCLv4, and ends well above it at $k{=}128$.

\noindent\textbf{Contrast with the smallest backbone.} The smallest backbone, \texttt{gemma-4-E4B}, behaves differently because its base model rarely succeeds at all (pass@$1$ of 1.8\% on WebShop and 1.4\% on BFCLv4). Most of the density of the base model lies on the bottom edge, and most of the tasks it does solve rise only late along the lowest curves, i.e., it succeeds on them in only a few of the 128 rollouts. The example curves show the consequence. In the harder quantile, the base model stays at zero pass@$k$, whereas the post-trained model still reaches 100\% on some example tasks by $k{=}128$. Even in the easier quantile, the base model needs tens of rollouts to saturate, whereas the post-trained model saturates within one or two rollouts. Here, post-training mainly supplies skills that the base model lacks, e.g., well-formed tool calls, and moves tasks off the never-solved edge rather than polarizing tasks that the base model can already solve. The density of the post-trained model still has an always-solved band at the top, but, especially on BFCLv4, it keeps more curves in the intermediate region than that of the post-trained \texttt{gemma-4-31B}. As a result, the mean post-trained curve stays above the mean base curve at every budget (65.2\% vs.\ 48.6\% on WebShop and 38.5\% vs.\ 27.0\% on BFCLv4 at $k{=}128$), consistent with the comparison of the smallest and largest backbones in \S\ref{appendix:agg_scopes}.
The WebShop curves of the other three families (not shown) follow the same contrast in a milder form. The base model of the largest backbone overtakes its post-trained counterpart within the budget, while that of the smallest backbone stays behind or only catches up at $k{=}128$.

\subsection{Global Temperature Scaling Cannot Resolve the Trade-off}
\label{appendix:global_temperature}

\begin{figure*}[thb]
     \centering
     \includegraphics[width=0.94\textwidth]{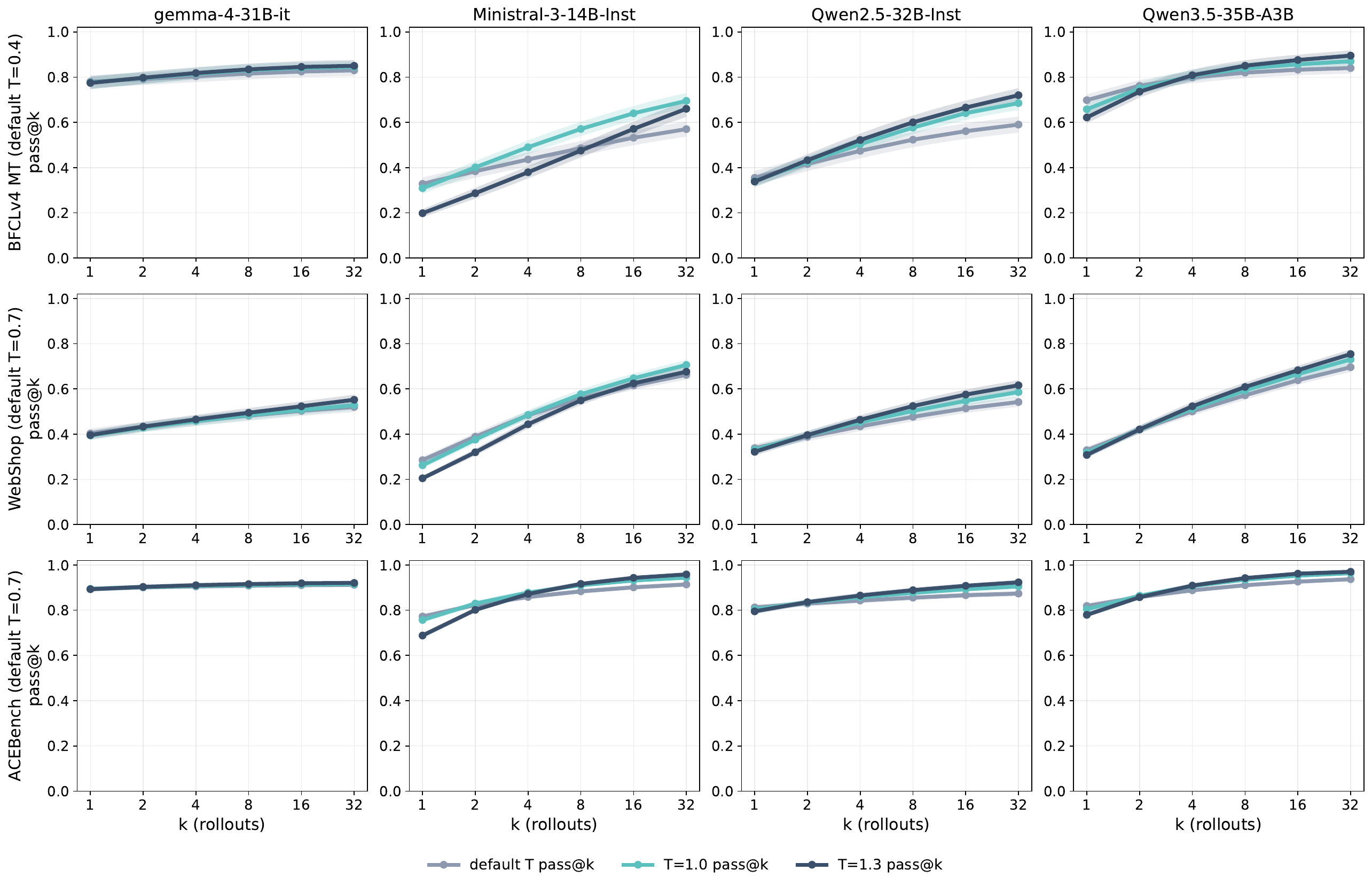}
     \vspace{-0.75em}
     \caption{\textbf{Global temperature scaling trades single-sample accuracy for coverage rather than improving both.} pass@$k$ of the post-trained model of the largest backbone in each family (columns) on BFCLv4 MT, WebShop, and ACEBench (rows), at the default temperature of each benchmark (0.4 for BFCL and 0.7 for the others) and at $T \in \{1.0, 1.3\}$. Shaded bands show $\pm 1$ bootstrap standard error over tasks (200 resamples). A higher temperature usually improves pass@$k$ at larger budgets, but pass@$1$ drops in some cases.}
     \vspace{-1em}
     \label{fig:temperaturescaling-rl}
\end{figure*}

A simple way to recover the lost coverage is to flatten the post-trained policy with a higher sampling temperature at inference, i.e., to heat every prompt with higher temperature. Figure~\ref{fig:temperaturescaling-rl} increases the temperature of the post-trained model of the largest backbone in each family from the default of each benchmark up to $T{=}1.3$. A higher temperature usually improves pass@$k$ at larger budgets, which confirms that part of the lost coverage can be recovered by widening the sampling distribution~\citep{dang2025weight}. 
However, pass@$1$ does not improve in any of the twelve panels and even drops clearly in some cases, e.g., \texttt{Ministral-3-14B-Instruct-2512}. 
The coverage gains are also neither uniform nor monotone in $T$. \texttt{gemma-4-31B-it}, the most sharply post-trained policy, hardly responds to heating, and for \texttt{Ministral-3-14B-Instruct-2512}, $T{=}1.0$ gives a higher pass@$32$ than $T{=}1.3$ on BFCL and WebShop. 
In summary, a single global temperature therefore moves the policy along the accuracy--coverage trade-off and cannot improve both. This suggests that \textbf{a prompt-adaptive temperature is needed to push the accuracy--coverage frontier upward}, which motivates PTGS in \S\ref{section:solution}.

\FloatBarrier
\section{Additional Results on the Sharpening Tax}
\label{appendix:results_tax}

This section gives additional results for \S\ref{section:diagnosis}. We report both tax variants for all 14 checkpoint pairs together with the full numerical results (\S\ref{appendix:tax_allpairs}), decompose the tax by how tasks move after post-training (\S\ref{appendix:tax_decomposition}), show the scaling curves of the tasks that contribute most to the tax (\S\ref{appendix:tax_extremes}), and use a cheap tax estimate to route each task to the base or the post-trained policy (\S\ref{appendix:routing}).

\subsection{Sharpening Tax for All Fourteen Checkpoint Pairs}
\label{appendix:tax_allpairs}

Figure~\ref{fig:sharpeningtax_all14} shows $\text{Tax}_A(k)$ and $\text{Tax}_{S}(k)$ as a function of the rollout budget for all 14 base/post-trained pairs, extending Figure~\ref{fig:sharpeningtax} of the main text. Table~\ref{table:two_metric_summary_allfam} summarizes all 42 model-benchmark evaluations. Every evaluation uses $N=128$ rollouts per task for both models and the same task set for each benchmark (200, 500, and 770 tasks for BFCL, WebShop, and ACEBench). For each combination, we report the coverage $C = \text{pass}@128$ of both models, the coverage gap $\Delta C = C_B - C_P$ with a 95\% paired task-bootstrap confidence interval, the budget-averaged coverage $\overline{C}(128)$ of both models, and the two tax metrics. By Corollary~\ref{cor:two-metric}, the raw scalability of each model is $A(128) = 128\,(C - \overline{C}(128))$.

\begin{figure}[htb]
     \centering
     \includegraphics[width=\linewidth]{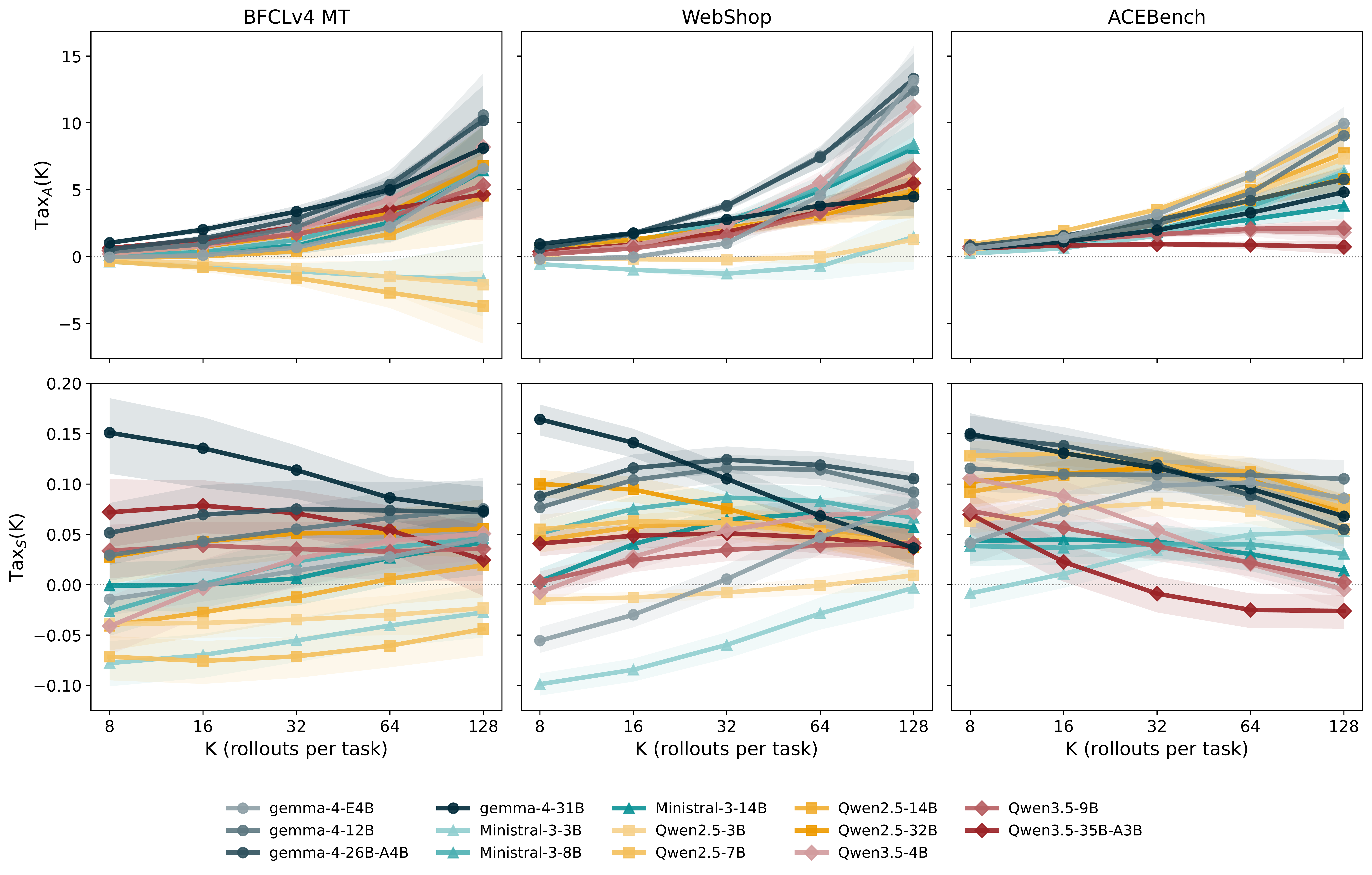}
     \caption{\textbf{Sharpening Tax as a function of the rollout budget for all 14 base/RL pairs.} $\text{Tax}_A(k)$ (top) and $\text{Tax}_{S}(k)$ (bottom) on BFCLv4 MT, WebShop, and ACEBench, with one curve per checkpoint pair and shaded 95\% bootstrap confidence intervals. Positive values mean that post-training reduces test-time scalability relative to the base model. The raw tax starts near zero and grows with the budget for almost every pair, while the calibrated tax can either rise or fall with the budget. Both variants stay negative at $k=128$ only for \texttt{Ministral-3-3B}, \texttt{Qwen2.5-3B}, and \texttt{Qwen2.5-7B} on BFCL, where post-training repairs the tool-call formatting of the base model.}
     \label{fig:sharpeningtax_all14}
\end{figure}

\begin{table}[h]
\centering
\caption{\textbf{Summary of the 42 model--benchmark evaluations at $k=128$.} $C_B$ and $C_P$ are the coverage ($\text{pass}@128$) of the base and post-trained (RL) models, $\Delta C = C_B - C_P$ is the coverage gap with a 95\% paired task-bootstrap confidence interval, and $\overline{C}_B$ and $\overline{C}_P$ are the budget-averaged coverage $\overline{C}(128) = \frac{1}{128}\sum_{k=1}^{128}\text{pass}@k$ of the two models. The raw scalability of each model is $A(128) = 128\,(C - \overline{C}(128))$, and the last two columns are the raw and calibrated Sharpening Tax at $k=128$. In every combination, the post-trained model reaches a larger fraction of its ceiling ($\overline{C}/C$) than the base model.}
\label{table:two_metric_summary_allfam}
\resizebox{\linewidth}{!}{%
\begin{tabular}{llrrrrrrr}
\toprule
Benchmark & Pair & $C_B$ & $C_P$ & $\Delta C$ [95\% CI] & $\overline{C}_B$ & $\overline{C}_P$ & Tax$_A$ & Tax$_{S}$ \\
\midrule
BFCLv4 MT & Gemma 4 E4B & 0.270 & 0.385 & -0.115 [-0.190,-0.050] & 0.194 & 0.360 & +6.59 & +0.046 \\
BFCLv4 MT & Gemma 4 12B & 0.470 & 0.740 & -0.270 [-0.345,-0.200] & 0.380 & 0.733 & +10.59 & +0.075 \\
BFCLv4 MT & Gemma 4 26B-A4B & 0.625 & 0.810 & -0.185 [-0.255,-0.120] & 0.538 & 0.803 & +10.18 & +0.072 \\
BFCLv4 MT & Gemma 4 31B & 0.905 & 0.835 & +0.070 [+0.015,+0.120] & 0.838 & 0.831 & +8.11 & +0.073 \\
BFCLv4 MT & Ministral 3 3B & 0.240 & 0.615 & -0.375 [-0.445,-0.300] & 0.201 & 0.562 & -1.71 & -0.027 \\
BFCLv4 MT & Ministral 3 8B & 0.540 & 0.645 & -0.105 [-0.190,-0.025] & 0.454 & 0.611 & +6.76 & +0.046 \\
BFCLv4 MT & Ministral 3 14B & 0.590 & 0.650 & -0.060 [-0.140,+0.030] & 0.499 & 0.609 & +6.44 & +0.042 \\
BFCLv4 MT & Qwen 2.5 3B & 0.120 & 0.365 & -0.245 [-0.310,-0.180] & 0.082 & 0.311 & -2.09 & -0.023 \\
BFCLv4 MT & Qwen 2.5 7B & 0.220 & 0.600 & -0.380 [-0.460,-0.305] & 0.190 & 0.542 & -3.69 & -0.044 \\
BFCLv4 MT & Qwen 2.5 14B & 0.590 & 0.750 & -0.160 [-0.230,-0.085] & 0.491 & 0.687 & +4.56 & +0.019 \\
BFCLv4 MT & Qwen 2.5 32B & 0.710 & 0.640 & +0.070 [-0.005,+0.140] & 0.624 & 0.607 & +6.81 & +0.056 \\
BFCLv4 MT & Qwen 3.5 4B & 0.635 & 0.845 & -0.210 [-0.270,-0.155] & 0.551 & 0.825 & +8.21 & +0.051 \\
BFCLv4 MT & Qwen 3.5 9B & 0.840 & 0.845 & -0.005 [-0.055,+0.045] & 0.774 & 0.821 & +5.36 & +0.036 \\
BFCLv4 MT & Qwen 3.5 35B-A3B & 0.870 & 0.865 & +0.005 [-0.035,+0.045] & 0.816 & 0.848 & +4.67 & +0.025 \\
\midrule
WebShop & Gemma 4 E4B & 0.486 & 0.652 & -0.166 [-0.212,-0.116] & 0.327 & 0.596 & +13.19 & +0.081 \\
WebShop & Gemma 4 12B & 0.780 & 0.546 & +0.234 [+0.188,+0.280] & 0.648 & 0.511 & +12.45 & +0.092 \\
WebShop & Gemma 4 26B-A4B & 0.764 & 0.468 & +0.296 [+0.246,+0.346] & 0.633 & 0.441 & +13.32 & +0.105 \\
WebShop & Gemma 4 31B & 0.876 & 0.560 & +0.316 [+0.272,+0.360] & 0.816 & 0.534 & +4.49 & +0.036 \\
WebShop & Ministral 3 3B & 0.326 & 0.688 & -0.362 [-0.410,-0.316] & 0.228 & 0.601 & +1.49 & -0.003 \\
WebShop & Ministral 3 8B & 0.788 & 0.546 & +0.242 [+0.194,+0.290] & 0.660 & 0.484 & +8.42 & +0.066 \\
WebShop & Ministral 3 14B & 0.788 & 0.730 & +0.058 [+0.022,+0.096] & 0.673 & 0.678 & +8.11 & +0.057 \\
WebShop & Qwen 2.5 3B & 0.076 & 0.132 & -0.056 [-0.090,-0.022] & 0.043 & 0.109 & +1.28 & +0.009 \\
WebShop & Qwen 2.5 7B & 0.712 & 0.568 & +0.144 [+0.100,+0.192] & 0.620 & 0.519 & +5.50 & +0.041 \\
WebShop & Qwen 2.5 14B & 0.740 & 0.552 & +0.188 [+0.138,+0.236] & 0.619 & 0.470 & +4.97 & +0.039 \\
WebShop & Qwen 2.5 32B & 0.838 & 0.618 & +0.220 [+0.184,+0.260] & 0.763 & 0.579 & +4.69 & +0.037 \\
WebShop & Qwen 3.5 4B & 0.716 & 0.706 & +0.010 [-0.030,+0.050] & 0.574 & 0.651 & +11.21 & +0.072 \\
WebShop & Qwen 3.5 9B & 0.734 & 0.716 & +0.018 [-0.012,+0.048] & 0.614 & 0.647 & +6.56 & +0.041 \\
WebShop & Qwen 3.5 35B-A3B & 0.844 & 0.746 & +0.098 [+0.064,+0.130] & 0.747 & 0.692 & +5.51 & +0.037 \\
\midrule
ACEBench & Gemma 4 E4B & 0.856 & 0.892 & -0.036 [-0.062,-0.012] & 0.771 & 0.885 & +9.96 & +0.086 \\
ACEBench & Gemma 4 12B & 0.905 & 0.903 & +0.003 [-0.023,+0.030] & 0.832 & 0.900 & +9.05 & +0.105 \\
ACEBench & Gemma 4 26B-A4B & 0.938 & 0.903 & +0.035 [+0.010,+0.060] & 0.889 & 0.899 & +5.78 & +0.055 \\
ACEBench & Gemma 4 31B & 0.942 & 0.914 & +0.027 [+0.005,+0.048] & 0.902 & 0.912 & +4.83 & +0.068 \\
ACEBench & Ministral 3 3B & 0.857 & 0.858 & -0.001 [-0.027,+0.025] & 0.780 & 0.831 & +6.42 & +0.053 \\
ACEBench & Ministral 3 8B & 0.932 & 0.918 & +0.014 [-0.006,+0.035] & 0.862 & 0.897 & +6.21 & +0.031 \\
ACEBench & Ministral 3 14B & 0.939 & 0.932 & +0.006 [-0.010,+0.026] & 0.894 & 0.917 & +3.80 & +0.014 \\
ACEBench & Qwen 2.5 3B & 0.800 & 0.694 & +0.106 [+0.071,+0.140] & 0.715 & 0.667 & +7.32 & +0.055 \\
ACEBench & Qwen 2.5 7B & 0.892 & 0.751 & +0.142 [+0.110,+0.169] & 0.808 & 0.740 & +9.26 & +0.076 \\
ACEBench & Qwen 2.5 14B & 0.908 & 0.805 & +0.103 [+0.074,+0.131] & 0.837 & 0.795 & +7.74 & +0.084 \\
ACEBench & Qwen 2.5 32B & 0.944 & 0.879 & +0.065 [+0.039,+0.088] & 0.893 & 0.873 & +5.83 & +0.072 \\
ACEBench & Qwen 3.5 4B & 0.940 & 0.877 & +0.064 [+0.043,+0.084] & 0.907 & 0.858 & +1.81 & -0.005 \\
ACEBench & Qwen 3.5 9B & 0.951 & 0.922 & +0.029 [+0.012,+0.045] & 0.922 & 0.910 & +2.14 & +0.003 \\
ACEBench & Qwen 3.5 35B-A3B & 0.951 & 0.949 & +0.001 [-0.016,+0.018] & 0.935 & 0.939 & +0.74 & -0.026 \\
\bottomrule
\end{tabular}}
\end{table}

\FloatBarrier
\subsection{Decomposition of the Tax by Task Movement}
\label{appendix:tax_decomposition}

Table~\ref{table:taxA_decomposition_allfam} splits $\text{Tax}_A(128)$ into three parts, the contribution of tasks solved only by the base model (lost), that of tasks solved only by the post-trained model (gained), and the difference in curve shape on tasks solved by both (shape). This is an empirical, finite-$K$ counterpart of the sharpening model in Theorem~\ref{thm:proportional-tax}. The lost term is positive in every combination, whereas a large gained term appears mainly for the small backbones on BFCL and WebShop, where post-training repairs the base model. Figure~\ref{fig:task_movement_scatter} shows how individual tasks move for the largest backbone of each family.

\begin{table}[h]
\centering
\caption{\textbf{Decomposition of $\text{Tax}_A(128)$ by task movement for all 42 model-benchmark combinations.} Each task is assigned to a category by which model solves it at least once in 128 rollouts, i.e., \emph{Lost} (only the base model), \emph{Gained} (only the post-trained model), \emph{Both}, or \emph{Neither}. The left block reports the proportion of tasks in each category. The right block splits $\text{Tax}_A(128)$ into the retry value lost on Lost tasks (positive), the retry value recovered on Gained tasks (negative), and the difference in curve shape on tasks solved by both models. The three contributions sum to the total in the last column.}
\label{table:taxA_decomposition_allfam}
\resizebox{0.88\linewidth}{!}{%
\begin{tabular}{llrrrrrrrr}
\toprule
 & & \multicolumn{4}{c}{Task categories (\% of tasks)} & \multicolumn{3}{c}{Contributions to Tax$_A$(128)} & \\
\cmidrule(lr){3-6}\cmidrule(lr){7-9}
Benchmark & Pair & Lost & Gained & Both & Neither & Lost ($+$) & Gained ($-$) & Shape & Tax$_A$(128) \\
\midrule
BFCLv4 MT & Gemma 4 E4B & 8.0 & 19.5 & 19.0 & 53.5 & +3.88 & -2.25 & +4.97 & +6.59 \\
BFCLv4 MT & Gemma 4 12B & 4.0 & 31.0 & 43.0 & 22.0 & +1.73 & -0.21 & +9.07 & +10.59 \\
BFCLv4 MT & Gemma 4 26B-A4B & 4.5 & 23.0 & 58.0 & 14.5 & +0.82 & -0.81 & +10.17 & +10.18 \\
BFCLv4 MT & Gemma 4 31B & 10.5 & 3.5 & 80.0 & 6.0 & +2.43 & -0.05 & +5.73 & +8.11 \\
BFCLv4 MT & Ministral 3 3B & 5.0 & 42.5 & 19.0 & 33.5 & +1.73 & -5.41 & +1.96 & -1.71 \\
BFCLv4 MT & Ministral 3 8B & 12.5 & 23.0 & 41.5 & 23.0 & +3.44 & -1.78 & +5.09 & +6.76 \\
BFCLv4 MT & Ministral 3 14B & 16.0 & 22.0 & 43.0 & 19.0 & +4.26 & -2.37 & +4.54 & +6.44 \\
BFCLv4 MT & Qwen 2.5 3B & 1.5 & 26.0 & 10.5 & 62.0 & +0.74 & -6.18 & +3.35 & -2.09 \\
BFCLv4 MT & Qwen 2.5 7B & 3.0 & 41.0 & 19.0 & 37.0 & +1.06 & -6.49 & +1.74 & -3.69 \\
BFCLv4 MT & Qwen 2.5 14B & 6.5 & 22.5 & 52.5 & 18.5 & +1.34 & -2.91 & +6.13 & +4.56 \\
BFCLv4 MT & Qwen 2.5 32B & 17.5 & 10.5 & 53.5 & 18.5 & +3.08 & -0.98 & +4.71 & +6.81 \\
BFCLv4 MT & Qwen 3.5 4B & 1.5 & 22.5 & 62.0 & 14.0 & +0.52 & -1.38 & +9.08 & +8.21 \\
BFCLv4 MT & Qwen 3.5 9B & 6.0 & 6.5 & 78.0 & 9.5 & +2.00 & -0.85 & +4.21 & +5.36 \\
BFCLv4 MT & Qwen 3.5 35B-A3B & 4.0 & 3.5 & 83.0 & 9.5 & +0.81 & -0.04 & +3.90 & +4.67 \\
\midrule
WebShop & Gemma 4 E4B & 10.0 & 26.6 & 38.6 & 24.8 & +4.73 & -4.00 & +12.46 & +13.19 \\
WebShop & Gemma 4 12B & 29.2 & 5.8 & 48.8 & 16.2 & +8.32 & -0.51 & +4.64 & +12.45 \\
WebShop & Gemma 4 26B-A4B & 34.2 & 4.6 & 42.2 & 19.0 & +9.20 & -0.42 & +4.54 & +13.32 \\
WebShop & Gemma 4 31B & 33.0 & 1.4 & 54.6 & 11.0 & +4.38 & -0.38 & +0.48 & +4.49 \\
WebShop & Ministral 3 3B & 3.0 & 39.2 & 29.6 & 28.2 & +1.52 & -8.76 & +8.73 & +1.49 \\
WebShop & Ministral 3 8B & 28.8 & 4.6 & 50.0 & 16.6 & +9.04 & -0.58 & -0.04 & +8.42 \\
WebShop & Ministral 3 14B & 11.8 & 6.0 & 67.0 & 15.2 & +4.73 & -1.12 & +4.51 & +8.11 \\
WebShop & Qwen 2.5 3B & 5.4 & 11.0 & 2.2 & 81.4 & +3.16 & -2.48 & +0.59 & +1.28 \\
WebShop & Qwen 2.5 7B & 21.4 & 7.0 & 49.8 & 21.8 & +5.61 & -1.78 & +1.67 & +5.50 \\
WebShop & Qwen 2.5 14B & 26.2 & 7.4 & 47.8 & 18.6 & +6.80 & -1.67 & -0.17 & +4.97 \\
WebShop & Qwen 2.5 32B & 23.0 & 1.0 & 60.8 & 15.2 & +5.62 & -0.25 & -0.69 & +4.69 \\
WebShop & Qwen 3.5 4B & 11.0 & 10.0 & 60.6 & 18.4 & +5.13 & -2.31 & +8.40 & +11.21 \\
WebShop & Qwen 3.5 9B & 7.0 & 5.2 & 66.4 & 21.4 & +2.91 & -1.42 & +5.07 & +6.56 \\
WebShop & Qwen 3.5 35B-A3B & 12.2 & 2.4 & 72.2 & 13.2 & +4.43 & -0.90 & +1.98 & +5.51 \\
\midrule
ACEBench & Gemma 4 E4B & 5.1 & 8.7 & 80.5 & 5.7 & +1.10 & -0.12 & +8.98 & +9.96 \\
ACEBench & Gemma 4 12B & 7.3 & 7.0 & 83.2 & 2.5 & +1.33 & -0.11 & +7.82 & +9.05 \\
ACEBench & Gemma 4 26B-A4B & 8.1 & 4.5 & 85.7 & 1.7 & +1.34 & -0.03 & +4.47 & +5.78 \\
ACEBench & Gemma 4 31B & 6.5 & 3.8 & 87.7 & 2.1 & +0.72 & -0.11 & +4.22 & +4.83 \\
ACEBench & Ministral 3 3B & 7.1 & 7.3 & 78.6 & 7.0 & +1.83 & -0.80 & +5.39 & +6.42 \\
ACEBench & Ministral 3 8B & 5.3 & 3.9 & 87.9 & 2.9 & +1.60 & -0.42 & +5.03 & +6.21 \\
ACEBench & Ministral 3 14B & 3.6 & 3.0 & 90.3 & 3.1 & +0.81 & -0.37 & +3.37 & +3.80 \\
ACEBench & Qwen 2.5 3B & 16.9 & 6.2 & 63.1 & 13.8 & +4.42 & -0.88 & +3.78 & +7.32 \\
ACEBench & Qwen 2.5 7B & 17.3 & 3.1 & 71.9 & 7.7 & +4.25 & -0.23 & +5.24 & +9.26 \\
ACEBench & Qwen 2.5 14B & 13.0 & 2.7 & 77.8 & 6.5 & +3.01 & -0.22 & +4.95 & +7.74 \\
ACEBench & Qwen 2.5 32B & 9.6 & 3.1 & 84.8 & 2.5 & +1.66 & -0.11 & +4.28 & +5.83 \\
ACEBench & Qwen 3.5 4B & 8.4 & 2.1 & 85.6 & 3.9 & +1.25 & -0.33 & +0.89 & +1.81 \\
ACEBench & Qwen 3.5 9B & 4.5 & 1.7 & 90.5 & 3.2 & +0.62 & -0.35 & +1.87 & +2.14 \\
ACEBench & Qwen 3.5 35B-A3B & 2.6 & 2.5 & 92.5 & 2.5 & +0.39 & -0.25 & +0.59 & +0.74 \\
\bottomrule
\end{tabular}}
\end{table}

\FloatBarrier
\begin{figure}[p]
\centering
\rotatebox{90}{\makebox[0.263\linewidth][c]{\scriptsize Four models avg.}}%
\includegraphics[width=0.79\linewidth]{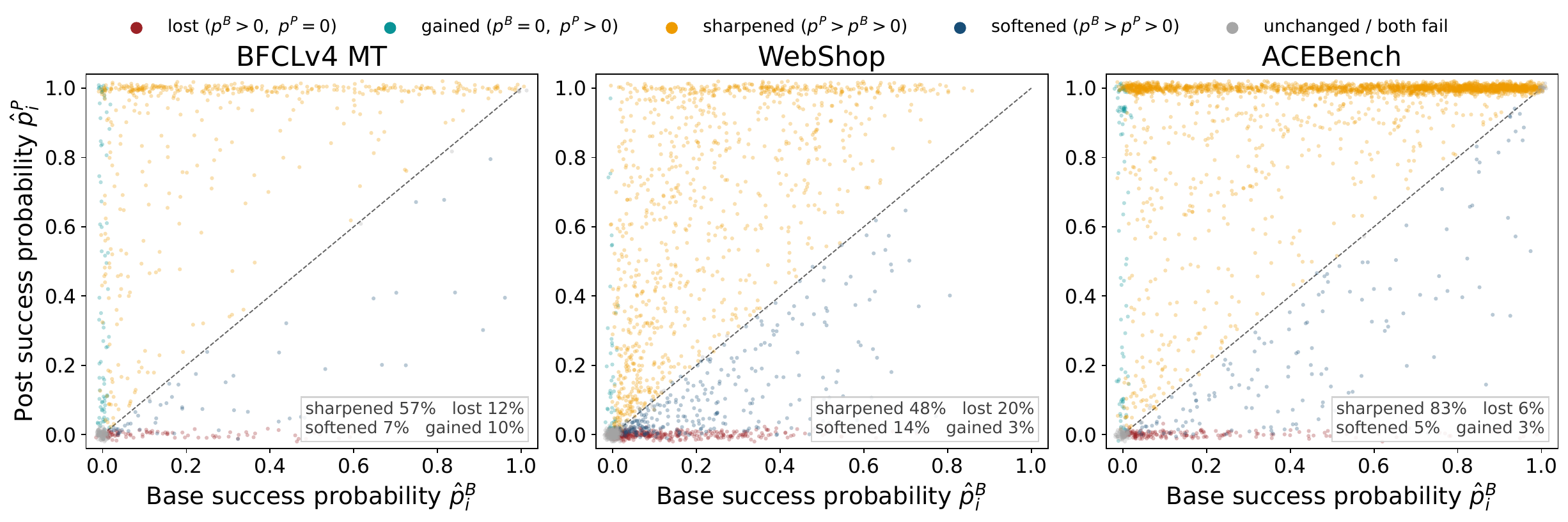}\\[2pt]
\rotatebox{90}{\makebox[0.227\linewidth][c]{\scriptsize\texttt{gemma-4-31B}}}%
\includegraphics[width=0.79\linewidth,trim=0 0 0 54bp,clip]{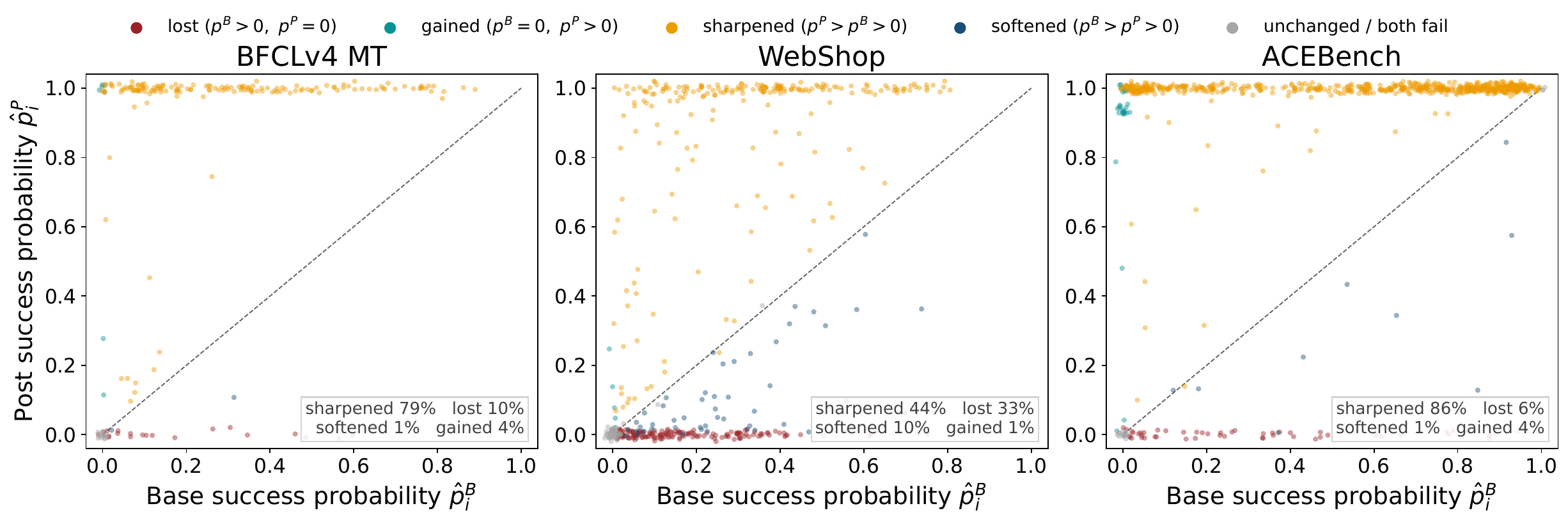}\\[2pt]
\rotatebox{90}{\makebox[0.227\linewidth][c]{\scriptsize\texttt{Ministral-3-14B}}}%
\includegraphics[width=0.79\linewidth,trim=0 0 0 54bp,clip]{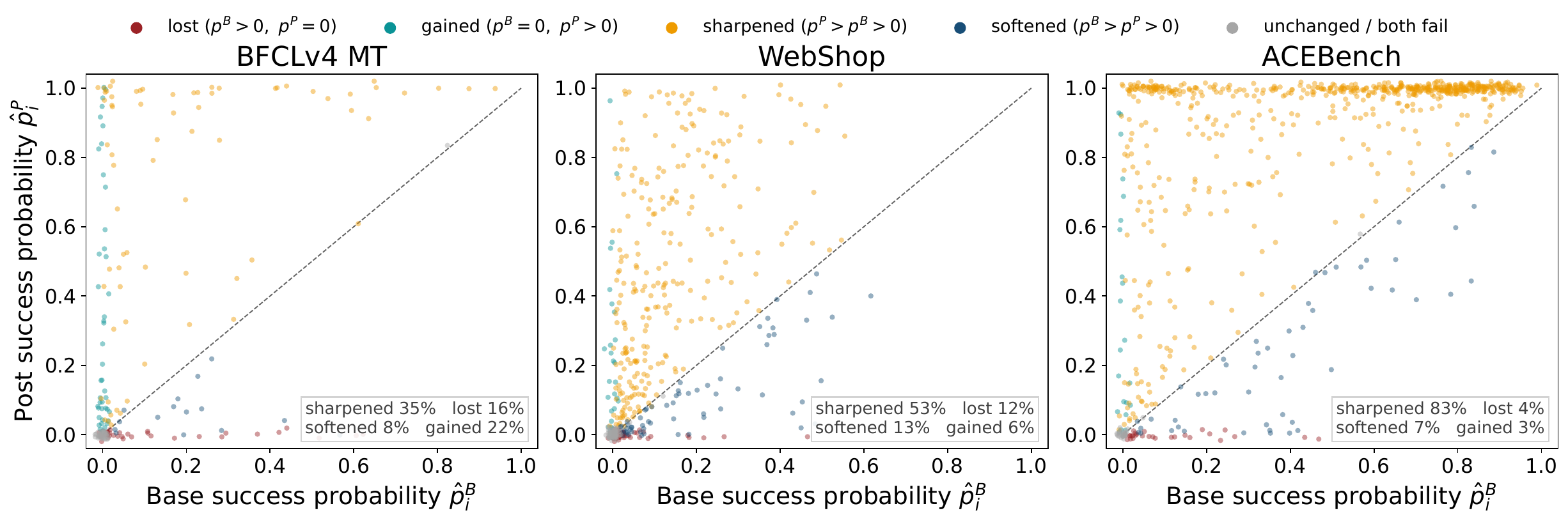}\\[2pt]
\rotatebox{90}{\makebox[0.227\linewidth][c]{\scriptsize\texttt{Qwen2.5-32B}}}%
\includegraphics[width=0.79\linewidth,trim=0 0 0 54bp,clip]{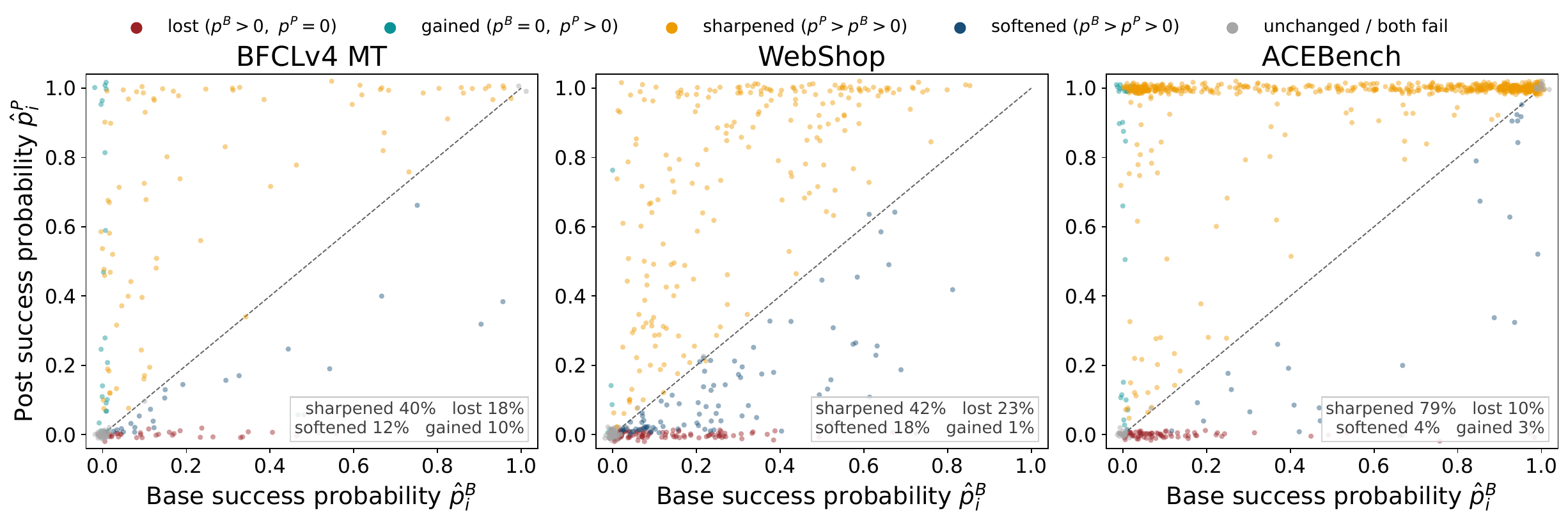}\\[2pt]
\rotatebox{90}{\makebox[0.227\linewidth][c]{\scriptsize\texttt{Qwen3.5-35B-A3B}}}%
\includegraphics[width=0.79\linewidth,trim=0 0 0 54bp,clip]{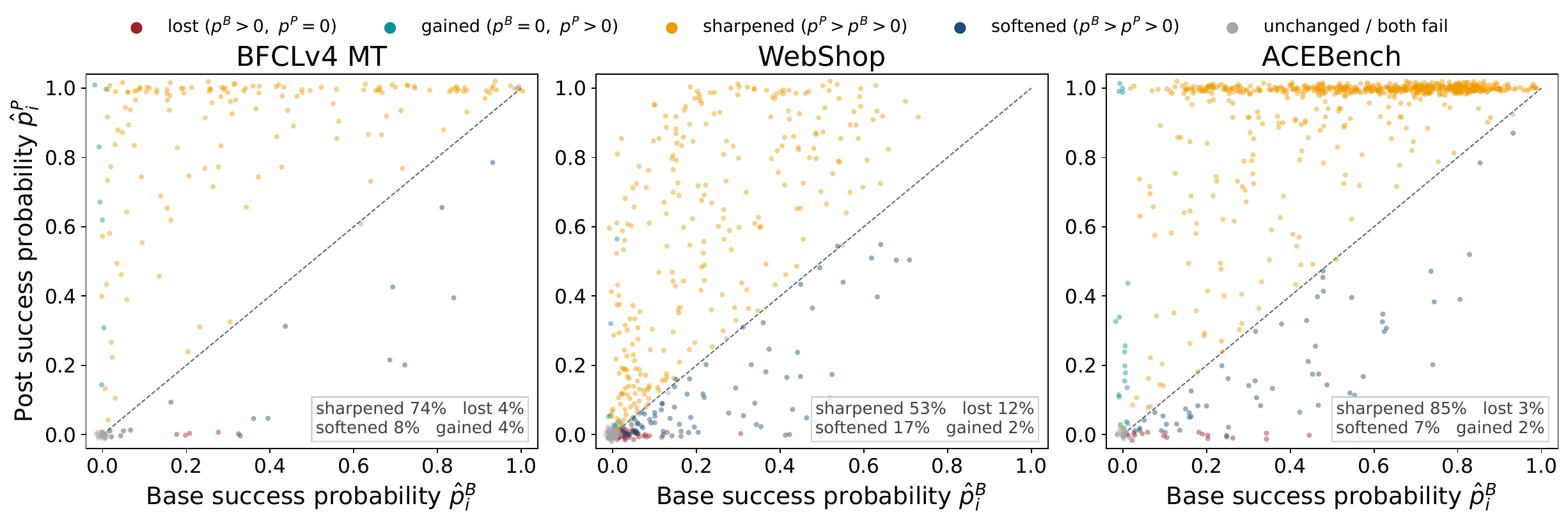}
\caption{\textbf{Per-task success rate movement from the base to the post-trained model} for the largest backbone of each family on BFCLv4 MT, WebShop, and ACEBench. Each point is one task, placed by its empirical success rate over 128 rollouts under the base model ($\hat{p}^B_i$, $x$-axis) and the post-trained model ($\hat{p}^P_i$, $y$-axis). Colors mark how each task moves, i.e., lost ($p^B > 0$, $p^P = 0$), gained ($p^B = 0$, $p^P > 0$), sharpened ($p^P > p^B > 0$), softened ($p^B > p^P > 0$), or unchanged, and the inset of each panel reports the proportion of each type among all tasks. The top row pools the four models, and each remaining row shows one model. Most tasks move far from the diagonal. Many sharpened tasks reach the top edge, while the lost tasks along the bottom edge show the coverage lost by post-training.}
\label{fig:task_movement_scatter}
\end{figure}

\FloatBarrier
\subsection{Which Tasks Pay the Tax?}
\label{appendix:tax_extremes}

\begin{figure}[ht]
    \centering
    \includegraphics[width=0.85\linewidth]{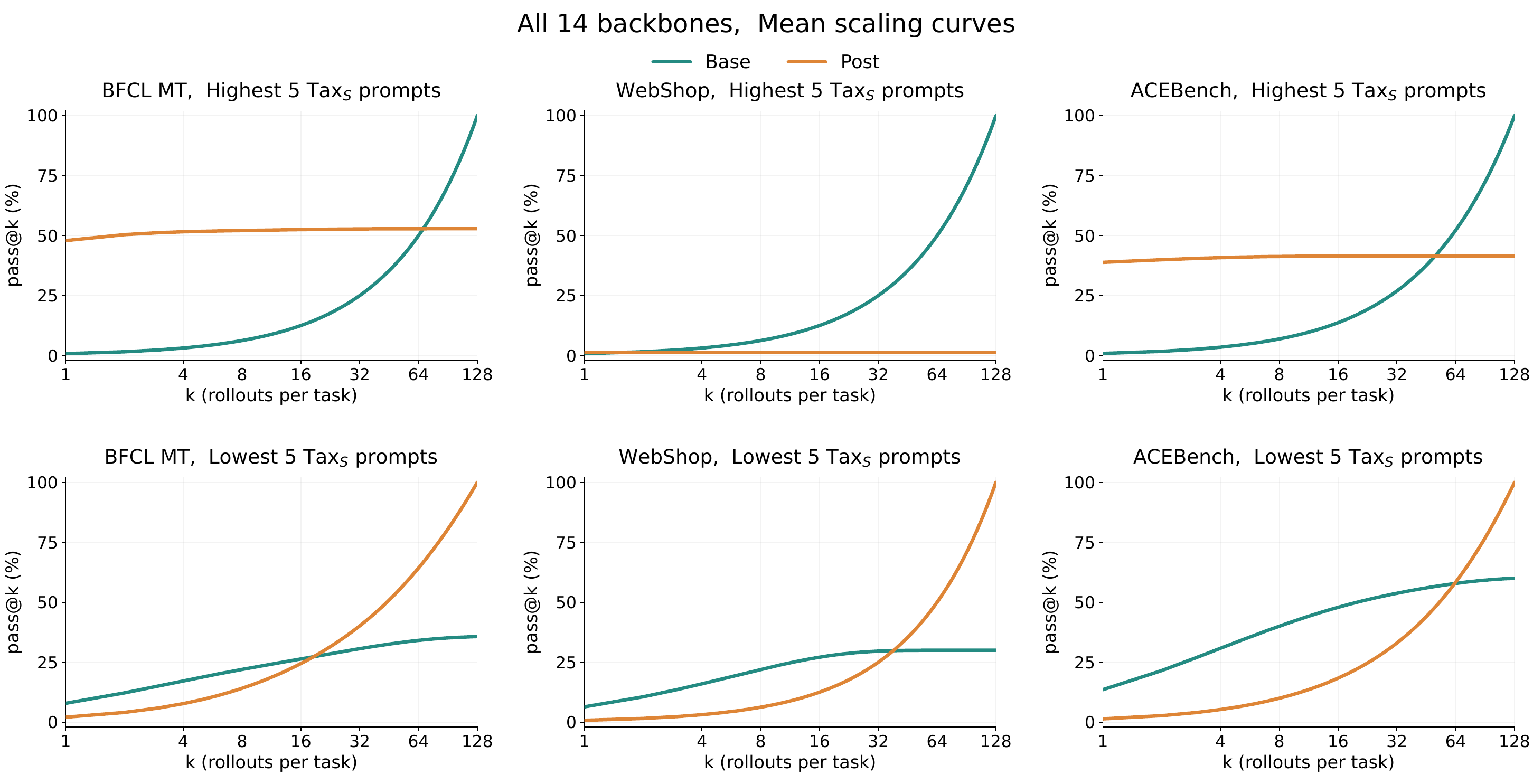}
    \caption{\textbf{Mean scaling curves of the tasks with the highest and lowest tax contributions, averaged over all 14 backbones.} For each benchmark and backbone, we select the five tasks with the highest (top row) and lowest (bottom row) signed contributions to $\text{Tax}_{S}(128)$ in Eq.~\eqref{eq:tax_contribution}. Each panel shows the mean pass@$k$ curve of the base (teal) and post-trained (orange) models over the selected tasks of all 14 backbones (70 tasks per panel).}
    \label{fig:tax_extremes_mean}
\end{figure}

The calibrated tax of a benchmark-backbone pair is an average of per-task contributions. Let $a_i(K) = \sum_{k=1}^{K-1}\big[\text{pass}@K_i - \text{pass}@k_i\big]$ be the raw scalability of task $i$. Then, Eq.~\eqref{eq:norm_scalability} gives
\begin{equation}
\text{Tax}_{S}(K) = \frac{1}{N}\sum_{i=1}^{N}\left[\frac{a^{B}_i(K)}{(K-1)\,(1-\text{pass}@1_{B})} - \frac{a^{P}_i(K)}{(K-1)\,(1-\text{pass}@1_{P})}\right],
\label{eq:tax_contribution}
\end{equation}
where each policy is normalized by its benchmark-level accuracy headroom. To see on which tasks the tax is charged, we rank the tasks of each benchmark-backbone pair by their signed contribution in Eq.~\eqref{eq:tax_contribution} at $k=128$ and plot the scaling curves of the five highest and five lowest tasks under both policies. Figure~\ref{fig:tax_extremes_mean} averages these curves over all 14 backbones, and Figures~\ref{fig:tax_extremes_gemma}--\ref{fig:tax_extremes_qwen35} show the individual tasks for each family.

The two extremes look alike across families and scales. On the highest-tax tasks, the post-trained policy has become deterministic, so its curve is flat at 100\% or at 0\% for every budget and gains nothing from retries. The base model instead succeeds only rarely, often once in 128 rollouts, so its curve stays low for small $k$ and rises to 100\% only near the full budget (Figure~\ref{fig:tax_extremes_mean}, top). This is Proposition~\ref{thm:redeemed-depth} at the level of a single task, since the tax is charged where post-training removes a rare success that retries could recover. The lowest-tax tasks show the opposite pattern (Figure~\ref{fig:tax_extremes_mean}, bottom). There, the post-trained policy holds the rare success, and the base model either never succeeds or saturates within a few rollouts. The per-family figures show how this depends on scale to some extent.

\begin{figure}[p]
    \centering
    \includegraphics[width=\textwidth,height=.72\textheight,keepaspectratio]{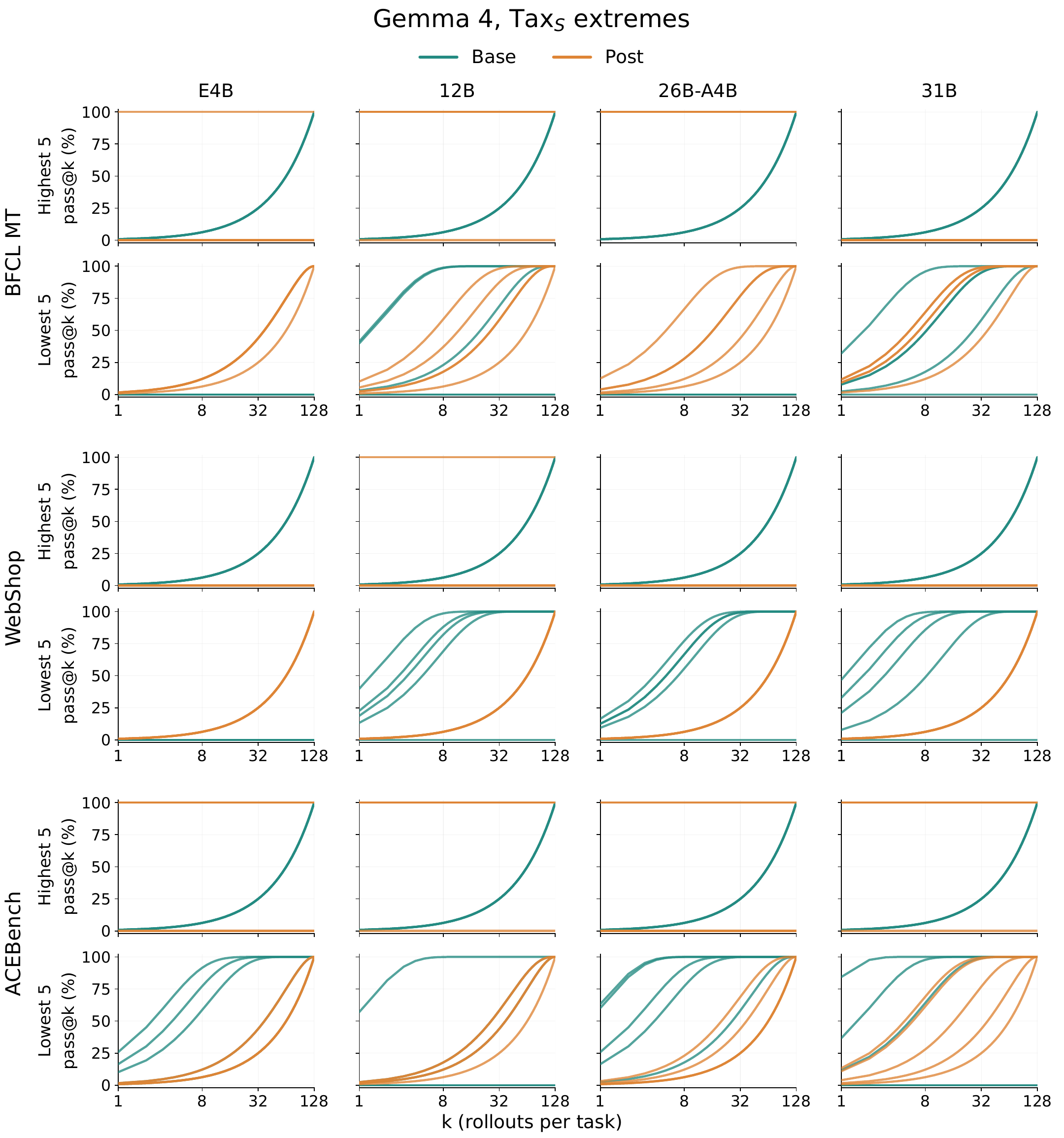}
    \caption{\textbf{Scaling curves of the tasks with the highest and lowest tax contributions for Gemma-4} (E4B, 12B, 26B-A4B, and 31B from left to right). For each benchmark (BFCL MT, WebShop, and ACEBench from top to bottom), the upper row shows the five tasks with the highest signed contributions to $\text{Tax}_{S}(128)$ (Eq.~\eqref{eq:tax_contribution}) and the lower row the five with the lowest, selected separately for each backbone. Each panel shows the pass@$k$ curves of the base (teal) and post-trained (orange) models on the same five tasks, computed from 128 rollouts per task without averaging.}
    \label{fig:tax_extremes_gemma}
\end{figure}

\begin{figure}[p]
    \centering
    \includegraphics[width=\textwidth,height=.72\textheight,keepaspectratio]{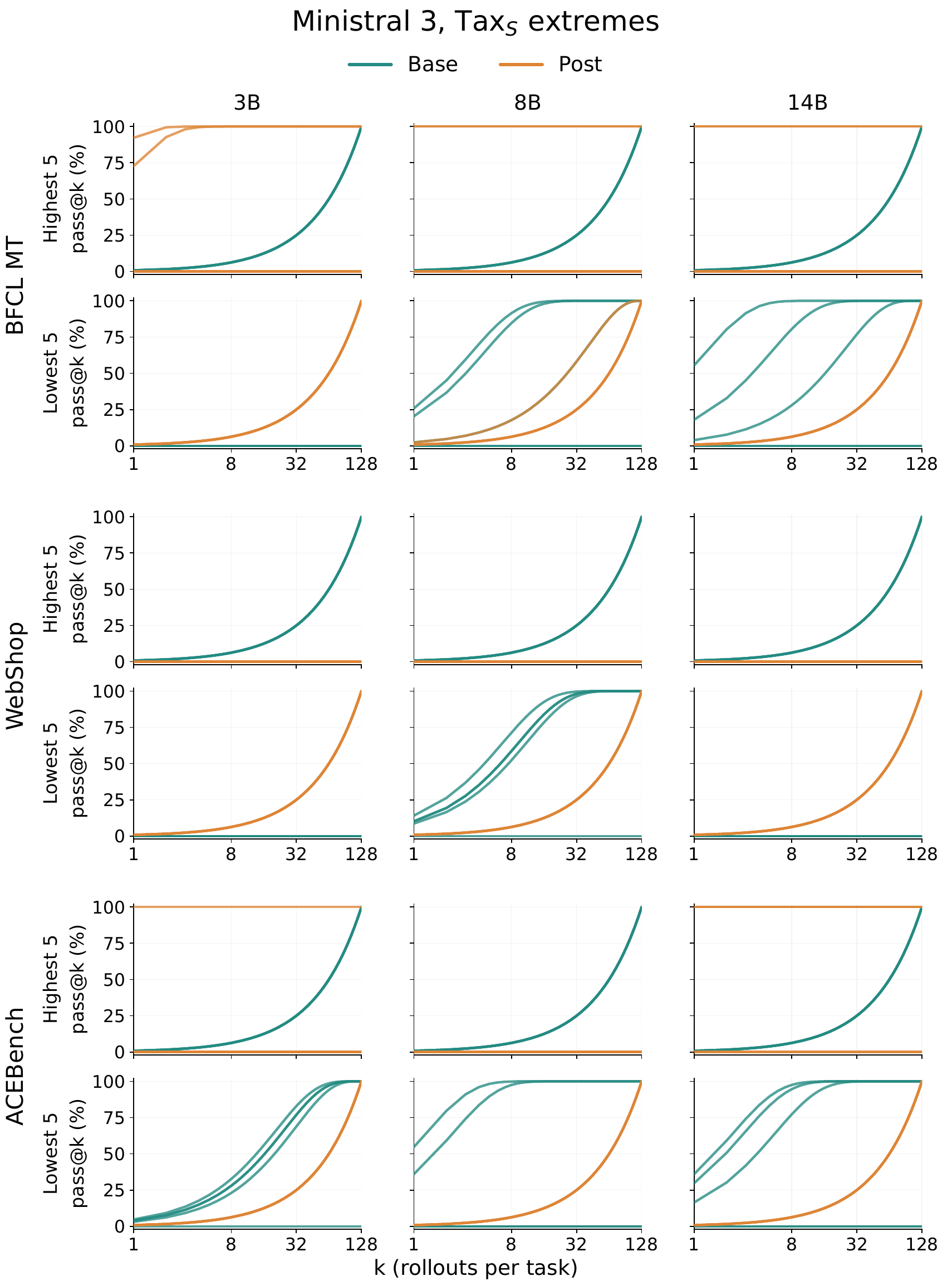}
    \caption{\textbf{Scaling curves of the tasks with the highest and lowest tax contributions for Ministral-3} (3B, 8B, and 14B from left to right). For each benchmark (BFCL MT, WebShop, and ACEBench from top to bottom), the upper row shows the five tasks with the highest signed contributions to $\text{Tax}_{S}(128)$ (Eq.~\eqref{eq:tax_contribution}) and the lower row the five with the lowest, selected separately for each backbone. Each panel shows the pass@$k$ curves of the base (teal) and post-trained (orange) models on the same five tasks, computed from 128 rollouts per task.}
    \label{fig:tax_extremes_ministral}
\end{figure}

\begin{figure}[p]
    \centering
    \includegraphics[width=\textwidth,height=.72\textheight,keepaspectratio]{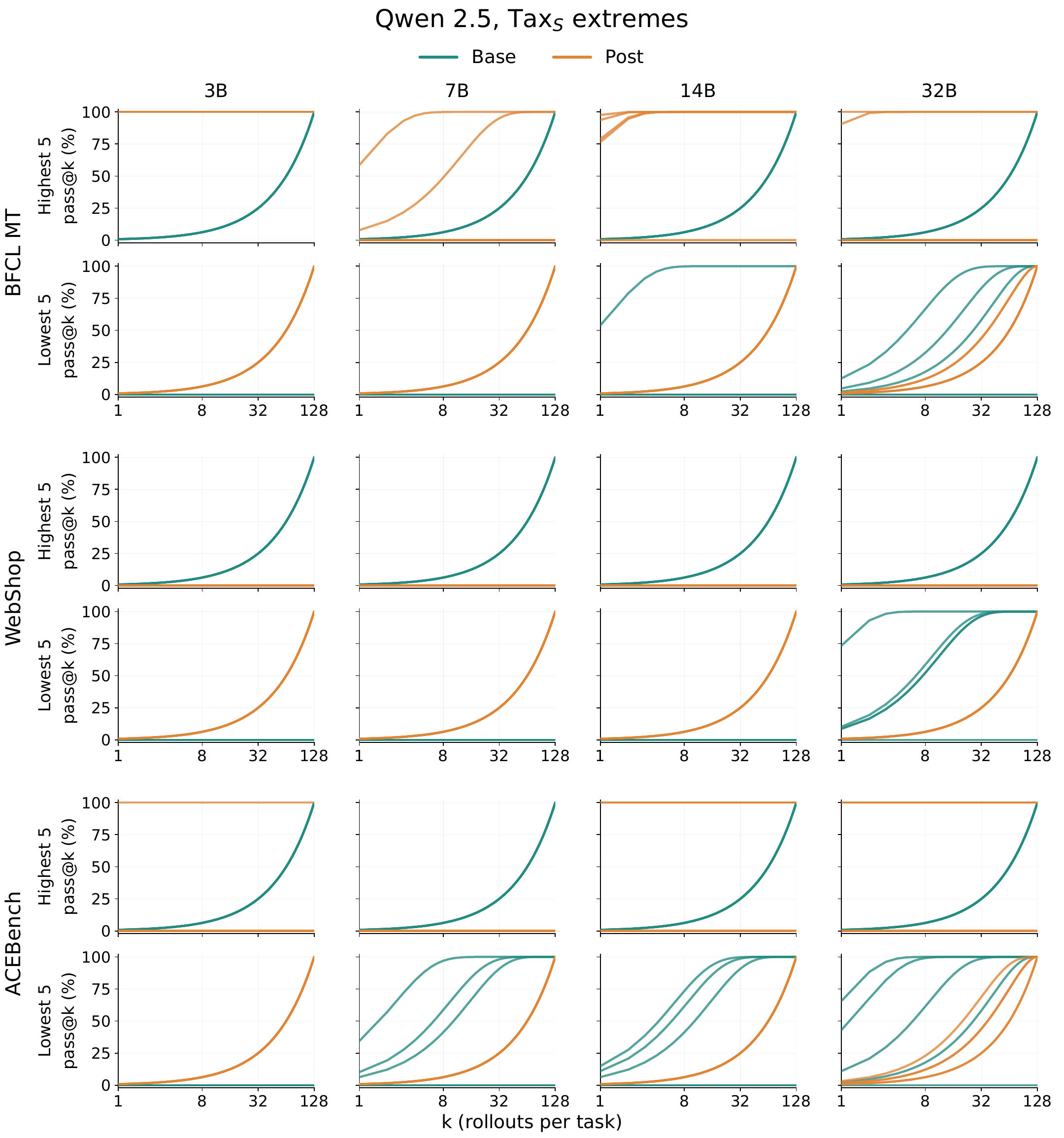}
    \caption{\textbf{Scaling curves of the tasks with the highest and lowest tax contributions for Qwen2.5} (3B, 7B, 14B, and 32B from left to right). For each benchmark (BFCL MT, WebShop, and ACEBench from top to bottom), the upper row shows the five tasks with the highest signed contributions to $\text{Tax}_{S}(128)$ (Eq.~\eqref{eq:tax_contribution}) and the lower row the five with the lowest, selected separately for each backbone. Each panel shows the pass@$k$ curves of the base (teal) and post-trained (orange) models on the same five tasks, computed from 128 rollouts per task.}
    \label{fig:tax_extremes_qwen25}
\end{figure}

\begin{figure}[p]
    \centering
    \includegraphics[width=\textwidth,height=.72\textheight,keepaspectratio]{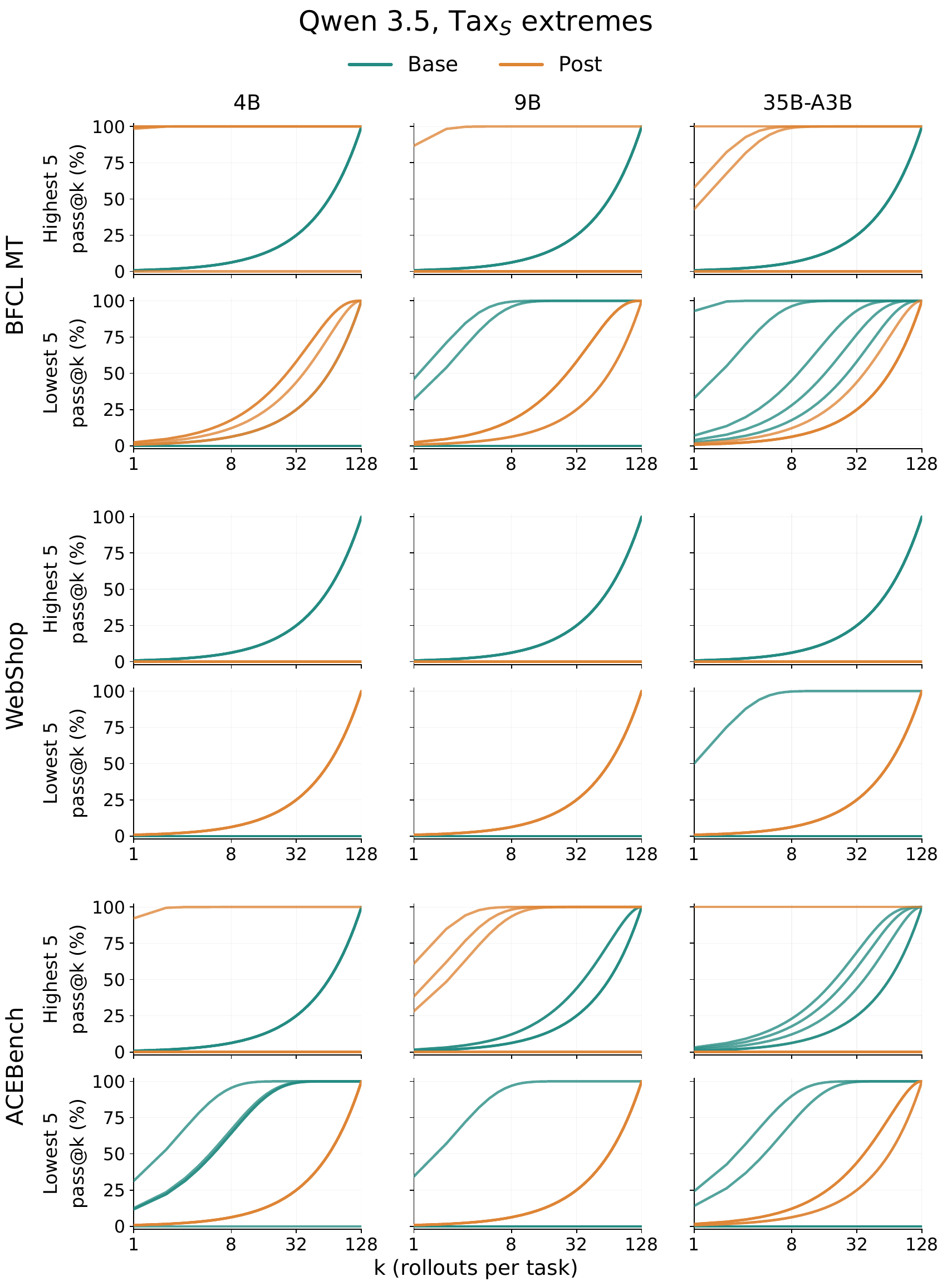}
    \caption{\textbf{Scaling curves of the tasks with the highest and lowest tax contributions for Qwen3.5} (4B, 9B, and 35B-A3B from left to right). For each benchmark (BFCL MT, WebShop, and ACEBench from top to bottom), the upper row shows the five tasks with the highest signed contributions to $\text{Tax}_{S}(128)$ (Eq.~\eqref{eq:tax_contribution}) and the lower row the five with the lowest, selected separately for each backbone. Each panel shows the pass@$k$ curves of the base (teal) and post-trained (orange) models on the same five tasks, computed from 128 rollouts per task.}
    \label{fig:tax_extremes_qwen35}
\end{figure}

\FloatBarrier
\subsection{Tax-Guided Routing between the Base and the Post-trained Policy}
\label{appendix:routing}

\begin{figure}[htb]
    \centering
    \includegraphics[width=\linewidth]{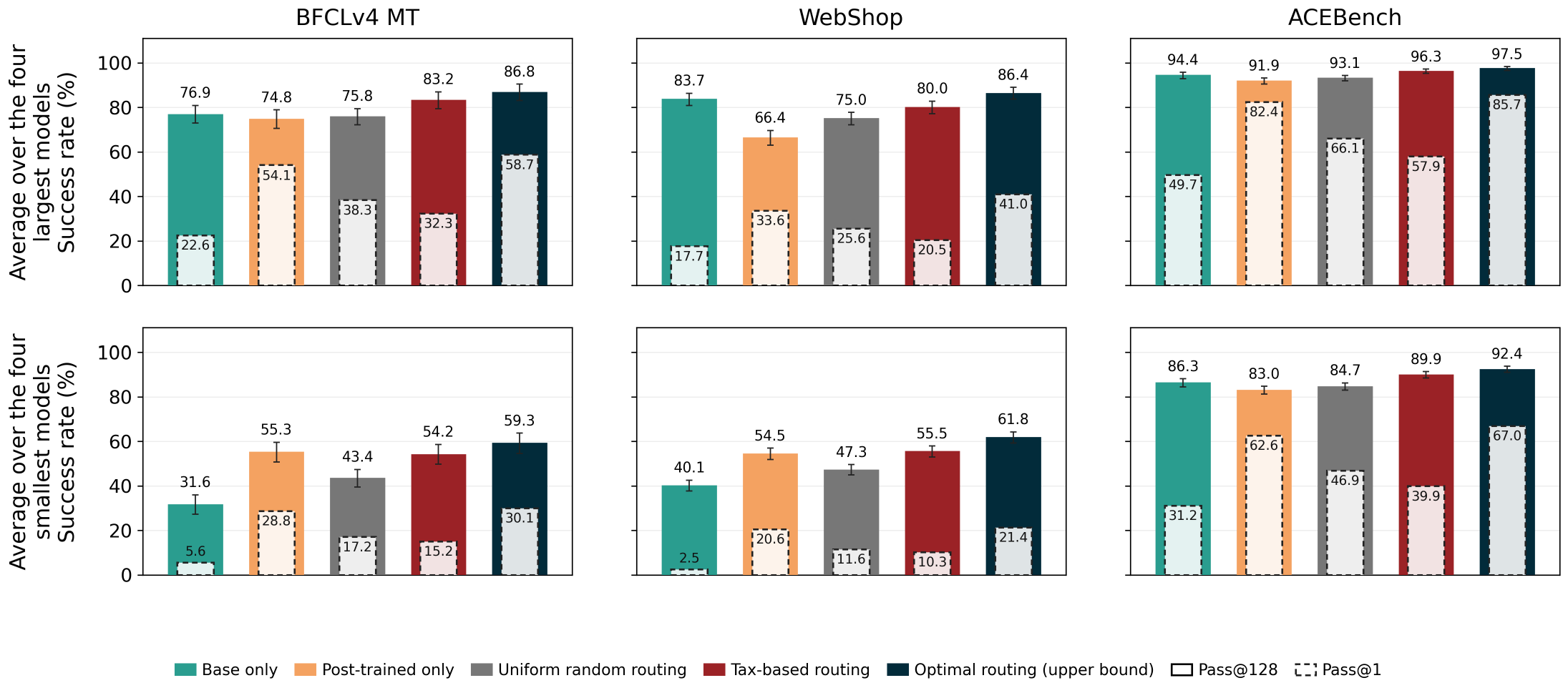}
    \caption{\textbf{Tax-guided routing between the base and post-trained policies.} Success rates averaged over the four largest (top) and the four smallest (bottom) backbones on BFCLv4 MT, WebShop, and ACEBench. Each task is served by one policy chosen by one of five rules, i.e., base only (teal), post-trained only (orange), uniform random routing (gray), tax-based routing (red), and optimal routing (navy). Tax-based routing chooses the policy for each task based on its $\text{Tax}_{S}(32)$ estimated from 32 early pilot rollouts. Optimal routing is an oracle selector that assigns each task to the better policy on the full 128 rollouts, which upper-bounds any router. Filled bars show pass@$128$ and dashed inset bars show pass@$1$ under the same assignments, both computed from all 128 rollouts of the selected policy including the pilot rollouts. Error bars show the bootstrap uncertainty of pass@$128$ (2{,}000 resampled tasks).}
    \label{fig:routing}
\end{figure}

\noindent\textbf{Setup.} If the tax tells us on which tasks retries pay off, a cheap estimate of it could also decide which of the two policies should receive the test-time budget for each task. For each base/post-trained pair and benchmark, we compare five ways of assigning each task to one policy. Besides the two single policies, we consider uniform random routing, which assigns each task to either policy with equal probability, tax-based routing, which chooses the policy for each task based on its calibrated tax $\text{Tax}_{S}(32)$ estimated from 32 early pilot rollouts, and optimal routing, a hindsight oracle that assigns each task to the better policy on the full 128 rollouts. The reported pass@$128$ uses all 128 rollouts of the selected policy including its pilot rollouts, so the pilot rollouts spent on the other policy are the overhead of routing. Figure~\ref{fig:routing} averages the results over the four largest and the four smallest backbones.

\noindent\textbf{Results.} Tax-based routing combines the strengths of the two policies. Its pass@$128$ exceeds that of the post-trained model in five of the six panels, by up to 13.6 points (WebShop, largest backbones), and the only exception (BFCL, smallest backbones) is within 1.1 points. At the same time, its pass@$1$ exceeds that of the base model in every panel. In four of the six panels, it reaches a higher pass@$128$ than both single policies, and in the other two, where one policy is far better than the other, it stays within a few points of the better one. It also beats uniform random routing in every panel, so the gain comes from the tax signal rather than from simply mixing the two policies, and it stays close to the performance of oracle routing. \textbf{A tax estimate from a small pilot thus recovers most of the coverage that post-training gives up, and often more, without falling back to the low single-shot accuracy of the base model.} Since its pass@$1$ lies between those of the two single policies, tax-based routing is best suited to large test-time budgets, in line with the budget dependence discussed in \S\ref{section:observation}.

\FloatBarrier
\section{Additional Results on PTGS}
\label{appendix:results_ptgs}

This section gives additional results for \S\ref{section:solution}. Here, we show the training dynamics behind Table~\ref{tab:ptgs-combined} (\S\ref{appendix:deferred_results}), an ablation on the temperature spread $\tau$ (\S\ref{appendix:ptgs_tau_ablation}), a comparison with fixed rollout temperatures and truncated decoding rules (\S\ref{appendix:ptgs_sampler_ablation}), and an inference-time variant of PTGS (\S\ref{appendix:ptgs_inference}).

\FloatBarrier
\subsection{Training Dynamics of PTGS}
\label{appendix:deferred_results}

\begin{figure*}[thb]
     \centering
     \includegraphics[width=0.98\textwidth]{figures/fl08_trainingcurve_v2.pdf}
     \vspace{-0.5em}
     \caption{\textbf{Training dynamics of PPO and PPO with PTGS on FrozenLake.} The panels show, from left to right, training success, validation success, average entropy of the output token distribution (thin lines for raw values and thick lines for the moving average), and $\text{Tax}_{S}(128)$ of the checkpoints at steps 50, 100, 150, and 200. The two methods have nearly identical training success and end at the same validation success, but PTGS keeps a much higher entropy throughout training and pays a smaller tax at every checkpoint after step 50.}
     \vspace{-0.5em}
     \label{fig:ptgs-training-fl}
\end{figure*}

\begin{figure*}[thb]
     \centering
     \includegraphics[width=0.98\textwidth]{figures/sokoban_trainingcurve_v2.pdf}
     \vspace{-0.5em}
     \caption{\textbf{Training dynamics of PPO and PPO with PTGS on Sokoban.} The panels show, from left to right, training success, validation success, average entropy of the output token distribution (thin lines for raw values and thick lines for the moving average), and $\text{Tax}_{S}(128)$ of the checkpoints at steps 50, 100, 150, and 200. PTGS reaches higher validation success at every checkpoint and keeps improving until step 200, while PPO drops after step 150. The entropy of PPO decays toward zero while that of PTGS stays high, and PTGS pays a smaller tax than PPO from step 100 on.}
     \vspace{-0.5em}
     \label{fig:ptgs-training-sokoban}
\end{figure*}

\noindent\textbf{Training dynamics of PTGS.} Figures~\ref{fig:ptgs-training-fl} and~\ref{fig:ptgs-training-sokoban} show the training progress of PPO and PPO with PTGS behind the final results in Table~\ref{tab:ptgs-combined}. In both environments, the two methods have almost the same training success, so PTGS does not make the training tasks easier or harder on average. The other panels, however, show clear differences. First, the entropy of PPO decreases steadily toward zero, which is the usual entropy collapse of fixed-temperature RL, whereas PTGS keeps the entropy several times higher throughout training. The entropy of PTGS also rises repeatedly, which is consistent with the sampler heating the prompts that the policy keeps failing. Second, the tax of PPO grows over training in both environments, but PTGS pays a smaller tax at most checkpoints, and on Sokoban its tax stops growing after step 150. Third, PTGS matches PPO in final validation success on FrozenLake and clearly outperforms it on Sokoban, where PPO degrades after step 150 while PTGS keeps improving. Overall, \textbf{the per-prompt adaptive temperature of PTGS stabilizes training and yields a final policy that is at least as accurate, more exploratory, and less taxed}.

\FloatBarrier
\subsection{Ablation on the Temperature Spread \texorpdfstring{$\tau$}{tau} of PTGS}
\label{appendix:ptgs_tau_ablation}

\begin{figure}[thb]
     \centering
     {\small\textbf{(a) FrozenLake}}\\[2pt]
     \includegraphics[width=0.82\linewidth]{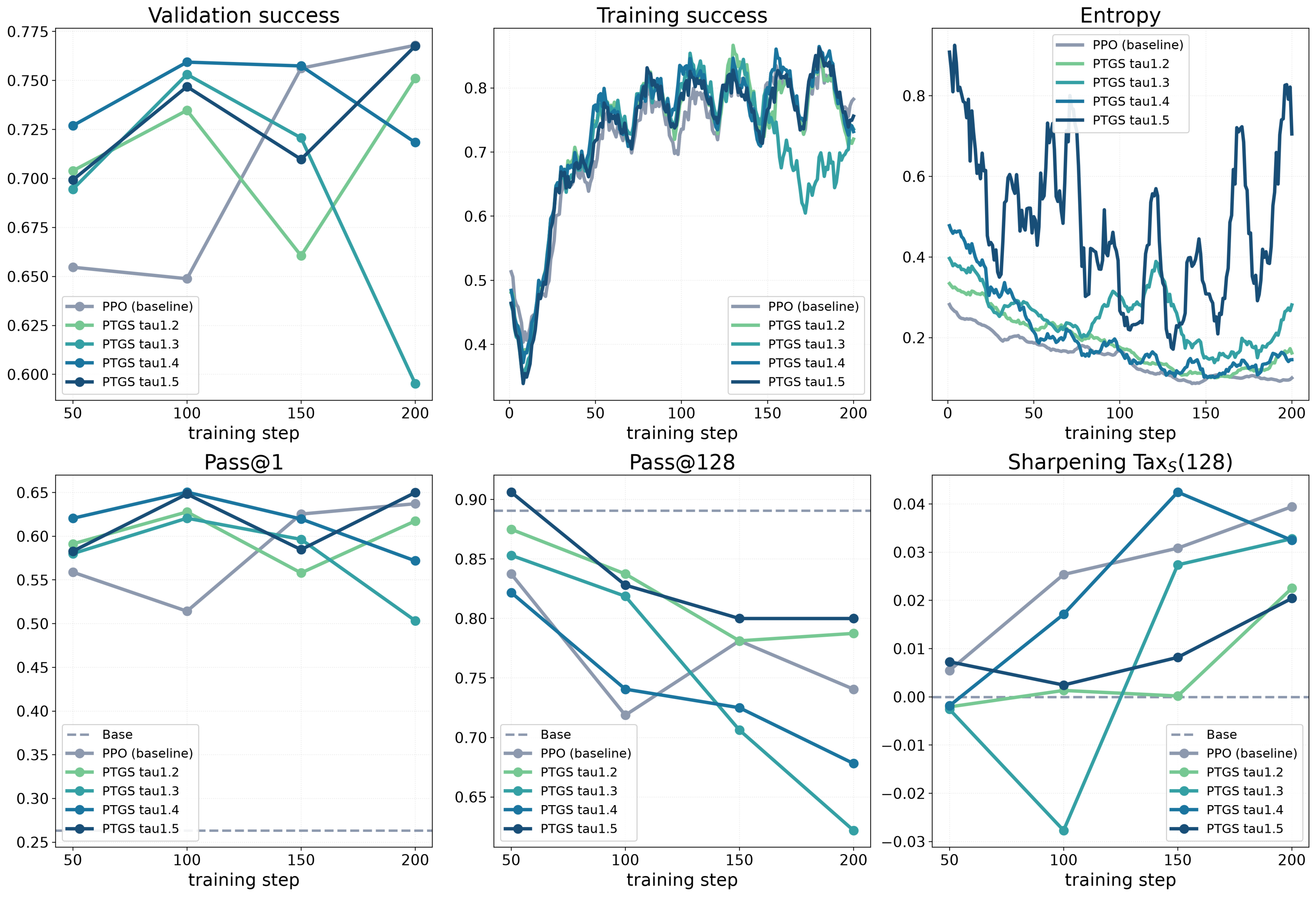}\\[6pt]
     {\small\textbf{(b) Sokoban}}\\[2pt]
     \includegraphics[width=0.82\linewidth]{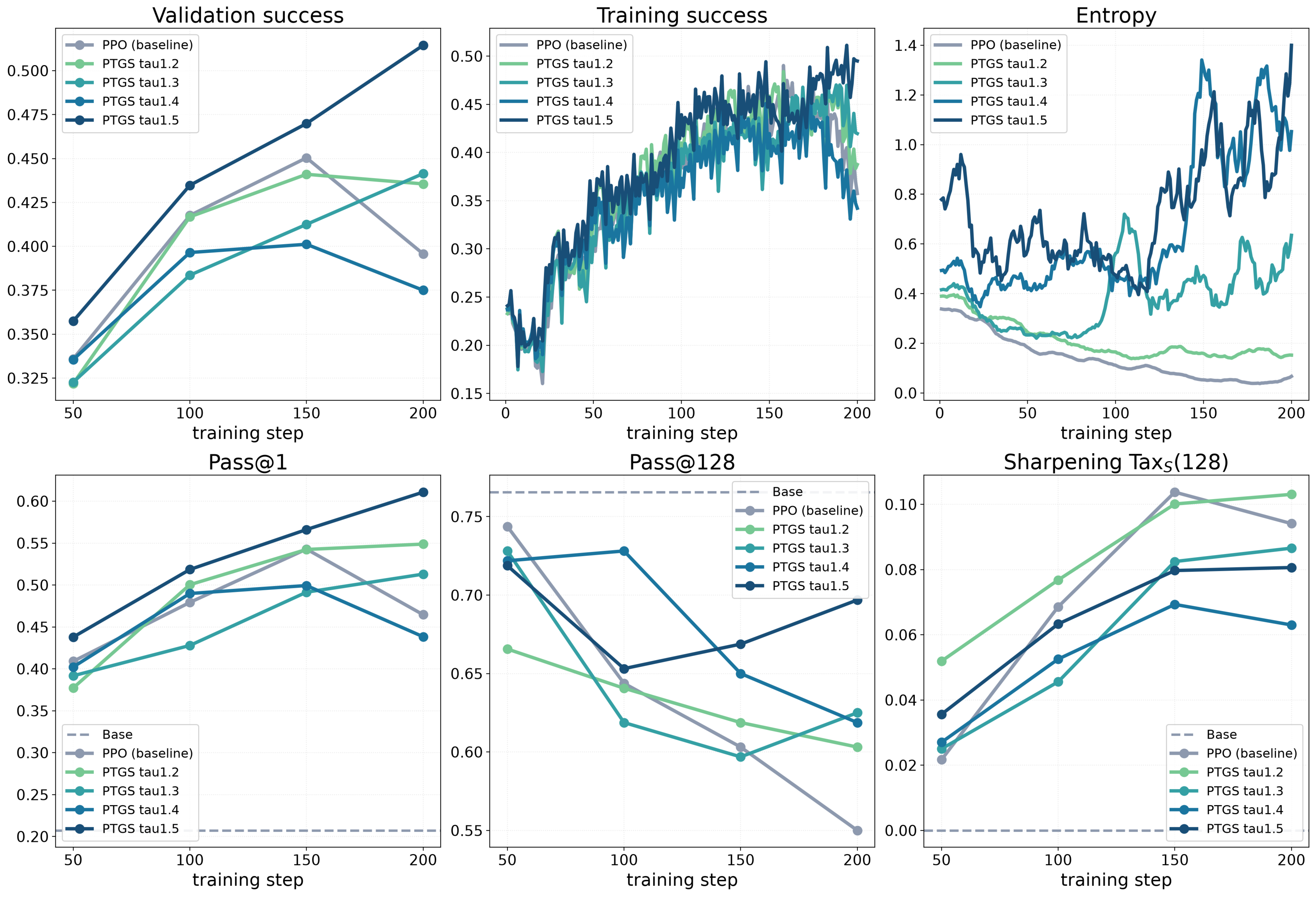}
     \vspace{-0.5em}
     \caption{\textbf{Ablation on the temperature spread $\tau$ of PTGS on FrozenLake (a) and Sokoban (b).} We compare PPO with PTGS for $\tau \in \{1.2, 1.3, 1.4, 1.5\}$ against the PPO baseline. In each block, the top row shows validation success, training success, and average entropy of the output token distribution over training, and the bottom row shows pass@$1$, pass@$128$, and $\text{Tax}_{S}(128)$ of the intermediate checkpoints, with dashed lines for the base model. A larger $\tau$ keeps a higher entropy in both environments. $\tau = 1.5$ achieves the best final pass@$1$ and pass@$128$ in both environments and the smallest tax on FrozenLake. On Sokoban, the smallest temperature $\tau = 1.2$ only slightly raises the entropy above PPO and pays a similar tax.}
     \label{fig:ptgs-tau}
\end{figure}

The spread $\tau$ sets the temperature range $[1/\tau, \tau]$ of PTGS. As Figure~\ref{fig:ptgs-tau} shows, a larger $\tau$ keeps a higher entropy throughout training while barely affecting training success. The spread needed depends on the environment: on FrozenLake, even $\tau=1.2$ improves pass@$128$ and reduces the tax over PPO, whereas on the harder Sokoban, $\tau=1.2$ barely raises the entropy and only $\tau\ge1.3$ improves pass@$128$. We use $\tau=1.5$ for PPO in \S\ref{section:solution}, since it gives the best accuracy and coverage in both environments.

\FloatBarrier
\subsection{Fixed Rollout-Temperature and Decoding Ablations of the PPO Baseline}
\label{appendix:ptgs_sampler_ablation}

\begin{table}[!htb]
\centering
\caption{\textbf{Fixed rollout-temperature ablation of PPO, compared with PTGS}. Each PPO row samples its training rollouts at a fixed temperature $T$, where $T{=}1.0$ (shaded) is the PPO baseline of the main text. Values are the mean over five runs $\pm$ the 95\% confidence interval.}
\label{tab:temp-ablation-ci}
\small
\setlength{\tabcolsep}{4pt}
\resizebox{\linewidth}{!}{%
\begin{tabular}{l cccc | cccc}
\toprule
& \multicolumn{4}{c}{Sokoban} & \multicolumn{4}{|c}{FrozenLake} \\
\cmidrule(lr){2-5}\cmidrule(lr){6-9}
Method & pass@$1$ & pass@$128$ & $\text{pass}^{128}$ & $\text{Tax}_{S}(128)$ & pass@$1$ & pass@$128$ & $\text{pass}^{128}$ & $\text{Tax}_{S}(128)$ \\
\midrule
Qwen2.5-7B-Instruct (base) & 20.7 & 76.6 & 0.0 & -- & 26.3 & 89.1 & 0.0 & -- \\
\midrule
PPO, $T$\,=\,0.5 & 45.6 $\pm$ 7.0 & 62.8 $\pm$ 10.7 & 26.2 $\pm$ 8.6 & \textbf{0.073 $\pm$ 0.025} & 58.7 $\pm$ 15.0 & 76.9 $\pm$ 12.1 & 41.2 $\pm$ 26.5 & \textbf{0.004 $\pm$ 0.053} \\
PPO, $T$\,=\,0.7 & 40.2 $\pm$ 26.8 & 51.9 $\pm$ 33.6 & 30.9 $\pm$ 23.1 & 0.075 $\pm$ 0.049 & 63.7 $\pm$ 7.1 & 70.3 $\pm$ 4.9 & \textbf{54.7 $\pm$ 11.4} & 0.046 $\pm$ 0.032 \\
\rowcolor{gray!10}
PPO, $T$\,=\,1.0 (default) & 46.5 $\pm$ 14.7 & 55.0 $\pm$ 11.3 & 36.2 $\pm$ 13.1 & 0.094 $\pm$ 0.016 & 63.7 $\pm$ 5.9 & 74.1 $\pm$ 7.2 & 50.9 $\pm$ 10.1 & 0.039 $\pm$ 0.026 \\
PPO, $T$\,=\,1.3 & 53.6 $\pm$ 7.6 & 58.8 $\pm$ 10.6 & 47.2 $\pm$ 4.8 & 0.100 $\pm$ 0.022 & 60.9 $\pm$ 12.1 & 70.3 $\pm$ 12.0 & 47.5 $\pm$ 17.5 & 0.042 $\pm$ 0.026 \\
PPO, $T$\,=\,1.5 & 57.4 $\pm$ 11.6 & 64.4 $\pm$ 11.1 & \textbf{52.2 $\pm$ 7.8} & 0.102 $\pm$ 0.017 & 62.8 $\pm$ 6.8 & 68.4 $\pm$ 6.9 & \textbf{54.7 $\pm$ 10.0} & 0.055 $\pm$ 0.015 \\
\midrule
\textbf{PPO with PTGS} & \textbf{61.1 $\pm$ 8.6} & \textbf{69.7 $\pm$ 9.7} & 49.7 $\pm$ 14.9 & 0.081 $\pm$ 0.039 & \textbf{65.0 $\pm$ 3.2} & \textbf{80.0 $\pm$ 8.3} & 42.5 $\pm$ 15.2 & 0.020 $\pm$ 0.047 \\
\bottomrule
\end{tabular}%
}
\end{table}

\begin{table}[!htb]
\centering
\caption{\textbf{Rollout decoding ablation of PPO, compared with PTGS}. Each PPO row applies one truncation rule to the training rollouts at $T{=}1.0$, and the shaded row is the PPO baseline with ancestral sampling (the same runs as $T{=}1.0$ in Table~\ref{tab:temp-ablation-ci}). Values are the mean over five runs $\pm$ the 95\% confidence interval.}
\label{tab:sampler-ablation-ci}
\small
\setlength{\tabcolsep}{4pt}
\resizebox{\linewidth}{!}{%
\begin{tabular}{l cccc | cccc}
\toprule
& \multicolumn{4}{c}{Sokoban} & \multicolumn{4}{|c}{FrozenLake} \\
\cmidrule(lr){2-5}\cmidrule(lr){6-9}
Method & pass@$1$ & pass@$128$ & $\text{pass}^{128}$ & $\text{Tax}_{S}(128)$ & pass@$1$ & pass@$128$ & $\text{pass}^{128}$ & $\text{Tax}_{S}(128)$ \\
\midrule
Qwen2.5-7B-Instruct (base) & 20.7 & 76.6 & 0.0 & -- & 26.3 & 89.1 & 0.0 & -- \\
\midrule
\rowcolor{gray!10}
PPO, ancestral (default) & 46.5 $\pm$ 14.7 & 55.0 $\pm$ 11.3 & 36.2 $\pm$ 13.1 & 0.094 $\pm$ 0.016 & 63.7 $\pm$ 5.9 & 74.1 $\pm$ 7.2 & 50.9 $\pm$ 10.1 & 0.039 $\pm$ 0.026 \\
PPO, top-$p$\,0.9 & 42.5 $\pm$ 12.7 & 46.6 $\pm$ 15.2 & 33.1 $\pm$ 9.2 & 0.111 $\pm$ 0.012 & \textbf{66.4 $\pm$ 6.2} & 74.1 $\pm$ 7.1 & \textbf{52.8 $\pm$ 13.3} & 0.041 $\pm$ 0.039 \\
PPO, top-$p$\,0.95 & 47.3 $\pm$ 2.8 & 55.9 $\pm$ 11.4 & 36.6 $\pm$ 9.9 & 0.091 $\pm$ 0.054 & 61.9 $\pm$ 8.0 & 72.5 $\pm$ 4.5 & 46.2 $\pm$ 16.4 & 0.030 $\pm$ 0.027 \\
PPO, top-$k$\,20 & 45.5 $\pm$ 9.3 & 55.3 $\pm$ 14.2 & 37.5 $\pm$ 7.3 & 0.092 $\pm$ 0.028 & 61.3 $\pm$ 7.3 & 75.6 $\pm$ 13.2 & 40.0 $\pm$ 24.7 & 0.031 $\pm$ 0.034 \\
PPO, top-$k$\,40 & 45.1 $\pm$ 19.4 & 57.2 $\pm$ 15.3 & 27.8 $\pm$ 18.3 & 0.082 $\pm$ 0.009 & 64.6 $\pm$ 1.2 & 75.6 $\pm$ 6.2 & 51.9 $\pm$ 9.4 & 0.034 $\pm$ 0.024 \\
PPO, min-$p$\,0.05 & 44.7 $\pm$ 9.6 & 55.0 $\pm$ 18.0 & 36.9 $\pm$ 6.9 & \textbf{0.076 $\pm$ 0.048} & 61.3 $\pm$ 6.7 & 75.3 $\pm$ 15.3 & 44.1 $\pm$ 12.8 & \textbf{0.011 $\pm$ 0.080} \\
PPO, min-$p$\,0.1 & 41.1 $\pm$ 13.4 & 55.0 $\pm$ 19.1 & 29.1 $\pm$ 10.5 & 0.081 $\pm$ 0.039 & 60.0 $\pm$ 8.5 & 75.0 $\pm$ 9.9 & 40.6 $\pm$ 14.2 & 0.028 $\pm$ 0.033 \\
\midrule
\textbf{PPO with PTGS} & \textbf{61.1 $\pm$ 8.6} & \textbf{69.7 $\pm$ 9.7} & \textbf{49.7 $\pm$ 14.9} & 0.081 $\pm$ 0.039 & 65.0 $\pm$ 3.2 & \textbf{80.0 $\pm$ 8.3} & 42.5 $\pm$ 15.2 & 0.020 $\pm$ 0.047 \\
\bottomrule
\end{tabular}%
}
\end{table}

\noindent\textbf{Is PTGS just a hotter or truncated sampler?} PTGS changes only the temperature of the training rollouts, so one may ask whether a fixed rollout temperature or a fixed truncation rule gives the same benefit. Tables~\ref{tab:temp-ablation-ci} and~\ref{tab:sampler-ablation-ci} vary only the rollout sampler of PPO under the setup of Table~\ref{tab:ptgs-combined} (Appendix~\ref{appendix:ptgs_setup}). The first table uses a fixed temperature $T \in \{0.5, 0.7, 1.0, 1.3, 1.5\}$, where $T{=}1.5$ is the highest temperature that PTGS can assign. The second keeps $T{=}1.0$ and truncates the rollout distribution with top-$p$, top-$k$, or min-$p$ decoding. The results give several implications. First, a fixed rollout temperature moves the policy along the accuracy-coverage trade-off without improving both. On Sokoban, a higher temperature tends to improve pass@$1$ and consistency, but it also increases the tax. On FrozenLake, every temperature above the default lowers pass@$128$ and increases the tax, and the only fixed temperature with a clearly smaller tax ($T{=}0.5$) has the lowest pass@$1$. Second, truncation rules barely change the baseline. Most variants stay within about two points of the ancestral baseline in pass@$128$, and none of them gives a clear gain in pass@$1$ or pass@$128$. Third, PTGS is the only sampler that improves accuracy and coverage at the same time. It gives the best pass@$128$ in both environments and the best or near-best pass@$1$, with a smaller tax than every fixed temperature $T \ge 1.0$. Its consistency on FrozenLake is lower than that of the default, although the intervals overlap. \textbf{Prompt-adaptive tempering is therefore not equivalent to a global temperature or a fixed truncation of the rollout distribution}. This agrees with Theorem~\ref{thm:ptgs-signal}, which suggests that what matters is heating the prompts that the policy keeps failing, rather than heating or truncating all prompts in the same way.

\FloatBarrier
\subsection{Inference-Time PTGS Application}
\label{appendix:ptgs_inference}

\noindent\textbf{Estimate-then-sample decoding.} The tempering rule of PTGS needs only a per-prompt difficulty estimate, so it can also be applied to a frozen policy at test time. For each test prompt $x$, we split a budget of $B$ rollouts into $m$ pilot rollouts and $n = B - m$ solution rollouts. We draw $m$ pilot rollouts at a reference temperature $T_{\mathrm{ref}}$ and count their successes $s_x$, form the posterior $\mathrm{Beta}\bigl(2\tilde{p} + s_x,\ 2(1-\tilde{p}) + m - s_x\bigr)$ with the same prior centered on the target success rate as in training, draw $\hat{p}_x$ from it by Thompson sampling, and generate the $n$ solution rollouts at $T_x = T_{\mathrm{ref}}\,\tau^{h(\hat{p}_x)} \in [T_{\mathrm{ref}}/\tau,\ \tau T_{\mathrm{ref}}]$ with $h$ from Eq.~\eqref{eq:ptgs}. Unlike training-time PTGS, the posterior is built from scratch for each prompt and discarded afterwards, without a forgetting factor or a $\tilde{p}$ schedule, and no log-probability correction is needed because the policy is not updated. Only the $n$ solution rollouts are scored, and pass@$k$ for $k \le n$ is computed from their success count with the unbiased estimator of \S\ref{section:prelim}.

\noindent\textbf{Setup.} We apply inference-time PTGS to the frozen \texttt{Qwen2.5-7B-Instruct}, i.e., the base model of Table~\ref{tab:ptgs-combined}, on the same 64 offline evaluation tasks of Sokoban and FrozenLake (\S\ref{appendix:ptgs_setup}). We use $B = 256$ with $m = n = 128$, $T_{\mathrm{ref}} = 0.5$ (the RAGEN evaluation default), and $\tilde{p} = 0.5$, and vary $\tau \in \{1.2, 1.3, \dots, 2.0\}$. The pilot rollouts are scored by the same binary environment success as the evaluation, so this setting measures what per-prompt adaptivity can offer with a reliable difficulty signal. As a baseline, we evaluate a global temperature $T \in \{0.1, 0.3, 0.5, 0.7, 1.0, 1.3, 1.5\}$ with ancestral sampling and 128 scored rollouts per task.

\begin{figure}[thb]
     \centering
     \includegraphics[width=0.9\textwidth]{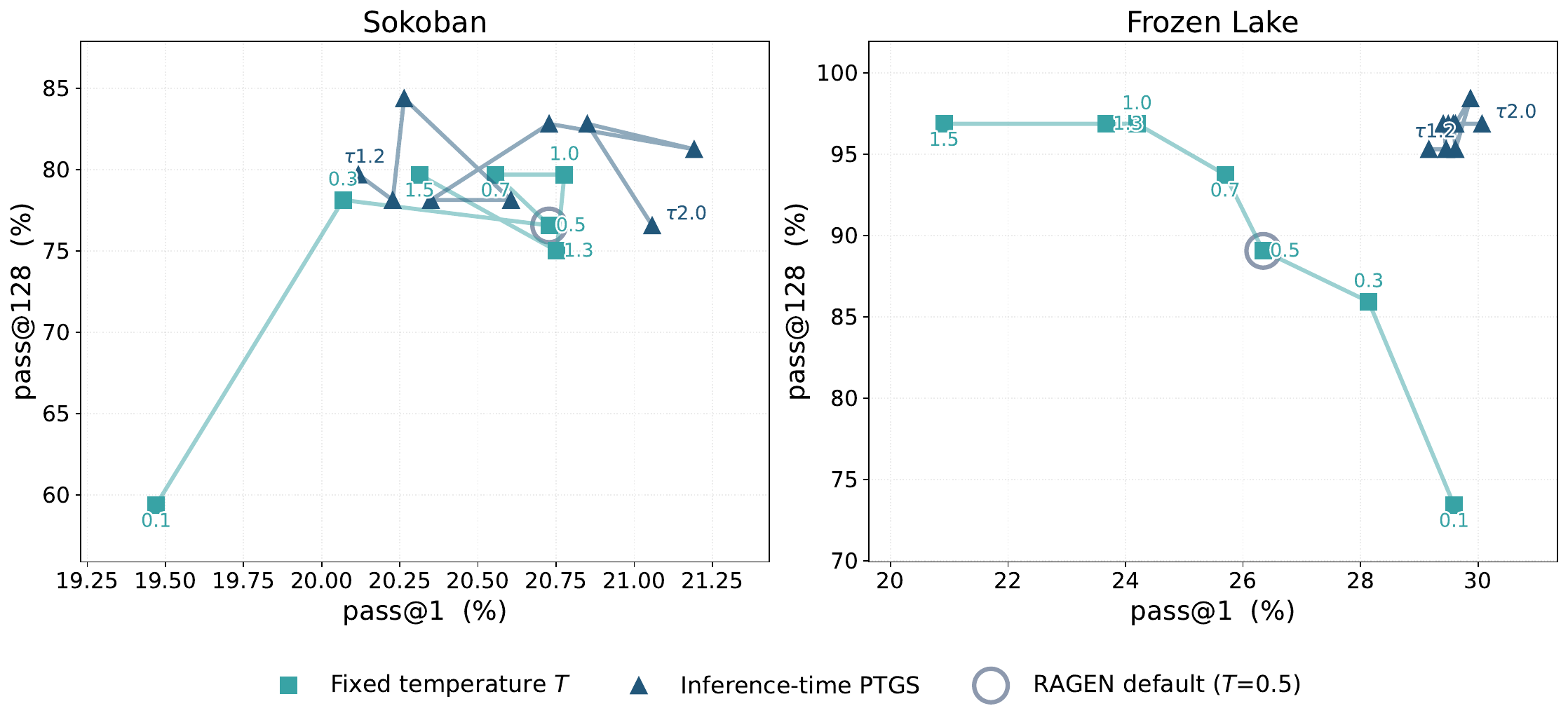}
     \vspace{-0.5em}
     \caption{\textbf{Inference-time PTGS versus global temperature scaling.} pass@$1$ ($x$-axis) and pass@$128$ ($y$-axis) of the frozen \texttt{Qwen2.5-7B-Instruct} on 64 tasks of each environment, where the upper right is better. Squares show global temperatures $T \in [0.1, 1.5]$, triangles show inference-time PTGS with $\tau \in [1.2, 2.0]$ and $T_{\mathrm{ref}} = 0.5$, and the ring marks the default inference temperature $T = 0.5$. PTGS lies beyond the global-temperature curve in both environments.}
     \vspace{-0.5em}
     \label{fig:ptgs-inference}
\end{figure}

\noindent\textbf{Can per-prompt tempering improve the trade-off without training?} Appendix~\ref{appendix:global_temperature} showed that a global temperature cannot improve accuracy and coverage together. Figure~\ref{fig:ptgs-inference} tests whether inference-time PTGS does better on a frozen policy. On FrozenLake, the global temperature gives a clear monotone trade-off, where a lower temperature improves pass@$1$ but reduces pass@$128$. All PTGS configurations lie beyond this curve and get close to the accuracy of the coldest temperature and the coverage of the hottest one at the same time. On Sokoban, the global temperature shows no clear trade-off above $T{=}0.1$, and only two PTGS configurations exceed every fixed temperature on both axes, by one or two of the 64 tasks in pass@$128$.

\section{Deeper Analysis of PTGS}
\suppressfloats[t]
\subsection{Accuracy of the Difficulty Estimate and Realized Temperatures}
\label{appendix:ptgs_mech_1}

\begin{figure}[t]
\centering
\begin{minipage}[t]{0.485\textwidth}
\centering
\includegraphics[height=2in]{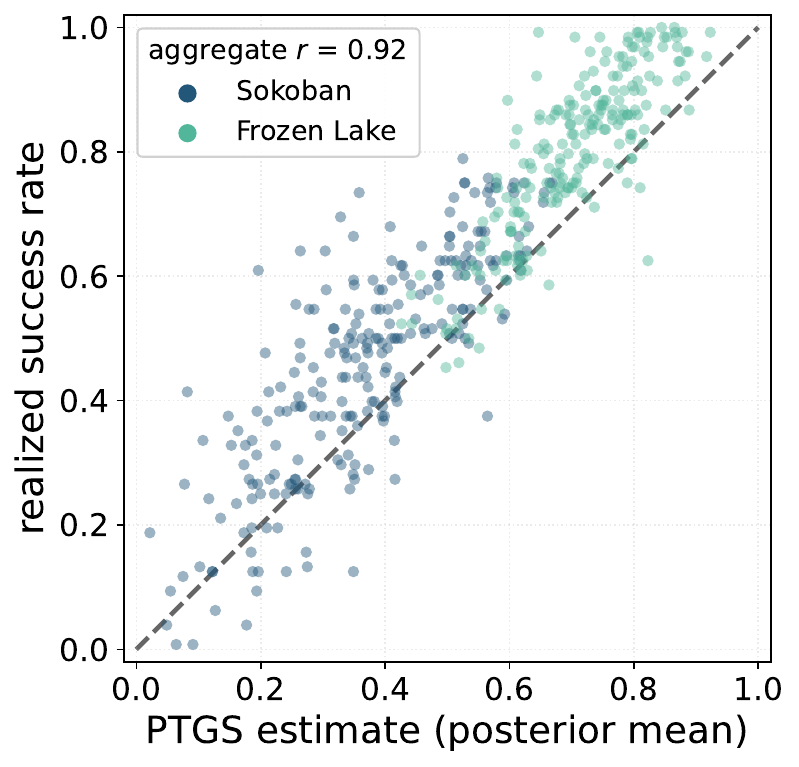}
\vspace{-0.5em}
\caption{\textbf{The difficulty estimate of PTGS tracks the realized success at the end of training.} Each point is one of the last 50 training steps of one of five PPO w/ PTGS runs per environment. The $x$-axis is the mean posterior estimate $\bar{p}_x$ over the 8 prompts of the step, recorded before the rollouts are drawn, and the $y$-axis is the realized success rate of its 128 rollouts.}
\label{fig:ptgs-mech-est-scatter}
\end{minipage}\hfill
\begin{minipage}[t]{0.485\textwidth}
\centering
\includegraphics[height=2in]{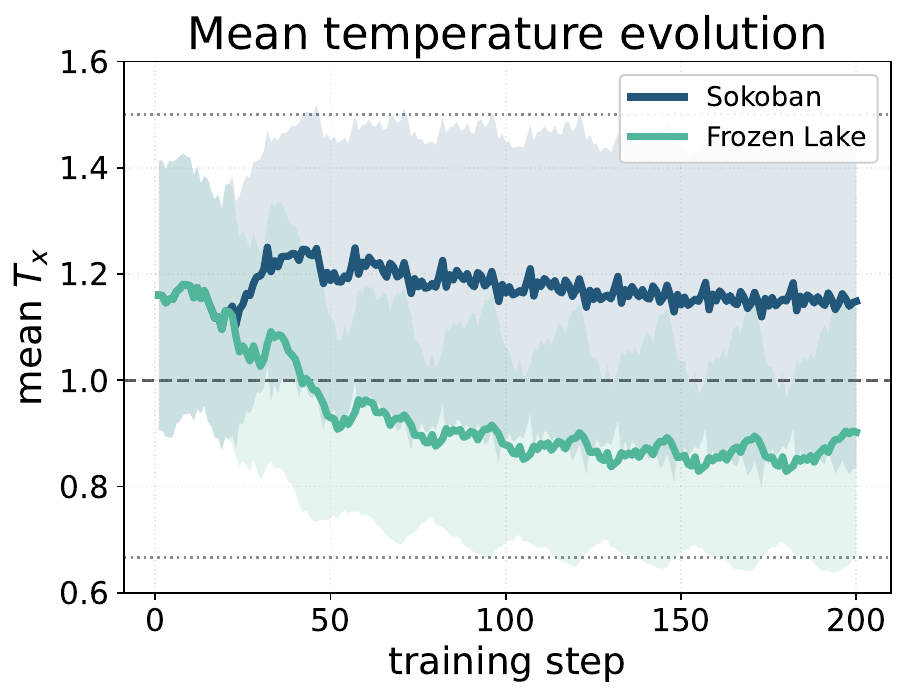}
\vspace{-0.5em}
\caption{\textbf{Mean rollout temperature of PTGS over training on Sokoban and FrozenLake.} Mean temperature $T_x$ over the 8 prompts of each step (9-step moving average over five runs). Bands show $\pm1$ standard deviation of $T_x$ across the prompts of a step. On average, PTGS heats on the harder Sokoban and cools on FrozenLake once the policy solves most prompts, while using a range of temperatures across prompts in both environments.}
\label{fig:ptgs-mech-temp}
\end{minipage}
\end{figure}

\begin{figure}[t]
\centering
\begin{minipage}[c]{0.47\textwidth}
\centering
\includegraphics[width=\linewidth]{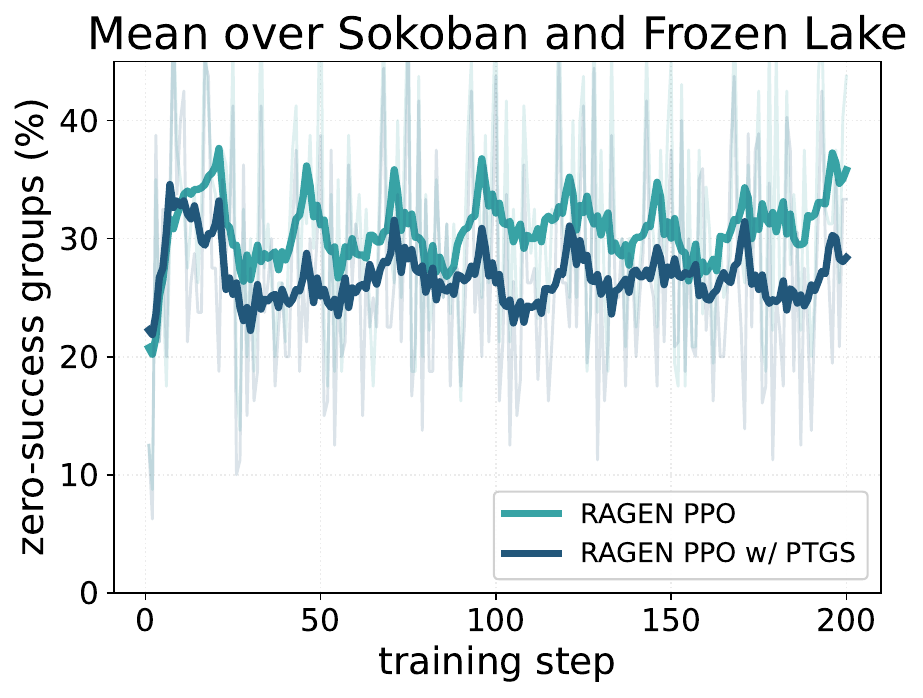}
\end{minipage}\hfill
\begin{minipage}[c]{0.5\textwidth}
\caption{\textbf{PTGS reduces the proportion of zero-success rollout groups.} Percentage of the rollout groups at each training step in which none of the 16 rollouts succeeds, i.e., groups that carry no success signal for the update, for PPO and PPO with PTGS. The proportion is measured before the reward-variance filter and averaged over Sokoban and FrozenLake. Thin lines show the mean over the ten runs of each method (five per environment) with a 9-step moving average as the thick line.}
\label{fig:ptgs-mech-zero}
\end{minipage}
\end{figure}

\noindent\textbf{Is the difficulty estimate of PTGS accurate?}
PTGS chooses the temperature of each prompt from a running Beta posterior over its success rate, so its benefit depends on whether this estimate predicts the actual success of the policy. To check this, we record the posterior mean $\bar{p}_x$ of each prompt before its rollout group is drawn and compare it with the realized success rate of the group. From early in training, the estimate separates the 8 prompts of each step well by their realized success, with an average Pearson correlation of 0.81 on Sokoban and 0.71 on FrozenLake. On FrozenLake, the correlation becomes stronger as the posterior accumulates evidence (Figure~\ref{fig:ptgs-mech-est-scatter}). Note that the estimate is conservative (the realized success is about 0.1 higher on average) because the discounted counts still reflect the weaker policy of earlier steps and the prior centered on $\tilde{p} < 1/2$. Since a conservative estimate errs toward heating, PTGS tends to explore more.

\newpage
\noindent\textbf{How hot does PTGS actually sample?}
Figure~\ref{fig:ptgs-mech-temp} shows that PTGS does not simply sample at a higher temperature than PPO. On Sokoban, where the policy remains weak, the mean temperature stays above $T{=}1$ throughout training. On FrozenLake, where the policy solves most prompts after the first 50 steps, PTGS cools on average, from about 1.1 early in training to below 0.9 at the end. In both environments, the temperatures of prompts within the same step vary widely, with a standard deviation of about 0.2 to 0.3. In other words, PTGS heats the prompts that the policy fails and cools the ones it solves at the same time, instead of shifting a single global temperature.

\noindent\textbf{Does PTGS give RL more to learn from?}
A rollout group in which none of the 16 rollouts succeeds carries no success signal for the update, and Theorem~\ref{thm:ptgs-signal}~(i) states that heating a hard prompt makes such groups rarer. Averaged over the two environments, PTGS keeps the proportion of these zero-success groups below that of PPO for most of training (Figure~\ref{fig:ptgs-mech-zero}). That is, PTGS indeed turns more of the rollout budget into groups that carry a learning signal.

\subsection{Does PTGS Induce a More Exploratory Policy?}
\label{appendix:ptgs_mech_2}

\noindent\textbf{Is the higher entropy just a higher temperature?}
The entropy curves in Appendix~\ref{appendix:deferred_results} compare PTGS with PPO at the default temperature $T{=}1$. However, that entropy is computed at the rollout temperature of each run, so part of the gap could come from PTGS sampling at higher temperatures. Therefore, we also compare PTGS with PPO runs that sample every rollout at $T{=}1.3$ or $T{=}1.5$ (Figure~\ref{fig:ptgs-mech-explore}). On Sokoban, PTGS keeps a much higher entropy than both heated PPO variants over the last 50 steps. On FrozenLake, PPO at $T{=}1.5$ reaches an entropy similar to that of PTGS, but only by sampling every rollout at a temperature about 60\% above the mean temperature of PTGS.

\noindent\textbf{Does the extra exploration improve coverage?}
The extra entropy also has different effects under global heating and under PTGS (Figure~\ref{fig:ptgs-mech-explore}, middle and right). On Sokoban, heating every rollout to $T{=}1.5$ improves both pass@$1$ and pass@$128$ over the default, but both remain below PTGS. On FrozenLake, heating even lowers pass@$128$ below the default. PTGS instead achieves the highest pass@$1$ and pass@$128$ at step 200 in both environments. In summary, \textbf{the higher entropy of PTGS is not simply due to a higher sampling temperature, and unlike the entropy from a global temperature, it comes with higher coverage}. 

\begin{figure*}[t]
\centering
\includegraphics[width=0.9\textwidth]{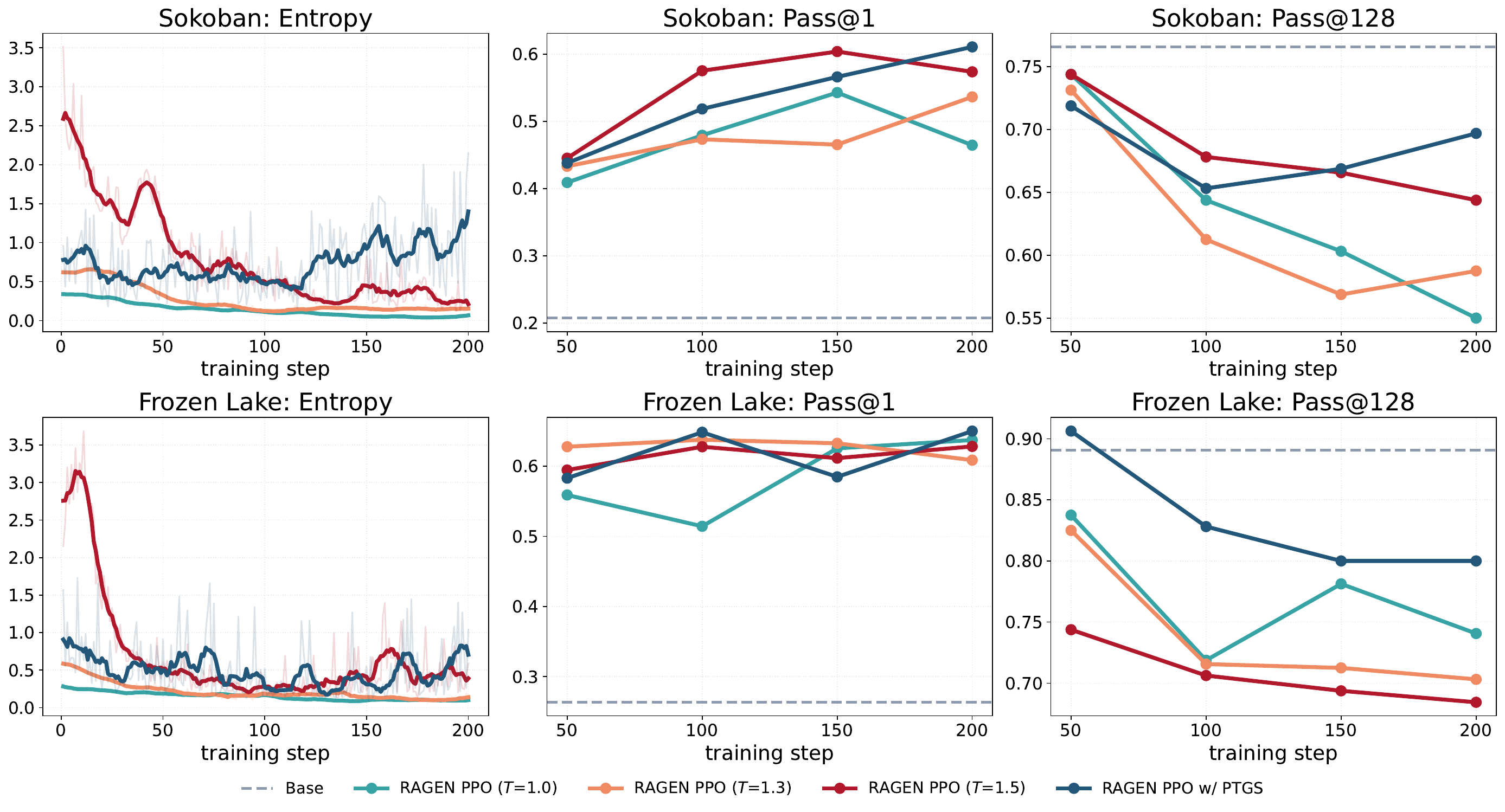}
\vspace{-0.5em}
\caption{\textbf{PTGS versus PPO with a globally higher temperature} on Sokoban (top) and FrozenLake (bottom). We compare PPO at the default rollout temperature ($T{=}1.0$), PPO with every rollout sampled at $T{=}1.3$ or $T{=}1.5$ (the highest temperature PTGS can assign), and PPO with PTGS ($\tau{=}1.5$). The left panels show the average entropy of the output token distribution during training, computed at the rollout temperature of each run, with thin lines for the mean over five runs and thick lines for a 9-step moving average. The middle and right panels show pass@$1$ and pass@$128$ of the intermediate checkpoints, averaged over five runs, and the dashed line marks the base model.}
\label{fig:ptgs-mech-explore}
\end{figure*}

\subsection{Case Study: Tasks That PPO Loses and PTGS Recovers}
\label{appendix:ptgs_mech_3}

We say that an RL post-training run \emph{loses} a task if the base model solves it at least once in 128 rollouts but the run never does. Among the tasks lost by a PPO run, a PPO w/ PTGS run solves about half, whereas the reverse rate is much lower. Figures~\ref{fig:ptgs-mech-example-sokoban} and~\ref{fig:ptgs-mech-example-frozenlake} show one such task per environment. In both cases, the base model solves the task only occasionally, and PPO never solves it in four of its five runs. PTGS, in contrast, solves it in most runs and masters it in some. This is consistent with Appendix~\ref{appendix:ptgs_signal}, i.e., by heating a prompt that the policy keeps failing, PTGS keeps it contributing to learning until the policy solves it reliably.

\begin{figure*}[t]
\centering
\begin{taskbanner}
\small\textbf{Sokoban task \texttt{sokoban\_25}:}\; push the box onto the target on a $6\times6$ grid (walls in dark gray, box in brown, target circled).\\
\textbf{Base model's winning opening:} \emph{``The box is one left of the target. Pushing it directly would move it off the target. Moving right first would place the box next to the target, then push.''}\; \texttt{Right \textbar\textbar Push Right}
\end{taskbanner}
\vspace{1mm}
\includegraphics[width=0.74\textwidth]{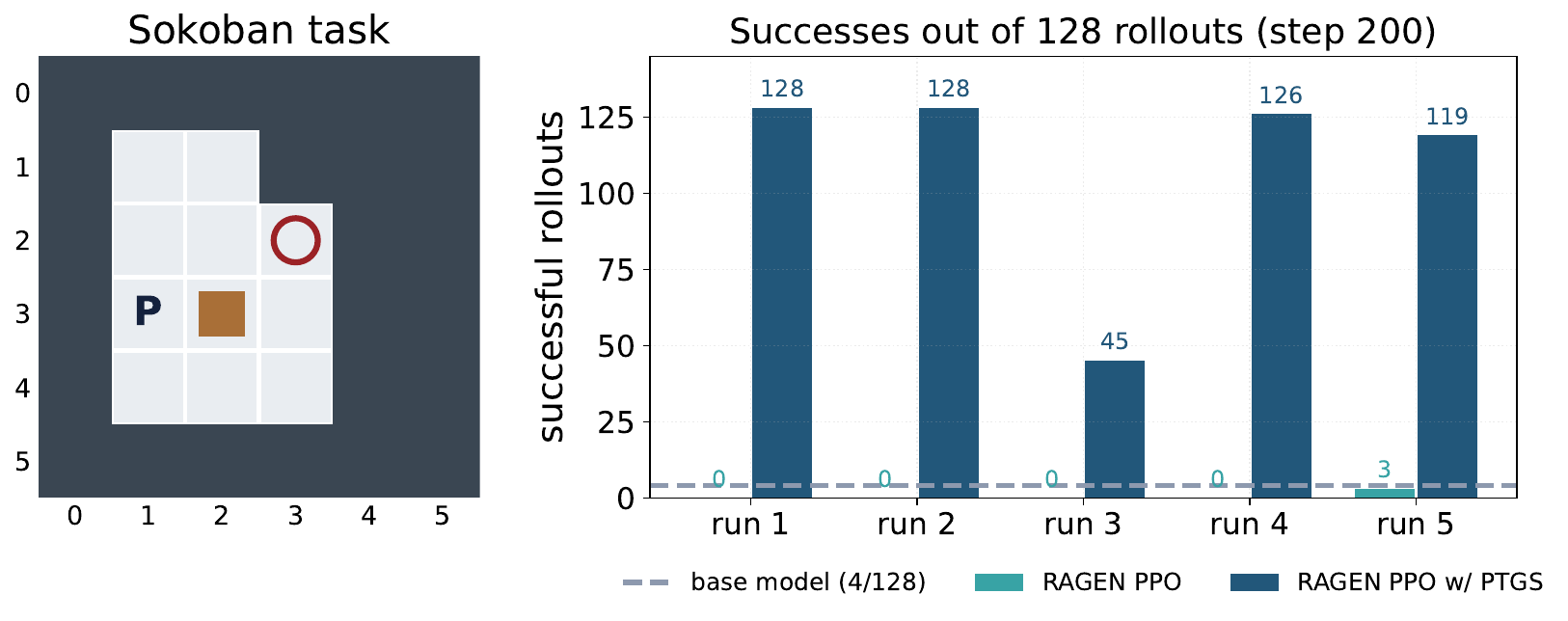}
\vspace{-0.5em}
\caption{\textbf{A Sokoban task that PPO loses and PTGS recovers.} The above example case describes the task and the opening of a successful rollout of the base model, and the left panel shows the initial state. The right panel shows the number of successful rollouts out of 128 at step 200 in each of the five training runs of PPO and PPO with PTGS, and the dashed line marks the base model, which solves the task in 4 of 128 rollouts. PPO never solves the task in four of its five runs, but PPO w/ PTGS solves it in all five runs and masters it in two.}
\label{fig:ptgs-mech-example-sokoban}

\vspace{1em}
\begin{taskbanner}
\small\textbf{FrozenLake task \texttt{frozen\_lake\_43}:}\; reach the goal \textsf{G} on slippery $4\times4$ ice without falling into the hole (dark circle).\\
\textbf{Base model's winning opening:} \emph{``The player is at (2,2) and needs to move left twice to reach the goal at (3,0). However, be cautious of the hole at (3,1).''}\; \texttt{Left \textbar\textbar Left}
\end{taskbanner}
\vspace{1mm}
\includegraphics[width=0.74\textwidth]{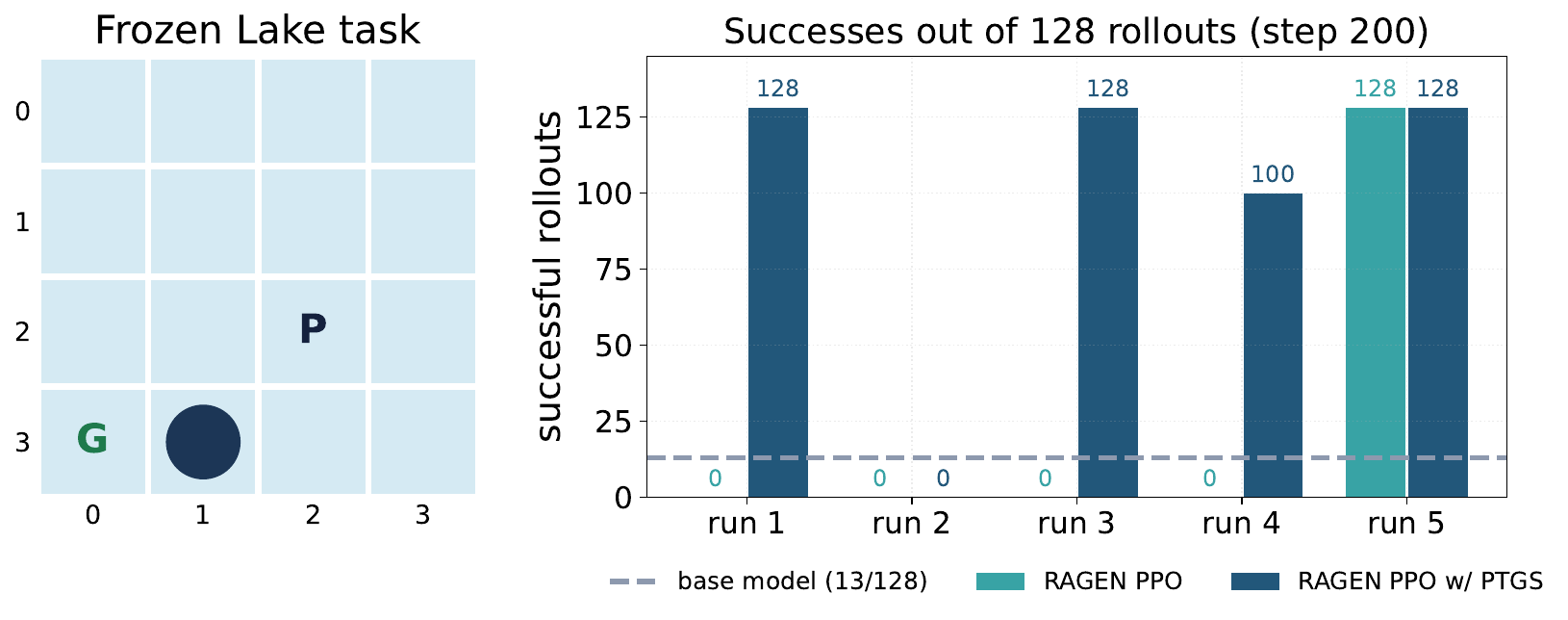}
\vspace{-0.5em}
\caption{\textbf{A FrozenLake task that PPO loses and PTGS recovers.} The above example case describes the task and the opening of a successful rollout of the base model, and the left panel shows the initial state. The right panel shows the number of successful rollouts out of 128 at step 200 in each of the five training runs of PPO and PPO with PTGS, and the dashed line marks the base model, which solves the task in 13 of 128 rollouts. PPO never solves the task in four of its five runs, but PPO w/ PTGS solves it in four runs and masters it in three. }
\label{fig:ptgs-mech-example-frozenlake}
\end{figure*}

\end{document}